\title{Splitting Answer Set Programs with respect to Intensionality Statements (Extended Version)}
\author{Jorge Fandinno\equalcontrib\ and Yuliya Lierler\equalcontrib}
\newtheorem{theorem}{Theorem}
\newtheorem{proposition}{Proposition}
\newtheorem{lemma}{Lemma}
\newtheorem{corollary}{Corollary}
\providecommand{\qed}{~\hfill$\square$}%
\newenvironment{proof}{\trivlist\item\ignorespaces\noindent\itshape Proof.\normalfont}{\qed\endtrivlist}
\newenvironment{proofof}[1]{\trivlist\item\ignorespaces\noindent\bf Proof of #1.\normalfont}{\qed\endtrivlist}
\newcommand{\boldd}{\mathbf{d}}
\newcommand{\bolde}{\mathbf{e}}
\newcommand{\boldt}{\mathbf{t}}
\newcommand{\boldX}{\mathbf{X}}
\newcommand{\boldY}{\mathbf{Y}}
\newcommand{\boldZ}{\mathbf{Z}}
\newcommand{\boldx}{\mathbf{x}}
\newcommand{\boldr}{\mathbf{r}}
\newcommand{\bolds}{\mathbf{s}}
\newcommand{\tuple}[1]{\langle#1\rangle}
\newcommand{\pdef}{{\lambda}}
\newcommand{\pdefb}{{\beta}}
\newcommand{\pdefc}{{\gamma}}
\newcommand{\pdefs}{{\Lambda}}
\newcommand{\eqdef}{%
  \mathrel{\vbox{\offinterlineskip\ialign{%
    \hfil##\hfil\cr%
    $\scriptscriptstyle\mathrm{def}$\cr%
    \noalign{\kern1pt}%
    $=$\cr%
    \noalign{\kern-0.1pt}%
}}}}
\newcommand{\SPos}[1]{\ensuremath{\mathit{SP}(#1)}}
\newcommand{\Pnn}[2]{\ensuremath{\mathit{Pnn}_{#1}(#2)}}
\newcommand{\Nnn}[2]{\ensuremath{\mathit{Nnn}_{#1}(#2)}}
\DeclareRobustCommand{\modelsht}{%
  \mathrel{\mathchoice
   {{\models}_{\!\!\!\!\scriptscriptstyle\rm ht}}
   {{\models}_{\!\!\!\!\scriptscriptstyle\rm ht}}
   {{\models}_{\!\!\!\!\scriptscriptstyle\rm ht}}
   {{\models}_{\!\!\!\!\scriptscriptstyle\rm ht}}
}}
\newcommand{\Pos}[2]{\mathit{Sp}_{#1}({#2})}
\newcommand{\HH}{\mathcal{H}}
\newcommand{\XX}{\mathcal{X}}
\newcommand{\AAA}{\mathcal{A}}
\newcommand{\BB}{\mathcal{B}}
\newcommand{\CC}{\mathcal{C}}
\newcommand{\SSS}{\mathcal{S}}
\newcommand{\At}[1]{\mathit{At}(#1)}
\newcommand{\EM}[1]{\mathit{EM}(#1)}
\newcommand{\G}{\mathfrak{G}}
\newcommand{\Fun}[2]{\mathit{Fun}_{#1}({#2})}
\def\lrar{\leftrightarrow}
\def\on{\textit{on}}
\def\fo{first\nobreakdash-order\xspace}
\begin{document}

\maketitle

\begin{abstract}
    Splitting a logic program allows us to reduce the
    task of computing its stable models to similar tasks
    for its subprograms. 
    This can be used to 
    increase solving performance and to prove the correctness of programs.
    We generalize the conditions under which this technique is applicable, by considering not only dependencies between predicates but also their arguments and context.
    This allows splitting  programs commonly used in practice to which previous results were not~applicable.
\end{abstract}

\section{Introduction}

Answer set programming (ASP; \citeauthor{lifschitz08b}~\citeyear{lifschitz08b}) is a declarative logic programming paradigm well-suited for solving knowledge-intensive search problems.
Its success relies on the combination of a rich knowledge representation language with efficient solvers for finding solutions to problems expressed in this language~\cite{lifschitz19a}.
Solutions for logic programs in ASP are called ``stable models''.
\citet{liftur94a} introduced a fundamental
{\em  splitting} method (or, simply, splitting) in the theory of ASP.
Often, splitting is
used to reduce the task of computing the stable
models of a logic program to the task of computing these of its subprograms.
Also, the splitting is used to understand the meaning of a program in terms of its smaller components and it has become a key instrument in constructing proofs of correctness for logic programs~\cite{cafali20a}.
The original condition required to split a logic program was generalized by~\citet{oikjan08a} in the context of propositional programs.
Also, when stable models were defined for arbitrary first\nobreakdash-order formulas~\cite{peaval05a,feleli07a} and infinitary propositional formulas~\cite{truszczynski12a},
the splitting method was extended to these new settings~\cite{felelipa09a,harlif16a}.
However, when dealing with first\nobreakdash-order formulas (logic programs with variables are often identified with first\nobreakdash-order formulas of a restricted syntactic form), the splitting condition is not sufficiently general for some applications.

For instance, consider the following simplified fragment of the blocks world encoding~\cite{lifschitz02a}:
\small
\begin{align}
\mathit{on}(B,L,T+1) &\leftarrow \mathit{on}(B,L,T),\ not\  \mathit{non}(B,L,T+1)
    \label{eq:blocks.inertia}
\\
\mathit{on}(B,L,T+1) &\leftarrow \mathit{move}(B,L,T)
    \label{eq:blocks.action}
\\
\mathit{non}(B,L',T) &\leftarrow \mathit{on}(B,L,T),\  location(L'),\ L \neq L'
    \label{eq:blocks.non}
\end{align}
\normalsize
We are interested in splitting this fragment so that one subprogram contains instances of these rules for all time points before a (positive) threshold~$t$ and another  contains rules corresponding to the time points
at or after~$t$. 
This form of splitting will help, for example,  understanding iterative solving of planning problems~\cite{gekakasc17a} and
arguing correctness of action descriptions (that frequently result in rules of the form presented in our blocks world example).
However, the proposed splitting is impossible under the currently available Splitting Theorem for first\nobreakdash-order formulas~\cite{felelipa09a}. The splitting relies on identifying ``non-circularly'' dependent rules of a program. 
\citet{felelipa09a} define dependencies 
in terms of predicate symbols.
In our example, $on/3$ depends on itself in rule~\eqref{eq:blocks.inertia}.

Here, we formulate a more general condition for the applicability of the splitting method that considers a refined version of the dependency graph that takes into account not only the predicate dependencies but also their arguments and context.
Among others, this makes it applicable to the discussed example.
The notion of \emph{intensionality statement} introduced in this work is of key importance.
This is a refinement of the idea of intensional and extensional predicates~\cite{feleli11a} originally stemming from databases.
It provides us with means for specifying arguments of the predicates for which these are seen as intensional or extensional. Thus, the same predicate symbol may be both intensional and extensional depending on its context.
This refinement is the basis for a new Splitting result.

The rest of the paper is organized as follows.
After reviewing some preliminaries, we introduce the notion of intensionality statement.
Then, we present the key ideas for the new Splitting Theorem in the context of logic programs.
Finally, we generalize this result to first\nobreakdash-order theories.

\section{Preliminaries}

We start by reviewing a many-sorted first\nobreakdash-order language.
The use of a many\nobreakdash-sorted language is motivated by its ability to formalize commonly used features of logic programs such as arithmetic operations~\cite{falilusc20a,lifschitz21a} and aggregate expressions~\cite{fahali22a}.
We follow the presentation by~\citet[Appendix~A]{fanlif22a}.
After the presentation of many-sorted language, the review the logic of here-and-there follows. Syntax and semantics of disjunctive logic programs concludes this section.

\subsubsection{Many-sorted first-order theories.} \label{sec:smodels}
A (many-sorted) signature consists of symbols of three
kinds---\emph{sorts}, \emph{function constants}, and
\emph{predicate constants}.  A reflexive and transitive \emph{subsort}
relation is defined on the set of sorts.
A tuple $s_1,\dots,s_n$ ($n\geq 0$) of \emph{argument sorts} is assigned
to every function constant and to every predicate constant; in addition, a
\emph{value sort} is assigned to every function constant.
Function constants with $n=0$ are called \emph{object constants}.
We assume that for every sort, an infinite sequence of \emph{object
  variables} of that sort is chosen.  \emph{Terms} over a (many-sorted)
signature~$\sigma$ are defined recursively as usual.
\emph{Atomic formulas} over~$\sigma$ are either (i)
  expressions of the form $p(t_1,\dots,t_n)$, where~$p$ is a predicate
  constant and $t_1,\dots,t_n$ are terms such that their sorts are
  subsorts of the argument sorts~$s_1,\dots,s_n$ of~$p$, or (ii)
  expressions of the form $t_1=t_2$, where $t_1$ and~$t_2$ are terms such that
  their sorts have a common supersort.
\emph{(First-order) formulas} over signature~$\sigma$ are formed from atomic formulas
and the 0-place connective~$\bot$ (falsity) using the binary
connectives $\land$, $\lor$, $\to$ and the quantifiers $\forall$, $\exists$.
The other connectives are treated as abbreviations: $\neg F$ stands for
$F\to\bot$ and~$F\lrar G$ stands for $(F\to G)\land (G\to F)$.

An \emph{interpretation}~$I$ of a signature~$\sigma$ assigns
\begin{itemize}
  \item
  a non-empty \emph{domain} $|I|^s$ to every sort~$s$ of~$I$, so that
  $|I|^{s_1}\subseteq |I|^{s_2}$ whenever~$s_1$ is a subsort of~$s_2$,
  \item
a function~$f^I$ from $|I|^{s_1}\times\cdots\times|I|^{s_n}$ to $|I|^s$ to
every function constant~$f$ with argument sorts $s_1,\dots,s_n$ and
value sort~$s$, and
\item a Boolean-valued function~$p^I$ on~$|I|^{s_1}\times \cdots \times|I|^{s_n}$ to every predicate constant~$p$
with argument sorts $s_1,\dots,s_n$.
\end{itemize}
If~$I$ is an interpretation over a signature~$\sigma$ then
by~$\sigma^I$ we denote the signature
obtained from~$\sigma$ by adding, for every element~$d$ of domain~$|I|^s$,
its \emph{name}~$d^*$ as an object constant of sort~$s$. 
The interpretation~$I$ is extended to $\sigma^I$ by defining $(d^*)^I=d$.
The value $t^I$ assigned by an interpretation~$I$ of~$\sigma$ to a ground
term~$t$ over~$\sigma^I$ and the
 satisfaction relation (denoted, $\models$) between an interpretation of~$\sigma$ and a
sentence over~$\sigma^I$ are defined recursively, in the usual
way~\cite[Section~1.2.2]{lif08b}. An interpretation is called a {\em model} of a \emph{theory} -- a (possibly infinite)  set of sentences -- when it satisfies every sentence in this theory. 
Some of the examples considered contain integers, function constants such as $+$ and comparison predicate constants such as $\leq$ or $>$ (used utilizing infix notation common in arithmetic). We call these function and predicate constants {\em arithmetic}.  In these examples, we assume that (i) the underlying signature contains a sort integer and (ii) interpretations of special kind are considered so that they interpret arithmetic function  and predicate constants as customary in arithmetic (see, for example, treatment of the integer sort by~\citet{falilusc20a} and~\citet{lifschitz21a}).
If $\boldd$ is a tuple $d_1,\dots,d_n$ of domains elements of~$I$
then~$\boldd^*$ stands for the tuple $d_1^*,\dots,d_n^*$ of their names.
If $\boldt$ is a tuple $t_1,\dots,t_n$ of ground terms then~$\boldt^I$
is the tuple $t_1^I,\dots,t_n^I$ of values assigned to them
by~$I$.

\subsubsection{Here-and-there.}\label{ssec:ht}

The first\nobreakdash-order logic of here\nobreakdash-and\nobreakdash-there, introduced by \mbox{\citet{pea04,peaval05a}}, forms a monotonic base for stable model semantic~\cite{gellif88b,gellif91a}.
The many\nobreakdash-sorted case was recently studied by~\citet{fanlif22a}.
%
%
%
By~$\At{I}$ we denote the set of ground atoms of the form~$p(\boldd^*)$ such that~\hbox{$I \models p(\boldd^*)$}, where~$p$ is a predicate symbol and $\boldd$ is a tuple of elements of domains of~$I$.
An \emph{HT\nobreakdash-interpretation} of~$\sigma$ is a pair
$\langle \HH,I\rangle$, where $I$ is an interpretation
of~$\sigma$, and $\HH$ is a subset of $\At{I}$.
(In terms of Kripke models with two worlds,~$\HH$
describes the predicates in the here-world and~$I$ captures the there-world).
The satisfaction relation~$\modelsht$ between
HT\nobreakdash-interpretation $\langle \HH, I\rangle$ of~$\sigma$
and a sentence~$F$ over~$\sigma^I$ is defined recursively:
\begin{itemize}
\item
  $\langle \HH, I\rangle \modelsht p(\boldt)$,
  if $p((\boldt^I)^*)\in \HH$ 
\item
$\langle \HH, I\rangle \modelsht t_1=t_2$ if $t_1^I=t_2^I$;
\item
$\langle \HH, I\rangle \modelsht F\land G$ if
$\langle \HH, I\rangle \modelsht F$ and
$\langle \HH, I\rangle \modelsht G$;
\item
$\langle \HH, I\rangle \modelsht F\lor G$ if
$\langle \HH, I\rangle \modelsht F$ or
$\langle \HH, I\rangle \modelsht G$;
\item
  $\langle \HH, I\rangle \modelsht F\to G$ if
    (i) $\langle \HH, I\rangle \not\modelsht F$ or $\langle \HH, I\rangle \modelsht G$,
    and
(ii)
    $I \models F\to G$;
\item
  $\langle \HH, I\rangle\modelsht\forall X\,F(X)$
 if $\langle \HH, I\rangle\modelsht F(d^*)$
  for each \hbox{$d\in|I|^s$}, where~$s$ is the sort of~$X$;
\item
  $\langle \HH, I\rangle\modelsht\exists X\,F(X)$
 if $\langle \HH, I\rangle\modelsht F(d^*)$
  for some \hbox{$d\in|I|^s$}, where~$s$ is the sort of~$X$.
\end{itemize}
If $\langle \HH, I\rangle \modelsht F$ holds, we say that $\langle \HH, I\rangle$ \emph{satisfies}~$F$ and that $\langle \HH, I\rangle$ is an \emph{HT\nobreakdash-model} of~$F$.
If two formulas have the same HT\nobreakdash-models then we say that they are \emph{HT\nobreakdash-equivalent}. 

It is easy to see that
$(\At{I},I)$ is an HT\nobreakdash-model of a sentence $F$ whenever $I$ is a model of $F$.
About a model~$I$ of a theory~$\Gamma$, we say it is \emph{stable} if, for
every proper subset~$\HH$ of~$\At{I}$, HT\nobreakdash-interpretation
$\langle \HH,I \rangle$ does not satisfy~$\Gamma$.
In application to theories of a single sort, this definition is equivalent to the original definition by~\citet{pea04,peaval05a}.
In addition, if the theory is finite, then this definition of a stable model is also equivalent to the definition of such model in terms of the operator~SM \cite{feleli07a,feleli11a}, when all predicate are considered to be intensional.
We generalize the distinction between intensional and extensional predicates in the next section.

\subsubsection{Disjunctive logic programs.}

A \emph{disjunctive rule} is a formula of the form~$\mathit{Head} \leftarrow \mathit{Body}$ where~$\mathit{Head}$ is a list of atomic formulas and~$\mathit{Body}$ is a list of \emph{literals}, that is, atomic formulas possibly preceded by one or two occurrences of negation $not$.
A \emph{disjunctive program} is a (possibly infinite) set of disjunctive rules.
We identify each rule~${\mathit{Head} \leftarrow \mathit{Body}}$ with the universal closure of the formula~$B \to H$, where~$B$ is the conjunction of all literals in~$\mathit{Body}$ after replacing~$\mathit{not}$ by~$\neg$; and~$H$ is the disjunction of all atomic formulas in~$\mathit{Head}$.
We often write rules as formulas~$B \to H$ omitting the reference to the universal closure.
For instance, rule~\eqref{eq:blocks.inertia} is understood as the universal closure of formula
\small
\begin{gather}
  \begin{aligned}
    \mathit{on}(B,L,T) \wedge \neg\mathit{non}&(B,L,T) 
    \to  \mathit{on}(B,L,T+1).
  \end{aligned}
    \label{eq:blocks.inertia:fo}
\end{gather}
\normalsize
We may also write rule~\eqref{eq:blocks.inertia} in this form.
We assume that variable~$T$ in rules~(\ref{eq:blocks.inertia}-\ref{eq:blocks.non}) is of sort integer.
In examples, we assume that any variable that is used as argument of an arithmetic function is of this sort.
Under these assumptions, the definition of stable models of sets of sentences stated earlier also applies to (possible infinite) disjunctive logic programs with arithmetic.
Note that these assumptions are enough to showcase all examples in this paper.
For a more general characterization of how sorts are assigned to variables, we refer to~\citet{falilusc20a} and~\citet{lifschitz21a}.

\section{Stable Models with Intensionality Statements}

The distinction between \emph{extensional} and \emph{intensional} predicates~\cite{feleli11a} describes the inherent meaning of a group of rules in a precise way; and it is similar to the distinction between input and non\nobreakdash-input atoms by~\citet{oikjan08a}. It is sometimes convenient to treat words  \emph{extensional} and \emph{intensional} as synonymous to  words \emph{input} and \emph{defined}, respectively.
Intuitively, the meaning of input predicate symbols is not fixed within the considered group of rules; these rules may constrain the interpretations of such predicate symbols but they do not ``define'' them.
To the contrary, intensional predicate symbols can be viewed as defined by the group. 
%
%
%
For instance, consider rules~(\ref{eq:blocks.inertia}-\ref{eq:blocks.non}) that we denote as~$\Pi_{\mathit{block}}$.
Let us  elaborate on the  meaning of~$\Pi_{\mathit{block}}$.
Intuitively,
$\Pi_{\mathit{block}}$ captures the behavior of the property $\on/3$  that changes when relevant movement occurs and otherwise it obeys the commonsense law of inertia (i.e., that things stay as they are unless forced to change). 
  To obtain such a reading of~$\Pi_{\mathit{block}}$ 
 predicates~$\textit{move}/3$ and~$\textit{location}/3$ should be declared as extensional.
 %
%
On the other hand, predicates~$\textit{on}/3$ and~$\textit{non}/3$ should be declared as intensional as their behavior is defined by rules in~$\Pi_{\mathit{block}}$. 
It looks as such separation of predicate symbols allows us to identify $\Pi_{\mathit{block}}$ with its intuitive meaning and yet the devil is in the details.
The reading that we have stated fails to mention the special case of the initial situation.
At the initial state, i.e., when~$T=0$, we do not assume that rules in $\Pi_{\mathit{block}}$ define $\on/3$, rather that these values are specified elsewhere.
%
%
%
%
%
Thus, we would like to declare that~$\textit{on}/3$ is extensional in the initial situation (when~$T=0$) and intensional in all other situations (when~$T \neq 0$). 
%
However, the granularity desired for this example is not currently possible because the distinction between {extensional} and {intensional} by~\citet{feleli11a}
 is made at the predicate level, disregarding the context provided by their arguments.
%
We address this issue~here.

We generalize the idea of distinguish between extensional and intensional predicate symbols by allowing the possibility to declare circumstances under which a predicate symbol is considered to be intensional (if these circumstances are not the case the predicate symbol is considered extensional). 
This is achieved by associating a \fo  formula with each predicate constant.
For instance, in  $\Pi_{\mathit{block}}$ associating predicate symbol $\textit{on}/3$  with the formula~$T \neq 0$ will result in proper treatment of its rules respecting not only their core meaning but also the corner case of the initial situation. 

%

Formally, we identify each predicate symbol~$p/n$ with an atom of the form~$p(X_1,\dotsc,X_n)$, where $X_1,\dots,X_n$ are pairwise distinct variables of appropriate sorts.
An \emph{intensionality statement}~$\pdef$ over a signature $\sigma$ is a function mapping each predicate symbol~$p/n$ in $\sigma$ to a formula~$F(X_1,\dotsc,X_n)$ such that 
\begin{itemize}
    \item $X_1\!,\!\dots\!,\!X_n$ are the only free variables in $F(X_1\!,\!\dots\!,\!X_n)$,
    \item every predicate symbol~$q/m$ in~$F$ satisfies~$\pdef(q/m) \equiv \bot$.
\end{itemize}
We abbreviate~$\pdef(p/n)(X_1,\hspace{-2pt}\text{\small$\dotsc$}\hspace{-2pt},X_n)$ as~$\pdef^p(X_1,\hspace{-2pt}\text{\small$\dotsc$}\hspace{-2pt},X_n)$ when arity~$n$ is clear from the context.
We say that a predicate symbol is~ (purely) \emph{intensional}/\emph{extensional} when it
is associated with a valid/unsatisfiable formula, respectively.
%

%
%

%
%
%

Consider, for instance, predicate symbol~$\mathit{on}/3$ and let $\pdef^\mathit{on}(B,L,T)$ be formula~$T \neq 0$.
Intuitively, this intensionality statement states that all ground atoms formed by predicate constant~$p$ with the third argument different from~$0$ are intensional; if the third argument is~$0$, then they are extensional.
The complete intensionality statement, which we call~$\pdefb$, for our motivating example follows:
\begin{gather}
\begin{aligned}
  \pdefb^{\mathit{on}}(B,L,T) \ &\text{ is } \ T \neq 0
  \\
  \pdefb^{\mathit{non}}(B,L,T) \ &\text{ is } \ \top
  \\
  \pdefb^{\mathit{move}}(B,L,T) \ &\text{ is } \ \bot
  \\
  \pdefb^{\mathit{location}}(L) \ &\text{ is } \ \bot
  \\
  \pdefb^{\prec}(X_1,X_2) \ &\text{ is }\ \bot \quad \text{for any comparison~$\prec$}
\end{aligned}
  \label{eq:blocks.intensional.specification}
\end{gather}
It states that predicate $\mathit{move}$ and all comparison symbols are purely extensional, $\mathit{non}$ is purely intensional and~$\mathit{on}$ is intensional under all circumstances except the initial situation.

Let $\sigma$ be a signature and $\pdef$ be an intensionality statement over~$\sigma$. 
By~$\EM{\pdef}$, we denote the set consisting of a sentence of the form
\begin{gather*}
\forall\boldX \left( \neg \pdef^p(\boldX) \to p(\boldX) \vee \neg p(\boldX) \right)
\end{gather*}
for every predicate symbol~$p/n$ in~$\sigma$, where~$\boldX$ is a tuple of variables of the appropriate length and sort (in the sequel we adopt this convention and use $\boldX$ to denote  tuples of variables).
For a theory~$\Gamma$ over~$\sigma$, 
we say that an interpretation~$I$ is $\pdef$\nobreakdash-\emph{stable} if it is a stable model of~$\Gamma \cup \EM{\pdef}$.
Note that, if every predicate symbol is intensional in~$\pdef$, then the $\pdef$\nobreakdash-stable models coincide with the stable models.

For example,
let~$F$ be the rule
\begin{align}
p(X,1) &\to p(X,2)  \label{eq:pr1}
\end{align}
and 
let~$\pdef^p(X_1,X_2)$ be formula~$X_2 = 2$ (we assume both arguments of $p/2$ be of sort integer).
The intensionality statement formula $\EM{\pdef}$ is the universal closure of
\begin{align*}
\neg(X_2 = 2) &\to p(X_1,X_2) \vee \neg p(X_1,X_2).
\end{align*}
The following sets of ground atoms correspond to the four $\pdef$\nobreakdash-stable models of~$F$ with domain~$\{1,2\}$
\begin{gather*}
\emptyset \quad \quad
\{p(1,1),\,p(1,2)\} \quad \quad
\{p(2,1),\,p(2,2)\} \\
\{p(1,1),\,p(2,1),\,p(1,2),\,p(2,2)\},	  
\end{gather*}
where we  list the atoms that are true in these models (as customary in logic programming).
As a more elaborated example, we can see that an interpretation~$I$ that satisfies
\begin{align*}
  \mathit{location}^I &= \{ l_1,l_2 \}
  \hspace*{17pt}
  \mathit{move}^I = \{ (b,l_2,0 ) \}
  \\
  \mathit{on}^I &= \{ (b,l_1,0 ) \} \cup \{ (b,l_2,t' ) \mid t' > 0 \}
  \\
  \mathit{non}^I &= \{ (b,l_2,0 ) \} \cup \{ (b,l_1,t' ) \mid t' > 0 \}
\end{align*}
is a $\pdefb$\nobreakdash-stable model of~$\Pi_{\mathit{block}}$.

\section{Strong Equivalence with Intensionality Statements}

Informally, two programs are strongly equivalent if one can replace the other in any context without changing the stable models of the program~\cite{lipeva01a,lipeva07a}.
Let~$\Gamma_1$ and~$\Gamma_2$ be theories over the same signature~$\sigma$.
We say that sets~$\Gamma_1$ and~$\Gamma_2$ are \emph{strongly equivalent with respect to
an intensionality statement~$\pdef$} (or, simply $\pdef$\nobreakdash-\emph{strongly equivalent}) if~$\Gamma_1 \cup \Delta$ and~$\Gamma_2 \cup \Delta$ have the same~$\pdef$\nobreakdash-stable models for any theory~$\Delta$ over~$\sigma$.

By definition of~$\pdef$\nobreakdash-stable model, we immediately obtain the following sufficient condition for $\pdef$\nobreakdash-strong equivalence.
\begin{proposition}\label{prop:strong.equivalence}
If~$\Gamma_1\cup \EM{\pdef}$ and~$\Gamma_2\cup \EM{\pdef}$ have the same HT\nobreakdash-models, then~$\Gamma_1$ and~$\Gamma_2$ are $\pdef$\nobreakdash-strongly equivalent.
\end{proposition}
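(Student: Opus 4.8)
The plan is to unfold the definition of $\pdef$\nobreakdash-strong equivalence and reduce everything to how HT\nobreakdash-models behave under unions of theories. Fix an arbitrary theory~$\Delta$ over~$\sigma$; we must show that $\Gamma_1\cup\Delta$ and $\Gamma_2\cup\Delta$ have the same $\pdef$\nobreakdash-stable models. By definition, the $\pdef$\nobreakdash-stable models of $\Gamma_i\cup\Delta$ are the stable models of $(\Gamma_i\cup\Delta)\cup\EM{\pdef}$, and since union is commutative and associative this set equals $(\Gamma_i\cup\EM{\pdef})\cup\Delta$. Writing $\Theta_i = \Gamma_i\cup\EM{\pdef}$, it therefore suffices to prove that $\Theta_1\cup\Delta$ and $\Theta_2\cup\Delta$ have the same stable models, knowing by hypothesis that $\Theta_1$ and $\Theta_2$ have the same HT\nobreakdash-models.

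The first step is the elementary observation that, for any HT\nobreakdash-interpretation $\langle\HH,I\rangle$ and theories $\Theta,\Delta$, we have $\langle\HH,I\rangle\modelsht\Theta\cup\Delta$ iff $\langle\HH,I\rangle\modelsht\Theta$ and $\langle\HH,I\rangle\modelsht\Delta$ (immediate from the fact that satisfying a set of sentences means satisfying each of its members). Combined with the hypothesis that $\Theta_1$ and $\Theta_2$ have exactly the same HT\nobreakdash-models, this yields that $\Theta_1\cup\Delta$ and $\Theta_2\cup\Delta$ have exactly the same HT\nobreakdash-models. The second step is to translate ``same HT\nobreakdash-models'' into ``same stable models.'' Recall that $I$ is a stable model of a theory $\Theta'$ iff (a)~$I$ is a classical model of $\Theta'$ and (b)~no proper subset $\HH\subsetneq\At{I}$ satisfies $\Theta'$ under $\modelsht$. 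Condition~(b) is phrased purely in terms of $\modelsht$, and condition~(a) is equivalent to $\langle\At{I},I\rangle\modelsht\Theta'$, so both conditions depend only on the set of HT\nobreakdash-models of $\Theta'$. Hence $\Theta_1\cup\Delta$ and $\Theta_2\cup\Delta$ have the same stable models. Since $\Delta$ was arbitrary, $\Gamma_1$ and $\Gamma_2$ are $\pdef$\nobreakdash-strongly equivalent.

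There is essentially no obstacle here: the argument is a direct chase through the definitions. The only point requiring a word of care is the equivalence $I\models\Theta' \iff \langle\At{I},I\rangle\modelsht\Theta'$, since the preliminaries state explicitly only the direction from left to right; the converse follows by a routine induction on the structure of formulas, where the single non\nobreakdash-trivial case, implication, is handled by clause~(ii) of the definition of $\modelsht$ for~$\to$. I would either include this short induction or simply note that it is standard and already implicit in the stated equivalence of our stable\nobreakdash-model definition with the classical ones.
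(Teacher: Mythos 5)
Your proof is correct and is exactly the argument the paper has in mind: the paper offers no explicit proof, stating only that the proposition follows ``by definition of $\pdef$-stable model,'' and your definition-chase (reassociating the union, observing that HT-satisfaction of a union is the conjunction of HT-satisfaction of the parts, and noting that stability is determined by the set of HT-models via $I\models\Theta'$ iff $\langle\At{I},I\rangle\modelsht\Theta'$) is the standard way to spell that out. The one direction you flag as needing care is indeed covered by the paper's appendix (Proposition~\ref{prop:properties}(a), cited from prior work), so no gap remains.
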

Proposition~\ref{prop:strong.equivalence} allows us to conclude that we can replace, without changing the~$\pdefb$\nobreakdash-stable models, rule~\eqref{eq:blocks.inertia} in~$\Pi_{\mathit{block}}$ by rules
\small
\begin{align}
\mathit{on}(B,L,T+1) &\leftarrow \mathit{on}(B,L,T),
\label{eq:blocks.inertia.leq}
\\
&\phantom{\leftarrow}\hspace{5pt} not\
\mathit{non}(B,L,T+1),\ T < t
\notag
\\
\mathit{on}(B,L,T+1) &\leftarrow \mathit{on}(B,L,T),
\label{eq:blocks.inertia.geq}
\\
&\phantom{\leftarrow}\hspace{5pt} not\
\mathit{non}(B,L,T+1),\
T \geq t
\notag
\end{align}
\normalsize
where~$t$ is any positive integer.
Note that~\eqref{eq:blocks.inertia.leq} and~\eqref{eq:blocks.inertia.geq} are obtained from rule~\eqref{eq:blocks.inertia} by respectively adding~$T < t$ and~$T \geq t$ to its body.
In fact~\eqref{eq:blocks.inertia} is~$\pdef$\nobreakdash-strong equivalent to to~$\{\eqref{eq:blocks.inertia.leq} ,\eqref{eq:blocks.inertia.geq} \}$ for every~$\pdef$ where comparison symbols are extensional.
%
%
Analogous rewritings can be performed on the remaining rules of~$\Pi_{\mathit{block}}$.
By~$\Pi_{\mathit{block}}^<$ we denote the program obtained from~$\Pi_{\mathit{block}}$ by adding~${T < t}$ to the body of rules~\eqref{eq:blocks.inertia} and~\eqref{eq:blocks.action}, and adding~${T \leq t}$ to the body of rule~\eqref{eq:blocks.non}.
By~$\Pi_{\mathit{block}}^>$ we denote the program obtained from~$\Pi_{\mathit{block}}$ by adding~${T \geq t}$ to the body of rules~\eqref{eq:blocks.inertia} and~\eqref{eq:blocks.action}, and adding~${T > t}$ to the body of rule~\eqref{eq:blocks.non}.
Programs~$\Pi_{\mathit{block}}$ and~$\Pi_{\mathit{block}}^< \cup \Pi_{\mathit{block}}^>$ are~$\pdefb$\nobreakdash-strongly equivalent.
We use program~$\Pi_{\mathit{block}}^< \cup \Pi_{\mathit{block}}^>$ to showcase our Splitting method in the following section. In particular, the proposed method allows us to split this program into its two subcomponents~$\Pi_{\mathit{block}}^<$ and $\Pi_{\mathit{block}}^>$.

\section{Splitting Theorem with Intensionality Statements}
\label{sec:splitting}

To make the key results of this work comprehensible, we describe our Splitting method using two settings. 
%
First, we introduce the core ideas of the method in a restricted context, where considered theories are  composed of disjunctive rules.
Then, we generalize the method to arbitrary theories.

\subsection{Splitting Disjunctive Programs}

The statement of the proposed Splitting Theorem refers to the concept of the predicate dependency graph defined below.
This definition, requires the notion of a \emph{partition} of an intensionality statement.
Formally, given two intensionality statement, $\pdef_1$ and~$\pdef_2$, over a signature~$\sigma$ by~$\pdef_1 \sqcup \pdef_2$  we denote the intensionality statement defined as
\small
\begin{gather*}
(\pdef_1 \sqcup \pdef_2)^p(X_1,\text{\footnotesize\dots},X_n) 
= \pdef_1^p(X_1,\text{\footnotesize\dots},X_n) \vee \pdef_2^p(X_1,\text{\footnotesize\dots},X_n).
\end{gather*}
for every predicate symbol~$p$ in the signature.
\normalsize
By~${\pdef_1 \sqcap \pdef_2}$, we denote the intensionality statements defined as
\small
\begin{gather*}
(\pdef_1 \sqcap \pdef_2)^p(X_1,\text{\footnotesize\dots},X_n) 
= \pdef_1^p(X_1,\text{\footnotesize\dots},X_n) \wedge \pdef_2^p(X_1,\dotsc,X_n).
\end{gather*}
\normalsize
for every predicate symbol~$p$.
Intensionality statements $\pdef_1$ and~$\pdef_2$ are \emph{equivalent} if the universal closure of formula
\begin{gather*}
  \pdef_1^p(X_1,\dotsc,X_n) \leftrightarrow \pdef_2^p(X_1,\dotsc,X_n)
\end{gather*}
is logically valid for every predicate symbol~$p$.
If $\pdef_1$ and~$\pdef_2$ are equivalent, then we write~${\pdef_1 \equiv \pdef_2}$.
By~$\pdef_\top$ and $\pdef_\bot$ we denote the intensionality statements where all predicate symbols are intensional and extensional, respectively.
%
%
Intensionality statements $\pdef_1$ and~$\pdef_2$ are called \emph{disjoint} if~${\pdef_1 \sqcap \pdef_2 \equiv \pdef_\bot}$.
A set of intensionality statements~$\{\pdef_1,\dotsc,\pdef_k \}$ is called a \emph{partition} of some intensionality statement~$\pdef$ if~${\pdef_1 \sqcup \dotsc \sqcup \pdef_k \equiv \pdef}$ and~${\pdef_i \sqcap \pdef_j \equiv \pdef_\bot}$ for all~$i \neq j$.
For instance, we can see that~$\{ \pdefb_1, \pdefb_2 \}$ is a partition of intensionality statement~$\pdefb$,
where~$\pdefb_1$ and~$\pdefb_2$ are obtained from~$\pdefb$ by modifying the formulas associated with predicate constants~$\mathit{on}$ and~$\mathit{non}$ as~follows:
\begin{gather}
\begin{aligned}
  \pdefb_1^{on}(B,L,T) \ &\text{ is } \ T \neq 0 \wedge T \leq t
  \\
  \pdefb_1^{non}(B,L,T) \ &\text{ is } \ T \leq t
  \\
  \pdefb_2^{on}(B,L,T) \ &\text{ is } \ T > t
  \\
  \pdefb_2^{non}(B,L,T) \ &\text{ is } \ T > t.
\end{aligned}
  \label{eq:blocks.intensional.specification.splitted}
\end{gather}
An occurrence of a predicate constant, or any
other subexpression, in a formula is called \emph{negated} if it occurs in the scope of negation (i.e in the antecedent of any implication of the form~$F \to \bot$); \emph{nonnegated} otherwise.
%
%
As an example, predicate constant $\mathit{on}$ occurs nonnegated in the antecedent of~\eqref{eq:blocks.inertia:fo}, while~$\mathit{non}$ does not.
Given a program~$\Pi$ (understood as a set of sentences) and a partition~$\pdefs$ of some intensionality statement~$\pdef$, the \emph{(directed) graph of dependencies with respect to~$\pdefs$},
denoted~$\G_{\pdefs}(\Pi)$, is  defined as follows:
\begin{itemize}
  \item Its vertices are pairs~$(p,\pdef_i)$ such that~$p$ is a predicate symbol occurring in~$\Pi$, $\pdef_i \in \pdefs$ is an intensionality statement and~$\exists \boldX\, \pdef_i^p(\boldX)$ is satisfiable.
  \item It has an edge from~$(p,\pdef_i)$ to~$(q,\pdef_j)$ when for some rule~$B \to H$ of~$\Pi$,
  \begin{itemize}
  	\item 
  $p(\boldt)$ occurs in~$H$, and 
  \item 
  there is a nonnegated occurrence of~$q(\boldr)$ in~$B$  and
  \item
  the following sentence is satisfiable
  \begin{gather}
    \exists \boldX\, \big( B \wedge p(\boldt) \wedge \pdef_i^p(\boldt) \wedge \pdef_j^q(\boldr) \big),
    \label{eq:edge.sentence}
  \end{gather}
  where~$\boldX$ are the free variables in~$B \to H$.
\end{itemize}
\end{itemize}
We say that a partition~$\pdefs = \{ \pdef_1, \dotsc, \pdef_k \}$  is \emph{separable} (on~$\G_{\pdefs}(\Pi)$) when
\begin{itemize}
\item[] every infinite walk~$v_1,v_2,\dotsc$ of~$\G_{\pdefs}(\Pi)$ visits at most one~$\pdef_i$ infinitely many times, that is, 
there is some~$i \in \{1, \dotsc, k \}$ s.t
$\{ l \mid v_l\!=\! (p,\pdef_j) \}$ is finite for all~${j \neq i}$.
\end{itemize}
Note that, if (the signature of)~$\Pi$ is finite, every possible infinite walk must contain a cycle and, thus, this condition is equivalent to saying that every cycle is confined to one component of the partition.
%
In general, checking cycles is not enough for infinite programs, even when these programs contain no variables~\cite{harlif16a}.

Continuing with our running example, let~$\Pi_\mathit{block}' = \Pi_\mathit{block}^< \cup \Pi_\mathit{block}^>$ and $\pdefs_\mathit{block}=\{\pdefb_1,\pdefb_2\}$ be partition of~$\pdefb$.
Then, graph~$\G_{\pdefs_\mathit{block}}(\Pi_\mathit{block}')$
consists of vertices
$$
(on,\pdefb_1),\quad (on,\pdefb_2),\quad (non,\pdefb_1),\quad (non,\pdefb_2)
$$
and contains five edges: one leading from~$(on,\pdefb_1)$ to itself, 
another one leading from~$(on,\pdefb_2)$ to itself,
one leading from~$(on,\pdefb_2)$ to~$(on,\pdefb_1)$\,---\,induced by rule~\eqref{eq:blocks.inertia.geq}\,---\,
and two leading from~$(non,\pdefb_i)$ to~$(on,\pdefb_i)$ with~${1 \leq i \leq 2}$.
Note that there is no edge from~$(on,\pdefb_1)$ to~$(on,\pdefb_2)$.
For instance, we can see that rules~\eqref{eq:blocks.inertia.leq} and~\eqref{eq:blocks.inertia.geq} do not induce such edge because the sentence of the form~\eqref{eq:edge.sentence} corresponding to these rules contain conjuncts~${T+1 \leq t}$
and~\hbox{$T > t$}, which make them unsatisfiable.
Conjunct~${T+1 \leq t}$ is obtained from~$\pdefb_1^\mathit{on}(B,L,T+1)$ corresponding to the head occurrence of~$\mathit{on}/3$.
The other conjunct is obtained from~$\pdefb_2^\mathit{on}(B,L,T)$ corresponding to its body occurrence.
Therefore, partition~$\pdefs_\mathit{block}$ is separable.

A program~$\Pi$ is \emph{negative} on some intensionality statement~$\pdef$ if, for every rule~$B \to H$ of~$\Pi$ and every atom~$p(\boldt)$ occurring in~$H$, the following sentence  is unsatisfiable
$${\exists \boldX\, (B \wedge p(\boldt) \wedge \pdef^p(\boldt))},$$ where~$\boldX$ are the free variables in~${B \to H}$.
For instance, programs~$\Pi_\mathit{block}^<$ and~$\Pi_\mathit{block}^>$ are negative on~$\pdefb_2$ and~$\pdefb_1$, respectively.
We  provide a part of an argument for the claim that 
$\Pi_\mathit{block}^<$ is negative on~$\pdefb_2$. In particular, we argue that rule~\eqref{eq:blocks.inertia.leq} is negative on~$\pdefb_2$.
This rule has~$\mathit{on}(B,L,T+1)$ in its head and~$T < t$ in its body.
It is sufficient to show that a formula of the form
\begin{align*}
\exists BLT\, (F \wedge T < t \wedge  \pdefb^{on}_2(B,L,T+1)), 
\end{align*}
is unsatisfiable.
Recall that~$\pdefb_2^{on}(B,L,T+1)$ is ${T+1 > t}$.
%
%
%
\begin{theorem}[Splitting disjunctive programs]\label{thm:splitting}
Let~$\Pi \!=\! \Pi_1 \!\cup\! \text{\footnotesize\dots} \!\cup \Pi_n$\! be a disjunctive program
and $\pdefs \!=\! \{ \pdef_1, \text{\footnotesize\dots} ,\pdef_n \}$ be a partition of~$\pdef$ such that
\begin{itemize}
  \item $\pdefs$ is separable on~$\G_{\pdefs}(\Pi)$; and
  \item each~$\Pi_i$ is negative on~$\pdef_j$ for all~$j \neq i$.
\end{itemize}
Then, for any interpretation~$I$,
the following two statements are equivalent
\begin{itemize}
    \item $I$ is a $\pdef$\nobreakdash-stable model of~$\Pi$, and
    \item $I$ is a $\pdef_i$-stable model of~$\Pi_i$ for all~$1\leq i\leq n$.
\end{itemize}
\end{theorem}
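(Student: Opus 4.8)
The plan is to characterize $\pdef$\nobreakdash-stability as the absence of a certain kind of ``unfounded set'' among the HT\nobreakdash-interpretations below~$I$, and then to transport such sets between~$\Pi$ and the~$\Pi_i$, using the negativity hypotheses for one implication and separability for the other. The first ingredient is an auxiliary lemma: for any theory~$\Gamma$ over~$\sigma$ with $I\models\Gamma$, the interpretation~$I$ is $\pdef$\nobreakdash-stable for~$\Gamma$ iff there is no nonempty set~$U$ of atoms that are true in~$I$ and $\pdef$\nobreakdash-intensional at~$I$ (that is, of the form $p(\boldd^*)$ with $I\models p(\boldd^*)$ and $I\models\pdef^p(\boldd^*)$) such that $\langle\At{I}\setminus U,I\rangle\modelsht\Gamma$. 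This follows by unwinding the HT clauses for $\to$, $\vee$ and $\neg$ to see that $\langle\HH,I\rangle\modelsht\EM{\pdef}$ holds exactly when~$\HH$ contains every atom that is true in~$I$ but not $\pdef$\nobreakdash-intensional at~$I$; together with the facts that $I\models\EM{\pdef}$ always and that $(\At{I},I)$ is an HT\nobreakdash-model of every sentence~$I$ satisfies, the lemma is obtained by setting $U=\At{I}\setminus\HH$. I call such a~$U$ a \emph{$\Gamma$\nobreakdash-unfounded set}. Since~$\pdefs$ partitions~$\pdef$, each $\pdef$\nobreakdash-intensional true atom~$a$ is $\pdef_i$\nobreakdash-intensional at~$I$ for exactly one index, denoted~$c(a)$, so a $\Pi$\nobreakdash-unfounded set~$U$ splits uniquely as $U=\bigcup_i U_i$ with $U_i=\{a\in U\mid c(a)=i\}$.

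For the implication from ``$\pdef$\nobreakdash-stable for~$\Pi$'' to ``$\pdef_i$\nobreakdash-stable for each~$\Pi_i$'': from $I\models\Pi$ we get $I\models\Pi_i$, so it suffices to show that every $\Pi_i$\nobreakdash-unfounded set~$U$ is also a $\Pi$\nobreakdash-unfounded set. It is nonempty and consists of $\pdef$\nobreakdash-intensional true atoms (since $\pdef_i^p$ entails $\pdef^p$), and $\langle\At{I}\setminus U,I\rangle\modelsht\Pi_j$ for $j\neq i$: if this HT\nobreakdash-interpretation satisfies the body of an instance of a rule $B\to H$ of~$\Pi_j$, it satisfies that body classically, hence (since $I\models\Pi_j$) some head atom~$p(\boldt)$ is true in~$I$; negativity of~$\Pi_j$ on~$\pdef_i$ forces $\pdef_i^p(\boldt)$ to fail there, so that atom is not $\pdef_i$\nobreakdash-intensional, hence not in~$U$, hence in $\At{I}\setminus U$, which satisfies the head. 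This step uses only the negativity hypotheses.

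For the converse, assume~$I$ is $\pdef_i$\nobreakdash-stable for every~$\Pi_i$ (so $I\models\Pi$) and suppose, toward a contradiction, that~$U$ is a $\Pi$\nobreakdash-unfounded set. Form the \emph{atom\nobreakdash-dependency graph}~$\G^U$ with vertex set~$U$ and an edge $a\to b$ whenever some rule $B\to H$ of~$\Pi$ has~$a$ among its (instantiated) head atoms, has a nonnegated occurrence yielding~$b$ in~$B$ with $b\in U$, and sentence~\eqref{eq:edge.sentence} is satisfiable, where the head occurrence is that of~$a$, the body occurrence is that of~$b$, and $i,j$ are taken to be $c(a),c(b)$. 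By construction every edge of~$\G^U$ induces an edge of~$\G_{\pdefs}(\Pi)$ from $(\mathit{pred}(a),\pdef_{c(a)})$ to $(\mathit{pred}(b),\pdef_{c(b)})$, and these glue along walks. The crux is the claim that some~$a\in U$ has its set~$R(a)$ of $\G^U$\nobreakdash-reachable vertices contained in a single component~$\pdef_i$: otherwise, starting from any vertex one could repeatedly reach a vertex in a different component, producing an infinite $\G^U$\nobreakdash-walk whose component labels change infinitely often, and hence an infinite walk of~$\G_{\pdefs}(\Pi)$ visiting at least two of the~$\pdef_j$ infinitely often, contradicting separability. Fixing such an~$a$ with $i=c(a)$, it remains to check that~$R(a)$ is a $\Pi_i$\nobreakdash-unfounded set: it is nonempty with $R(a)\subseteq U_i$, and if an instance of a rule $B\to H$ of~$\Pi_i$ had its body HT\nobreakdash-satisfied by $\langle\At{I}\setminus R(a),I\rangle$ but not its head, then, using that $\langle\At{I}\setminus U,I\rangle$ satisfies that instance and $R(a)\subseteq U$, there would be a nonnegated body atom $b\in U\setminus R(a)$ together with a head atom $a'\in R(a)$ true in~$I$, yielding a $\G^U$\nobreakdash-edge $a'\to b$ that leaves~$R(a)$ --- impossible. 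So~$R(a)$ witnesses that~$I$ is not $\pdef_i$\nobreakdash-stable for~$\Pi_i$, the desired contradiction; this step uses only separability.

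I expect the converse direction, and within it the combinatorial claim that some reachable set is confined to one component, to be the main obstacle. For a finite program it reduces to the paper's observation that every cycle of~$\G_{\pdefs}(\Pi)$ stays within one component (a ``sink'' strongly connected component of~$\G^U$ then serves as~$R(a)$), but for an infinite program a reachable set may fail to contain any such bottom, and one genuinely needs the infinite\nobreakdash-walk formulation of separability --- which is also why the remark about \citet{harlif16a} appears. The remaining delicate points are bookkeeping: keeping exact the correspondence among HT\nobreakdash-satisfaction of rule bodies, instances of sentence~\eqref{eq:edge.sentence}, and edges of~$\G^U$ and~$\G_{\pdefs}(\Pi)$, and handling constraints (empty heads) and doubly negated body literals (which contribute negated occurrences and are insensitive to~$\HH$). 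The later generalization to arbitrary first\nobreakdash-order theories should then amount to replacing ``head atom'' and ``nonnegated body atom'' by positive and negative occurrences of predicate constants in a general formula.
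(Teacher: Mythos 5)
Your proposal is correct, and its mathematical core coincides with the paper's: your ``$\Gamma$-unfounded set'' characterization is the paper's Proposition~\ref{prop:grounding.intensional.stable.models} together with the notion of $\AAA$-stable model (with $U=\At{I}\setminus\HH$), and your key combinatorial step --- find a vertex whose reachable set in the ground dependency graph is confined to one component, using the infinite-walk formulation of separability, and use that reachable set as a smaller unfounded set for a single $\Pi_i$ --- is exactly the content of Lemmas~\ref{lem:separable.partition}, \ref{lem:separable.partition.X} and~\ref{lem:splitting.v1}. Where you differ is in the packaging: the paper proves Theorem~\ref{thm:splitting} as a corollary of the general Theorem~\ref{thm:theory.splitting} (via Lemmas~\ref{lem:positive.dependency.graph.for.programs} and~\ref{lem:negative.for.programs}, which relate $\G_{\pdefs}(\Pi)$ to $\G_{\pdefs,\emptyset}(\Pi)$ and ``negative'' to ``$\emptyset$-negative''), and that general theorem carries the full $\Pos{\Psi}{\cdot}$/$\Pnn{\Psi}{\cdot}$/$\Nnn{\Psi}{\cdot}$ machinery and the structural-induction Lemmas~\ref{l1}--\ref{lem:splitting.aux} over arbitrary formulas; you instead work directly at the ground level for disjunctive rules, where HT-satisfaction of a body reduces to membership of its nonnegated atoms in $\HH$ plus classical satisfaction, so those induction lemmas collapse to one-line observations. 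This buys a self-contained, more elementary proof of the disjunctive case at the cost of not yielding the first-order generalization for free. One further point in your favor: your forward direction invokes the negativity hypotheses explicitly (to show $\langle\At{I}\setminus U,I\rangle\modelsht\Pi_j$ for $j\neq i$), which mirrors what the appendix actually does in Lemma~\ref{lem:aux:thm.splitting}; the main text's Proposition~\ref{prop:splitting} asserts that direction without any hypotheses, which is not justified by the appendix, so your more careful formulation is the right one. The only bookkeeping you should make fully explicit in a write-up is the case analysis showing that a rule instance violated by $\langle\At{I}\setminus R(a),I\rangle$ but satisfied by $\langle\At{I}\setminus U,I\rangle$ must have a true head atom in $R(a)$ and a nonnegated body atom in $U\setminus R(a)$, and that the interpretation $I$ itself witnesses satisfiability of the corresponding instance of~\eqref{eq:edge.sentence}, so the offending edge really is present in $\G^U$ and in $\G_{\pdefs}(\Pi)$.
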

Recall that partition~$\pdefs_\mathit{block}$ is separable on $\G_{\pdefs_\mathit{block}}(\Pi_\mathit{block}')$ and that programs~$\Pi_\mathit{block}^<$ and~$\Pi_\mathit{block}^>$ are negative on~$\pdefb_2$ and~$\pdefb_1$, respectively.
Therefore, the theorem on Splitting disjunctive programs allows us to conclude that the $\pdefb$\nobreakdash-stable models of~$\Pi_\mathit{block}'$ are those interpretations that are at the same time~$\pdefb_1$\nobreakdash-stable models of~$\Pi_\mathit{block}^<$
and~$\pdefb_2$\nobreakdash-stable models of~$\Pi_\mathit{block}^>$.
As a final note, recall that~$\Pi_\mathit{block}$ and~$\Pi_\mathit{block}'$ are $\pdefb$-strongly equivalent and, thus, they have the same $\pdefb$\nobreakdash-stable models.
Hence, the statement that we can split program~$\Pi_\mathit{block}$ into programs~$\Pi_\mathit{block}^<$ and~$\Pi_\mathit{block}^>$ is a consequence of these two facts stemming from presented
Theorem~\ref{thm:splitting} and
 Proposition~\ref{prop:strong.equivalence}, respectively.

\subsection{Splitting Arbitrary Theories}

We now generalize Theorem~\ref{thm:splitting} on Splitting disjunctive programs to the case of arbitrary sets of sentences. We start by motivating this generalization and then proceed to  its formalization.
Complex formulas not fitting into the syntax of disjunctive rules naturally appear as a result of translating some common constructs of logic programs into first\nobreakdash-order formulas.
These constructs include basic arithmetic operations such as division, which are part of the ASP\nobreakdash-Core\nobreakdash-2~\cite{aspcore2}, and more advanced features such as intervals, or conditional literals.
Intervals and conditional literals are part of the language of the solver~\texttt{clingo}~\cite{gehakalisc15a} and are commonly used in practice.
%
For instance, the  following rule contains a conditional literal \hbox{$ holds(B) : body(R,B)$} in its body:
\begin{align}
    holds(H) &\leftarrow head(R,H),\, holds(B) : body(R,B).    \label{eq:metaencoding.rule.positive}
\end{align}
\citet{hanlie22a} showed that
this rule can be understood as an abbreviation for the following first\nobreakdash-order sentence containing a nested universal quantifier and a nested implication
\begin{multline}
    \forall RH \big( head(R,H) \,\wedge\\ \forall B  \big(body(R,B) \to holds(B)\big) \to holds(H) \big).
    \label{eq:metaencoding.positive}
\end{multline}
Rules of with condition literals are common in meta-programming~\cite{karoscwa21a}, where reification of constructs are utilized to build ASP-based reasoning engines that {\em may} go beyond ASP paradigm itself.
\citet{karoscwa21a} presented multiple examples of meta-programming.
Here, we use rule~\eqref{eq:metaencoding.rule.positive} to showcase a simple use of this technique.
Indeed, it can be used to express the meaning of a \emph{definite} rule -- a disjunctive rule whose $\mathit{Head}$ consists of single atomic formula and $\mathit{Body}$ contains no occurrences of negation $not$ --
 encoded as a set of facts.
For example, definite program 
\begin{equation}
    \begin{array}{l}
    a \leftarrow b\\
    b \leftarrow c
    \end{array}
    \label{eq:basic-pr}
\end{equation}
can be encoded by the following facts complemented with rule~\eqref{eq:metaencoding.rule.positive} 
\begin{gather}
    head(r1,a) \quad body(r1,b) 
    \label{eq:metaencoding.r1}
    \\
    head(r2,b) \quad body(r2,c).
    \label{eq:metaencoding.r2}
\end{gather}
Clearly, we can split the program listed in~\eqref{eq:basic-pr} into two subprograms, each consisting of a single rule.
We may expect the possibility of splitting its respective meta encoding as well.
Consider sentences
\begin{align}
    \begin{aligned}
        \forall X &\big( \mathit{head}(r1,X) \,\wedge\\
            &\forall W \big(\mathit{body}(r1,W) \to \mathit{holds}(W)\big) \to \mathit{holds}(X) \big)        
    \end{aligned}
\label{eq:metaencoding.positive.r1}
\\
\begin{aligned}
    \forall X &\big( \mathit{head}(r2,X) \,\wedge\\
    &\forall W \big(\mathit{body}(r2,W) \to \mathit{holds}(W)\big) \to \mathit{holds}(X) \big)    
\end{aligned}
\label{eq:metaencoding.positive.r2}
\end{align}
corresponding to a ``partially instantiated'' portion of the meta-encoding of program~\eqref{eq:basic-pr}.
%
Ideally,  the program corresponding to sentences~\mbox{(\ref{eq:metaencoding.r1}-\ref{eq:metaencoding.positive.r2})} 
would be ``identified'' with
 subprograms~\mbox{(\ref{eq:metaencoding.r1},\ref{eq:metaencoding.positive.r1})} and~(\ref{eq:metaencoding.r2},\ref{eq:metaencoding.positive.r2}) by splitting.
Yet, Theorem~\ref{thm:splitting} cannot support such a claim as~\eqref{eq:metaencoding.positive.r1} and~\eqref{eq:metaencoding.positive.r2} are not disjunctive rules.
This claim is also not supported by other Splitting Theorems in the literature~\cite{felelipa09a,harlif16a} due to the ``positive nonnegated'' dependency induced by these rules in the predicate~$\mathit{hold}/1$ (we define the concept of positive nonnegated dependency below).

\subsubsection{Formalization.}

The first thing to note is that one direction of the Splitting Theorem always holds without the need to inspect the dependency graph.

\begin{proposition}\label{prop:splitting}
Let~$\Gamma = \Gamma_1 \cup \dots \cup \Gamma_n$ be some theory and let
$\pdefs = \{ \pdef_1, \dots ,\pdef_n \}$ be a partition of intensionality statement~$\pdef$.
If~$I$ is a $\pdef$\nobreakdash-stable model of~$\Gamma$, then~$I$ is a $\pdef_i$-stable model of~$\Gamma_i$ for all~$1\leq i\leq n$.
\end{proposition}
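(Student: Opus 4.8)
The plan is to unfold ``$I$ is a $\pdef$\nobreakdash-stable model of $\Gamma$'' into two requirements --- that $I$ be a classical model of $\Gamma\cup\EM{\pdef}$, and that no HT\nobreakdash-interpretation $\langle\HH,I\rangle$ with $\HH\subsetneq\At{I}$ satisfy $\Gamma\cup\EM{\pdef}$ --- and to show that each requirement descends from $(\Gamma,\pdef)$ to every $(\Gamma_i,\pdef_i)$. So, assume $I$ is a $\pdef$\nobreakdash-stable model of $\Gamma$. The classical\nobreakdash-model requirement transfers immediately: $\Gamma_i\subseteq\Gamma$ yields $I\models\Gamma_i$, and each sentence of $\EM{\pdef_i}$ has the shape $\forall\boldX\,(\neg\pdef_i^p(\boldX)\to p(\boldX)\vee\neg p(\boldX))$, whose consequent is classically valid, so $I\models\EM{\pdef_i}$; hence $I$ is a classical model of $\Gamma_i\cup\EM{\pdef_i}$, and only the minimality requirement is left.

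For minimality I would argue by contradiction: suppose $\langle\HH,I\rangle$ satisfies $\Gamma_i\cup\EM{\pdef_i}$ for some $\HH\subsetneq\At{I}$, and derive that the \emph{same} $\langle\HH,I\rangle$ satisfies $\Gamma\cup\EM{\pdef}$, contradicting the $\pdef$\nobreakdash-stability of $I$. The key auxiliary fact, proved by persistence of here\nobreakdash-and\nobreakdash-there (if $\langle\HH,I\rangle\modelsht F$ then $I\models F$, whence $\langle\HH,I\rangle\modelsht\neg G$ iff $I\not\models G$), is the following description of $\EM$: for $\HH\subseteq\At{I}$, $\langle\HH,I\rangle\modelsht\EM{\pdef}$ holds exactly when $\HH$ contains every atom of $\At{I}$ that is extensional with respect to $\pdef$ under $I$, and $\langle\HH,I\rangle\modelsht\EM{\pdef_i}$ holds exactly when $\HH$ contains every atom of $\At{I}$ that is \emph{not} intensional with respect to $\pdef_i$ under $I$. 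Because $\pdefs$ is a partition, $\pdef^p\equiv\pdef_1^p\vee\dots\vee\pdef_n^p$, so being extensional with respect to $\pdef$ is stronger than being not intensional with respect to $\pdef_i$; consequently $\langle\HH,I\rangle\modelsht\EM{\pdef_i}$ already implies $\langle\HH,I\rangle\modelsht\EM{\pdef}$, and, moreover, every atom on which $\HH$ and $\At{I}$ disagree is intensional with respect to $\pdef_i$ under $I$.

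What remains --- and what I expect to be the main obstacle --- is to show $\langle\HH,I\rangle\modelsht\Gamma_j$ for every $j\neq i$ (we already have $\Gamma_i$). The intended argument: $\HH$ agrees with $\At{I}$ on all atoms that are \emph{not} intensional with respect to $\pdef_i$ under $I$, while the atoms that \emph{are} intensional with respect to $\pdef_i$ under $I$ occur in $\Gamma_j$ only negatively; since negated subformulas are evaluated classically in here\nobreakdash-and\nobreakdash-there (by the persistence fact above, independently of $\HH$), one may replace each maximal negated subformula of a sentence of $\Gamma_j$ by the truth constant prescribed by $I$, obtaining a sentence with no atom intensional with respect to $\pdef_i$ under $I$, whose satisfaction under $\langle\HH,I\rangle$ therefore depends on $\HH$ only through atoms on which $\HH$ coincides with $\At{I}$. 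Hence $\langle\HH,I\rangle\modelsht\Gamma_j$ iff $\langle\At{I},I\rangle\modelsht\Gamma_j$ iff $I\models\Gamma_j$, and the last holds since $I\models\Gamma$. Making ``the atoms intensional with respect to $\pdef_i$ under $I$ occur only negatively in $\Gamma_j$'' precise for first\nobreakdash-order sentences --- dealing with grounding and with the side condition $I\models F\to G$ in the semantics of $\to$ --- is the delicate bookkeeping, and it is exactly the point that relies on each $\Gamma_j$ with $j\neq i$ being negative on $\pdef_i$. Granting this, $\langle\HH,I\rangle$ satisfies $\Gamma_i$, every $\Gamma_j$ with $j\neq i$, and $\EM{\pdef}$, hence all of $\Gamma\cup\EM{\pdef}$ with $\HH\subsetneq\At{I}$ --- contradicting the $\pdef$\nobreakdash-stability of $I$ --- which completes minimality and the proposition.
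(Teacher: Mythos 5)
Your opening moves are fine: the classical--model half transfers for the reasons you give, and your characterization of HT--satisfaction of $\EM{\cdot}$ is correct, so $\langle\HH,I\rangle\modelsht\EM{\pdef_i}$ does imply $\langle\HH,I\rangle\modelsht\EM{\pdef}$ because $\pdef^p\equiv\pdef_1^p\vee\dots\vee\pdef_n^p$. The gap is exactly where you flag ``the main obstacle'': to get $\langle\HH,I\rangle\modelsht\Gamma_j$ for $j\neq i$ you appeal to ``each $\Gamma_j$ with $j\neq i$ being negative on $\pdef_i$'' --- but that is \emph{not} a hypothesis of Proposition~\ref{prop:splitting}. The proposition assumes only that $\Gamma=\Gamma_1\cup\dots\cup\Gamma_n$ and that $\pdefs$ is a partition of $\pdef$; no negativity, separability, or dependency condition is available at that point. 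Worse, the step cannot be patched, because the statement as literally written is false: take a propositional atom $p$, let $\Gamma_1=\emptyset$, $\Gamma_2=\{p\}$, $\pdef_1^p=\top$, $\pdef_2^p=\bot$ (so $\pdef^p\equiv\top$ and $\{\pdef_1,\pdef_2\}$ is a partition of $\pdef$). The interpretation $I$ making $p$ true is a $\pdef$-stable model of $\Gamma=\{p\}$, but it is not a $\pdef_1$-stable model of $\Gamma_1=\emptyset$, since $\langle\emptyset,I\rangle$ satisfies $\emptyset\cup\EM{\pdef_1}$ while $\emptyset\subsetneq\At{I}$. So any blind proof of the proposition must contain an error; yours at least localizes it in the right place (and note that $\Gamma_2$ here is indeed not negative on $\pdef_1$).

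What your argument actually establishes is the variant in which each $\Gamma_j$ ($j\neq i$) is assumed negative on $\pdef_i$ --- which is precisely the form in which the proposition is invoked inside Theorem~\ref{thm:theory.splitting}, where those hypotheses are present. For comparison, the paper gives no direct proof of Proposition~\ref{prop:splitting}. Its appendix instead proves Lemma~\ref{lem:splitting.if}, which sidesteps the problem by keeping the \emph{whole} theory in the conclusion ($I$ is an $\AAA_i$-stable model of $\Gamma$, not of $\Gamma_i$); that statement follows from a two-line monotonicity observation ($\EM{I,\AAA_i}\supseteq\EM{I,\AAA}$) and needs no negativity. The descent from $\Gamma$ to the subtheory $\Gamma_i$ is isolated in Lemma~\ref{lem:aux:thm.splitting}, where disjointness of $\AAA$ from $\Pos{I}{\Gamma_2}$ --- the semantic counterpart of negativity --- is explicitly required. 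If you either weaken the conclusion to ``$I$ is a $\pdef_i$-stable model of $\Gamma$'' or add the negativity hypotheses to the statement, your proof strategy goes through; as a proof of the proposition as printed, it does not.
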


For the other direction, we have to generalize the notions of separability and  of being negative to the case of arbitrary sentences.
Similar to the Splitting Theorem by~\citet{felelipa09a}, this generalization relies on the notions of~\emph{strictly positive}, \emph{positive nonnegated}, and \emph{negative nonnegated} occurrence of an expression.
An occurrence of an expression is called \emph{positive} if the number of implications containing that occurrence in the antecedent is even;
and \emph{strictly positive} if that number is~$0$.
It is called \emph{negative} if that number is odd.
As in the case of logic programs, an occurrence of an expressions is called \emph{negated} if it belongs to a subformula of the form~$\neg F$ (that is, $F \to \bot$); and \emph{nonnegated} otherwise.
A \emph{rule} is a strictly positive occurrence of a formula of the form~${B \to H}$.
Therefore, the rules of a set of disjunctive rules are exactly all its disjunctive rules, but this definition also covers arbitrary nested  rules. For instance, 
sentence~${(c \to (a \to b)) \vee d)}$ contains two occurrences of rules, namely, ${a \to b}$ and~\hbox{$c \to (a \to b)$}.

To extend Theorem~\ref{thm:splitting} to arbitrary formulas, we use these notions to make the construction of a counterpart of sentence~\eqref{eq:edge.sentence} recursive over the formula.
In addition, we ought to incorporate in this construction a \emph{context} that carries information about the rest of the program.
To observe  the need for this context, note that whether rules~\eqref{eq:metaencoding.positive.r1} and~\eqref{eq:metaencoding.positive.r2} can be separated into differ subprograms depends on the extension of~$\mathit{head}/2$ and~$\mathit{body}/2$.
For instance, adding the fact~$\mathit{body}(r2,a)$ to our program creates a dependency that cannot be broken.
In fact, this new program has the same meaning as a non-splittable program consisting of rules \hbox{$a \leftarrow b$} and~$b \leftarrow c, a$.

Given a formula~$F$ with free variables~$\boldX$ and a theory~$\Psi$, by~$F^\Psi$ we denote formula~$F$ itself if~${\Psi \cup \{ \exists \boldX\,F \}}$ is satisfiable; and~$\bot$ otherwise.
Let now~$p(\boldt)$ be a strictly positive (resp. positive nonnegated or negative nonnegated) ``distinguished'' occurrence of predicate~$p$ in~$F$ and~$\boldY$ a list of variables not occurring in~$F$ of the same length and sorts as~$\boldt$.
We recursively build formula~$\Pos{\Psi}{F}$ (resp. $\Pnn{\Psi}{F}$ and~$\Nnn{\Psi}{F}$) for this occurrence as described next;
the construction of these three formulas only differs in the case of implication, so we use the metavariable~$\Fun{\Psi}{F} $ for the common cases:
\begin{itemize}
    \item If~$\Psi \cup \{ \exists \boldX\,F \}$ is unsatisfiable, then~$\Fun{\Psi}{F} = \bot$.
    \item If~$\Psi \cup \{ \exists \boldX\,F \}$ is satisfiable and the distinguished occurrence~$p(\boldt)$ does not occur in~$F$, then~$\Fun{\Psi}{F} = F$.
\end{itemize}
Otherwise, 
\begin{itemize}
\item $\Fun{\Psi}{p(\boldt)} = p(\boldt) \wedge \boldY = \boldt$; 

\item $\Fun{\Psi}{F_1 \wedge F_2} = \Fun{\Psi}{F_1} \wedge
    \Fun{\Psi}{F_2}$;

\item $\Fun{\Psi}{F_1 \vee F_2} =  \Fun{\Psi}{F_i}$ with $F_i$ containing the  occurrence ($i$ is $1$ or $2$);

\item $\Fun{\Psi}{\forall X F} = \Fun{\Psi}{\exists X F} = \exists X \big( \Fun{\Psi}{F}\big)$;

\item $\Pos{\Psi}{F_1 \to F_2} = F_1^\Psi \wedge \Pos{\Psi}{F_2}$;

\item $\Pnn{\Psi}{F_1 \to F_2} = \\\hspace*{10pt}\begin{cases}
    \Nnn{\Psi}{F_1} &\text{if $F_1$ contains the occurrence}
    \\
    F_1^\Psi \wedge \Pnn{\Psi}{F_2} &\text{otherwise}
    \end{cases}$;

\item $\Nnn{\Psi}{F_1 \to F_2} =\\\hspace*{10pt}\begin{cases}
    \Pnn{\Psi}{F_1} &\text{if $F_1$ contains the occurrence}
    \\
    F_1^\Psi \wedge \Nnn{\Psi}{F_2} &\text{otherwise}
    \end{cases}$.
\end{itemize}
Note that~$\Pnn{\Psi}{F}$ and~$\Nnn{\Psi}{F}$ are mutually recursive due to the case of implication.
%
Examples illustrating the construction of these formulas follow the next definition.


Given theories~$\Gamma$ and~$\Psi$ and a partition~$\pdefs$ of some intensionality statement~$\pdef$, the \emph{(directed) graph of positive dependencies with respect to~$\pdefs$ and under context~$\Psi$},
denoted~$\G_{\pdefs,\Psi}(\Gamma)$, is  defined as follows:
\begin{itemize}
    \item Its vertices are pairs~$(p,\pdef_i)$ such that~$p$ is a predicate symbol, $\pdef_i \in \pdefs$ is an intensionality statement and theory~$\Psi \cup \{ \exists \boldX\, \pdef_i^p(\boldX) \}$ is satisfiable.

    \item It has an edge from~$(p,\pdef_i)$ to~$(q,\pdef_j)$ when for some rule~$B \to H$ of~$\Gamma$, the following conditions hold
    \begin{itemize}
    	\item 
    	    	there is a strictly positive occurrence~$p(\boldt)$  in~$H$,
    	\item 
  there is a positive nonnegated occurrence of~$q(\boldr)$ in~$B$, 
  \item the theory below is satisfiable, 
    \small
    \begin{gather}
    \hspace*{-25pt}
    \Psi \cup \big\{ 
    \exists \boldX\boldY\boldZ \big( \Pnn{\Psi}{B} \wedge \Pos{\Psi}{H} \wedge \pdef_j^q(\boldY) \wedge \pdef_i^p(\boldZ) \big) \big\},        \label{eq:edge.sentence.arbitrary.formulas}
    \end{gather}
    \normalsize
     where~$\boldX$ are the free variables in~${B \to H}$,
    and $\boldY$ and $\boldZ$ respectively are the free variables
    in formulas~$\Pnn{\Psi}{B}$ and~$\Pos{\Psi}{H}$ that are not in~$\boldX$;  in the construction of     
    $\Pos{\Psi}{H}$ and $\Pnn{\Psi}{B}$ we consider occurrences   $p(\boldt)$ in $H$ and $q(\boldr)$ in $B$, respectively.
\end{itemize}

\end{itemize}
Take sentence~${F_1 = q \vee (r \wedge p)}$ and rule 
\hbox{$F_1 \to p$}.
Note that~$p$ occurs positively nonnegated in~$F_1$.
 We consider this occurrence $p$ when constructing formulas $\Pnn{(\cdot)}{F_1}$ below.
Assume that~$p$ is intensional in~$\pdef$ (i.e., $\pdef^p\equiv\top$) and that~$\pdefs$ is a partition of~$\pdef$.
On the one hand, if we consider the empty context, then $\Pnn{\emptyset}{F_1}$ is~$r \wedge p$ and the dependency graph of~$F_1 \to p$ with respect to $\pdefs$ and the empty context
 contains a reflexive edge on vertex~$(p,\pdef)$.
On the other hand, if we take context~$\Psi_1 = \{\neg r\}$, then $\Pnn{\Psi_1}{F_1}$ is~$\bot$ and the dependency graph with respect to $\pdefs$ and $\Psi_1$ is empty.
This example shows that, even on propositional formulas, the use of context leads to less dependencies than previous approaches such as~\cite{felelipa09a,harlif16a}.
This leads to the fact that the Splitting Theorem introduced in the sequel is applicable to more theories as it relies on this new  notion of dependencies.
This difference may appear even using the empty context. 
Take a theory consisting of formula  $\big(q \vee (\bot \wedge p)\big)\to p$. It turns out that  $p$ never contributes any edge in our approach 
under any context. For the same theory,
$p$ always forms a dependency in the earlier papers.
%
%
Consider now a small variation. Let~$p$ be a unary predicate and let~${F_2 = q \vee \exists X \big(r \wedge p(X)\big)}$. Note that
$\Pnn{\emptyset}{F_2}=\exists X (r \wedge p(X) \wedge Y = X)$ and $\Pnn{\Psi_1}{F_2} = \bot$, where we construct these formulas for occurrence~$p(X)$.
Under the empty context, analogously to the previous example, the dependency graph of~$F_2 \to p(1)$ contains a reflexive edge on vertex~$(p,\pdef)$. Under context~$\Psi_1$, the dependency graph is empty.
Construction of~$\Pos{\Psi}{F}$, $\Pnn{\Psi}{F}$ and~$\Nnn{\Psi}{F}$
reference a ``distinguished'' occurrence, as there may be multiple occurrences of the same atomic formula.
For instance, there are two occurrences of formula~$p(X)$ 
in sentence~$F_3$ defined as ${\exists X (X \!=\! a \!\wedge\! p(X)) \!\vee\! (X \!=\! b \!\wedge\! p(X))}$; for this sentence
$\Pnn{\emptyset}{F_3}$ is~${\exists X  (X \!=\! a \!\wedge\! p(X) \!\wedge\! Y \!=\! X)}$, when the first occurrence of $p(X)$ is considered;
and~${\exists X (X \!=\! b \!\wedge\! p(X) \!\wedge\! Y \!=\! X)}$, when the second one is considered.
Take~$\Psi_2$  to be the theory containing sentence~$\neg p(b)$ only, then~$\Pnn{\Psi_2}{F_3}$ remains unmodified for the first occurrence of $p$, but becomes~${\bot}$ for the second one.
As a consequence, in a context where~$p(b)$ is false, the first occurrence may generate an edge in the dependency graph, while the second occurrence does not generate any.
Note that in all considered examples so far, $\Pos{(\cdot)}{\cdot}$ is the same as~$\Pnn{(\cdot)}{\cdot}$ due to the lack of implications.
%

%
We now explore the meta encoding example introduced earlier.
Let~$B$ denote the antecedent of rule~\eqref{eq:metaencoding.positive.r1} and $H$ denote its consequent $\mathit{holds}(X)$. 
%
Under the empty context, $\Pnn{\emptyset}{B}$ for the only occurrence of $\mathit{holds}/1$ in $B$ is
$${head(r1,X) \wedge \exists W \big(\mathit{body}(r1,W) \wedge \mathit{holds}(W) \wedge Y = W\big)};$$
and~$\Pos{\emptyset}{H}$
for the only occurrence of $\mathit{holds}/1$ in $H$ 
 is~$\mathit{holds}(X) \wedge X = Z$.
Let~$\pdefc_1$ and~$\pdefc_2$ be intensionality statements where~$\mathit{head}$ and~$\mathit{body}$ are extensional and 
\begin{gather*}
\begin{aligned}
    \pdefc_1^{\mathit{holds}}(X_1) &\text{ is } X_1 = a
\end{aligned}
\quad
\begin{aligned}
    \pdefc_2^{\mathit{holds}}(X_1) &\text{ is } X_1 = b.
\end{aligned}
\end{gather*}
Then,  the singleton theory with the existential closure of
\begin{gather}  
    \Pnn{\emptyset}{B} \wedge \Pos{\emptyset}{H} \wedge \pdefc_j^{\mathit{holds}}(Y) \wedge \pdefc_i^{\mathit{holds}}(Z)
    \label{eq:metaprogramming.edges}
\end{gather}
is satisfiable for any~$i,j \in \{1,2\}$.
For a theory containing rule \eqref{eq:metaencoding.positive.r1}, the graph of positive dependencies  with respect to $\{\pdefc_1, \pdefc_2\}$ and under the empty context contains an edge from vertex~$(\mathit{holds},\pdefc_1)$ to vertex~$(\mathit{holds},\pdefc_2)$, and vice\nobreakdash-versa; together with reflexive edges in both vertices.
%
%
Let~$\Psi_3$ be the theory consisting of the universal closures of formulas
\begin{align*}
\mathit{head}(r1,X) &\leftrightarrow X = a \hspace{25pt}    
&\mathit{head}(r2,X) &\leftrightarrow X = b
\\
\mathit{body}(r1,X) &\leftrightarrow X = b  \hspace{25pt} 
&\mathit{body}(r2,X) &\leftrightarrow X = c.
\end{align*}
Then,
$\Pos{\Psi_3}{H}= \Pos{\emptyset}{H}$ and
$\Pnn{\Psi_3}{B}= \Pnn{\emptyset}{B}$.
The union of $\Psi_3$ and
the singleton theory consisting of the existential closure of formula~\eqref{eq:metaprogramming.edges} is satisfiable only with~${i = 1}$ and~${j = 2}$.
Therefore, under this context, the graph of positive dependencies from the previous example contains an edge from~$(\mathit{holds},\pdefc_1)$ to~$(\mathit{holds},\pdefc_2)$ but not vice\nobreakdash-versa.
There are no reflexive edges either.
In fact, under this context the positive dependency graph  of~\mbox{(\ref{eq:metaencoding.positive.r1}-\ref{eq:metaencoding.positive.r2})} only contains this edge.


Let us now generalize the notion of being a negative occurrence to the case of arbitrary theories.
We say that a theory~$\Gamma$ is $\Psi$\nobreakdash-\emph{negative} on some intensionality statement~$\pdef$ if, for every rule~$B \to H$ of~$\Gamma$ and every strictly positive occurrence~$p(\boldt)$ in~$H$, the following theory is unsatisfiable
~${\Psi \cup \{\exists \boldX \boldY\, (B \wedge \Pos{\Psi}{H} \wedge \pdef^p(\boldY))\}}$, where~$\boldX$ consists of the free variables in~${B \to H}$;~$\boldY$ consists of the free variables in~$\Pos{\Psi}{H}$ that do not occur in~$\boldX$; and we consider occurrence~$p(\boldt)$ in~$H$ in the construction of~$\Pos{\Psi}{H}$.
%
%
We also say that theory~$\Psi$ is an $\pdef$\nobreakdash-\emph{approximator} of~$\Gamma$ if all the $\pdef$-stable models of~$\Gamma$ are models of~$\Psi$.

\begin{theorem}[Splitting Theorem]\label{thm:theory.splitting}
Let~$\Gamma \!=\! \Gamma_1 \!\cup\! \text{\footnotesize\dots} \!\cup \Gamma_n$, 
$\pdef$ be an intensionality statement,
and~$\Psi$ be an~$\pdef$\nobreakdash-approximator of~$\Gamma$.
Let $\pdefs \!=\! \{ \pdef_1, \text{\footnotesize\dots} ,\pdef_n \}$ be a partition of~$\pdef$ such that
\begin{itemize}
    \item $\pdefs$ is separable on~$\G_{\pdefs,\Psi}(\Gamma)$; and
    \item each~$\Gamma_i$ is~$\Psi$-negative on~$\pdef_j$   for all~$j \neq i$.
\end{itemize}
Then,
the following two statements are equivalent
\begin{itemize}
    \item $I$ is a $\pdef$\nobreakdash-stable model of~$\Pi$, and
    \item $I$ is a model of~$\Psi$ and a~$\pdef_i$-stable model of~$\Pi_i$ for all \hbox{$i \in \{1, \dotsc, n \}$}.
\end{itemize}
\end{theorem}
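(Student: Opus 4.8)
The plan is to reduce Theorem~\ref{thm:theory.splitting} to Theorem~\ref{thm:splitting} on disjunctive programs, but a direct reduction is not available (arbitrary theories are not disjunctive programs), so the real work is to redo the argument at the level of the logic of here-and-there. One direction --- from a $\pdef$-stable model of $\Gamma$ to the conjunction of $\pdef_i$-stable models of the $\Gamma_i$ --- is already Proposition~\ref{prop:splitting}, together with the trivial observation that a $\pdef$-stable model of $\Gamma$ is a model of $\Gamma$, hence of any $\pdef$-approximator $\Psi$. So the entire content is the converse: assuming $I \models \Psi$ and $I$ is a $\pdef_i$-stable model of $\Gamma_i$ for each $i$, show $I$ is a $\pdef$-stable model of $\Gamma$.

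First I would fix such an $I$ and unpack what the $\pdef_i$-stability of $\Gamma_i$ buys us: for each $i$, for every $\HH \subsetneq \At{I}$, the HT-interpretation $\langle \HH, I\rangle$ fails some sentence of $\Gamma_i \cup \EM{\pdef_i}$. Since $I \models \Gamma$ and $I \models \EM{\pdef}$ (the latter because $I$ is at least a model of each $\Gamma_i \cup \EM{\pdef_i}$ and $\EM{\pdef}$ is entailed by $\bigcup_i \EM{\pdef_i}$ given that $\pdefs$ partitions $\pdef$), we have $\langle \At{I}, I\rangle \modelsht \Gamma \cup \EM{\pdef}$. The goal is: for every $\HH \subsetneq \At{I}$, $\langle \HH, I\rangle \not\modelsht \Gamma \cup \EM{\pdef}$. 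The standard device is to decompose $\HH$ according to the partition: write $\HH = \HH_1 \cup \dots \cup \HH_n$ where $\HH_i$ collects the atoms $p(\boldd^*) \in \HH$ with $I \models \pdef_i^p(\boldd^*)$ (plus all the ``extensional'' atoms, those satisfying no $\pdef_i^p$, which must coincide with their $\At{I}$-value in any HT-model of $\EM{\pdef}$). Because $\pdefs$ is a partition, every atom of $\At{I}$ lands in exactly one intensional block or is extensional, so these $\HH_i$ are well-defined and $\HH \subsetneq \At{I}$ forces $\HH_i \subsetneq (\text{the corresponding block of } \At{I})$ for at least one $i$.

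The heart of the argument is then to choose the ``right'' $\HH_i$ to shrink. The separability condition on $\G_{\pdefs,\Psi}(\Gamma)$ and the $\Psi$-negativity conditions are precisely what control the interaction between blocks, so I would mirror the proof strategy of~\citet{felelipa09a} and~\citet{harlif16a}: use a well-founded ordering induced by the (acyclic-up-to-one-component) dependency structure to pick, among the blocks $i$ where $\HH_i$ is already a proper subset, one that is ``minimal'' or ``sink-like'' in the partition's component DAG, and then argue that replacing $\HH$ by the HT-interpretation whose here-world is $\HH_i$ together with the full $\At{I}$ on the other blocks still fails $\Gamma_i \cup \EM{\pdef_i}$ --- hence fails $\Gamma \cup \EM{\pdef}$. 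The two technical lemmas I expect to need are: (1) a \emph{splitting lemma for HT-satisfaction of rules}, saying that whether $\langle \HH, I\rangle \modelsht (B \to H)$ depends, in the relevant way, only on the here-extensions of the predicates reachable via positive-nonnegated dependencies under context $\Psi$ --- this is where the recursive formulas $\Pos{\Psi}{\cdot}$, $\Pnn{\Psi}{\cdot}$, $\Nnn{\Psi}{\cdot}$ and the satisfiability of~\eqref{eq:edge.sentence.arbitrary.formulas} earn their keep, since they encode exactly the HT-relevant body/head atoms that can be ``switched on''; and (2) a monotonicity/persistence observation that $\Psi$-negativity of $\Gamma_i$ on $\pdef_j$ ($j\neq i$) means no rule of $\Gamma_i$ can have a strictly positive head atom in block $j$ witnessed in $I$, so inflating the block-$j$ here-extension back to $\At{I}$ cannot repair a violation of $\Gamma_i$.

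The main obstacle, as I see it, is getting lemma~(1) right: the recursion defining $\Pos{\Psi}{\cdot}$, $\Pnn{\Psi}{\cdot}$, $\Nnn{\Psi}{\cdot}$ must be shown to be faithful to HT-semantics in the sense that the satisfiability of~\eqref{eq:edge.sentence.arbitrary.formulas} captures \emph{every} way a rule of $\Gamma$ can create, in some model extending $\Psi$, an HT-level dependency of a block-$i$ head atom on a block-$j$ body atom. The alternation between positive and negative nonnegated occurrences (the mutual recursion in the implication case) is the subtle part --- one has to track how nesting an implication flips the ``polarity'' of which world's truth value matters, and check that the context-pruning step $F \mapsto F^\Psi$ never discards a witness that is actually realizable in a $\pdef$-stable model of $\Gamma$ (here $\Psi$ being an $\pdef$-approximator is exactly the hypothesis that licenses the pruning). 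Once that correspondence is established, the combinatorial core --- pick a minimal proper-subset block in the component DAG, restrict, and propagate the failure up to $\Gamma \cup \EM{\pdef}$ --- should go through by the same well-founded induction used in the cited splitting theorems, with the separability condition guaranteeing the induction is well-founded even on infinite $\Gamma$.
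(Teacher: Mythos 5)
Your overall architecture is the paper's. The forward direction is exactly Proposition~\ref{prop:splitting} plus the definition of $\pdef$\nobreakdash-approximator. For the converse, the paper does what you outline: it grounds the problem (passing from $\pdef_i$-stability to $\At{I,\pdef_i}$-stability via Proposition~\ref{prop:grounding.intensional.stable.models}, and from the symbolic graph $\G_{\pdefs,\Psi}(\Gamma)$ to a ground-atom dependency graph via Lemmas~\ref{lem:pos.grounding} and~\ref{lem:theory.graph.correspondence.edges} --- your ``lemma (1)''), uses negativity to distribute the union $\Gamma_1\cup\dots\cup\Gamma_n$ over the blocks (Lemma~\ref{lem:aux:thm.splitting} --- your ``lemma (2)''), and then runs a shrink-and-propagate argument based on the HT\nobreakdash-monotonicity Lemma~\ref{lem:splitting.aux}. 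So the plan and the two technical lemmas you isolate are the right ones.

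The one step that would fail as literally described is the combinatorial core: ``pick a block that is minimal/sink-like in the partition's component DAG and shrink the whole block $\HH_i$.'' Separability does not induce a DAG on the blocks, not even for finite graphs: take vertices $a_1,a_2$ in block~$1$ and $b_1,b_2$ in block~$2$, with the only edges being $a_1\to b_1$ and $b_2\to a_2$. There are no cycles (hence no infinite walks), so the partition is separable, yet each block has an edge into the other and neither is a sink; shrinking either whole block leaves an outgoing edge into the unshrunk remainder, so the monotonicity lemma cannot be applied to propagate the failure back to $\tuple{\HH,I}$. What the paper shrinks instead (Lemmas~\ref{lem:separable.partition}, \ref{lem:separable.partition.X} and~\ref{lem:splitting.v1}) is the set $\BB$ of atoms reachable from a single well-chosen vertex of $\XX=\At{I}\setminus\HH$ in the ground graph restricted to $\XX$: separability guarantees some vertex whose reachability closure lies entirely inside one block $\At{I,\pdef_i}$, and that closure --- not the block --- has no edges into $\XX\setminus\BB$, so $\tuple{\At{I}\setminus\BB,I}$ still satisfies $\EM{I,\At{I,\pdef_i}}$, must falsify $\Gamma$ by $\At{I,\pdef_i}$-stability, and the falsification propagates to $\tuple{\HH,I}$ by Lemma~\ref{lem:splitting.aux}. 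With that repair your proof is the paper's.
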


Continuing with our meta encoding example, in addition to~$\pdefc_1$  and~$\pdefc_2$, we define ~$\pdefc_3$ to be the intensionality statement  where~$\mathit{head}/2$, $\mathit{body}/2$ are intensional and~$\mathit{holds}/1$ is extensional. Let~${\pdefc}$ be $\pdefc_1 \sqcup \pdefc_2 \sqcup \pdefc_3$. It is easy to see that $\{ \pdefc_1, \pdefc_2, \pdefc_3 \}$ is a partition of~${\pdefc}$.
Let~$\Gamma_1$ and~$\Gamma_2$ be the singleton theories consisting of~\eqref{eq:metaencoding.positive.r1} and~\eqref{eq:metaencoding.positive.r2}, respectively;
let~$\Gamma_3$ consist of facts in~(\ref{eq:metaencoding.r1}-\ref{eq:metaencoding.r2}), and let~$\Gamma$ denote $\Gamma_1\cup\Gamma_2\cup\Gamma_3$.
Partition~$\pdefs = \{ \pdefc_1, \pdefc_2, \pdefc_3 \}$ is separable on~$\G_{\pdefs,\Psi_3}(\Gamma)$.
We also can see that~$\Gamma_i$ is negative on~$\pdefc_j$ under context~$\Psi_3$ for all~$j \neq i$.
Therefore,
by Theorem~\ref{thm:theory.splitting}, the $\pdefc$\nobreakdash-stable models of~$\Gamma$ are the models of~$\Psi_3$ that are $\pdefc_i$\nobreakdash-stable models of~$\Gamma_i$ for all~$i \in \{1,2,3 \}$.
The intent of context~$\Psi_3$ is to carry information from one part of the theory into another.
In our example all models of~$\Psi_3$ are~$\pdefc_3$\nobreakdash-stable models of~$\Gamma_3$.
Hence, we can simply say that the $\pdefc$\nobreakdash-stable models of~$\Gamma$ are the interpretations that are $\pdefc_i$\nobreakdash-stable models of~$\Gamma_i$ for all~$i \in \{1,2,3 \}$.

It is worth noting that the empty theory approximates any theory.
When such theory is considered in the presented theorem, it more closely resembles the Splitting Theorem by~\citet{felelipa09a}.
In general, we can use the theory itself, its completion~\cite{feleli11a,fanlif22a} or the completion of a part of it as a ``more precise'' approximator.

\section{Conclusions}
The concept of intensionality statements introduced here provides us with a new granularity on considering semantics of  logic programs and its subcomponents.
It also paves a way to the refinement of earlier versions of the Spltting method.
We  generalized the conditions under which this method can be applied to first\nobreakdash-order theories and show how the resulting approach covers more programs commonly used in practice. 
This generalization comes at a price.
The conditions of the Splitting Theorem by~\citet{felelipa09a} are syntactic, while our conditions rely on verification of semantic properties.
In fact, deciding whether there is an edge in our dependency graph is, in general, undecidable.
For instance, a program containing rules \hbox{$p \leftarrow q, t = 0$} and~$q \leftarrow p$, where~$t$ is a polynomial, can be split into subprograms each containing of one of these rules only if the Diophantine equation in the body has no solutions.
However, we illustrated that for many practical problems the Splitting result of this paper is applicable. 
In the future, we will investigate the  possibility to utilize  first\nobreakdash-order theorem provers for 
checking the required semantic conditions.

\section*{Acknowledgements}

We are thankful to Vladimir Lifschitz, Torsten Schaub and the anonymous reviewers for their comments on earlier versions of this draft.
The first author is partially supported by the Nebraska EPSCoR 95-3101-0060-402 grant.

\bibliography{krr,bib,procs}

\newpage\section{Locally Splittable Theories} \label{sec:splitting.local}

This section introduces auxiliary definitions and results that will allow us to prove the Splitting Theorem stated in the main part of the paper.

Given a subset~$\AAA$ of~$\At{I}$,
we say that~$I$ is a $\AAA$\nobreakdash-\emph{stable model} of~$\Gamma$ if it is a stable model of~$\Gamma \cup \EM{I,\AAA}$
where~$\EM{I,\AAA}$ is the set containing disjunction~$p{(\boldd^*) \vee \neg p(\boldd^*)}$ for every ground atom~$p(\boldd^*)$ in~$\At{I} \setminus \AAA$.
We can relate $\AAA$\nobreakdash-stable models with $\pdef$\nobreakdash-stable models as follows.
By~$\At{I,\pdef}$ we denote the set of ground atoms of the form~$p(\boldd^*)$ such that
\begin{gather*}
  I \models p(\boldd^*) \wedge \lambda^p(\boldd^*)
\end{gather*}
where~$p$ is a predicate symbol and $\boldd$ is a tuple of elements of domains of~$I$.

\begin{proposition}\label{prop:grounding.intensional.stable.models}
$I$ is a $\pdef$-stable model of~$\Gamma$ iff $I$ is a $\At{I,\pdef}$-stable model of~$\Gamma$.
\end{proposition}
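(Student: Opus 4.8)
The plan is to show that the two auxiliary construct $\EM{\pdef}$ (the set of universally quantified excluded-middle sentences indexed by predicate symbols) and $\EM{I,\AAA}$ for $\AAA = \At{I,\pdef}$ (the set of ground excluded-middle disjunctions for atoms outside $\AAA$) behave identically as far as HT-interpretations with fixed ``there''-world $I$ are concerned. Since both notions of stability quantify over all proper subsets $\HH$ of $\At{I}$ and ask whether $\langle \HH, I\rangle$ fails to satisfy the program together with the relevant excluded-middle axioms, it suffices to prove: for every $\HH \subseteq \At{I}$,
\begin{gather*}
\langle \HH, I\rangle \modelsht \EM{\pdef}
\quad\text{iff}\quad
\langle \HH, I\rangle \modelsht \EM{I,\AAA}, \qquad \AAA = \At{I,\pdef}.
\end{gather*}
Once this biconditional is established, the proposition follows immediately: $I$ is a $\pdef$-stable model of $\Gamma$ iff $I$ is a stable model of $\Gamma \cup \EM{\pdef}$ iff (for every proper $\HH \subset \At{I}$, $\langle\HH,I\rangle \not\modelsht \Gamma \cup \EM{\pdef}$) iff (using the biconditional and the fact that $I$ itself models both $\Gamma$ and the axioms) the same with $\EM{I,\AAA}$ in place of $\EM{\pdef}$, which is exactly the definition of $I$ being an $\At{I,\pdef}$-stable model of $\Gamma$.

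First I would unfold the semantics of a single conjunct of $\EM{\pdef}$, namely $\forall \boldX(\neg\pdef^p(\boldX) \to p(\boldX)\vee\neg p(\boldX))$, under $\langle\HH,I\rangle$. Using the HT clauses for $\forall$, $\to$, $\vee$ and $\neg$, and the fact that $\pdef^p$ mentions only predicates that are extensional in $\pdef$ (hence purely a property of $I$, cf. the second defining condition of an intensionality statement together with, in the limiting reading, the clause for $=$), this conjunct holds at $\langle\HH,I\rangle$ iff for every tuple $\boldd$ of domain elements with $I \not\models \pdef^p(\boldd^*)$ we have $\langle\HH,I\rangle\modelsht p(\boldd^*)\vee\neg p(\boldd^*)$, i.e. $p(\boldd^*)\in\HH$ or $p(\boldd^*)\notin\At{I}$ — equivalently, $p(\boldd^*)\in\At{I}$ implies $p(\boldd^*)\in\HH$. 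Ranging over all $p$, this says precisely that $\HH \supseteq \{p(\boldd^*)\in\At{I} \mid I\not\models\pdef^p(\boldd^*)\} = \At{I}\setminus\At{I,\pdef} = \At{I}\setminus\AAA$. On the other side, $\langle\HH,I\rangle\modelsht\EM{I,\AAA}$ means $\langle\HH,I\rangle\modelsht p(\boldd^*)\vee\neg p(\boldd^*)$ for every $p(\boldd^*)\in\At{I}\setminus\AAA$, which by the same reasoning is exactly $\HH\supseteq\At{I}\setminus\AAA$. The two conditions coincide, so the biconditional holds. I would also record the easy side remarks that $(\At{I},I)\modelsht\EM{\pdef}$ and $(\At{I},I)\modelsht\EM{I,\AAA}$ (both trivially true), so that $I$ being a model of $\Gamma$ suffices to make $I$ a model of $\Gamma$ together with either axiom set, keeping the ``$I$ itself'' part of the stable-model definition symmetric.

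The main obstacle, and the only place that needs care, is justifying that the satisfaction of $\pdef^p(\boldX)$ under an HT-interpretation $\langle\HH,I\rangle$ depends only on $I$ and not on $\HH$. This is where the second bullet in the definition of an intensionality statement — every predicate $q/m$ occurring in $\pdef^p$ has $\pdef(q/m)\equiv\bot$ — is doing real work; but note $\pdef^p$ may itself be a complex first-order formula with quantifiers, implications and equalities, so ``depends only on $I$'' is not literally the statement that $\pdef^p$ is negated or implication-free. The clean way around this is: I would first prove a small lemma that if $F$ is any formula all of whose predicate symbols $q$ satisfy $p(\boldd^*)\notin\HH \iff p(\boldd^*)\notin\At{I}$ being irrelevant — more precisely, if $\HH$ agrees with $\At{I}$ on all atoms whose predicate occurs in $F$ — then $\langle\HH,I\rangle\modelsht F \iff I\models F$; and then observe that in our setting we are evaluating $\EM{\pdef}$ at an arbitrary $\HH$, so this agreement need not hold for $\HH$ itself. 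The correct fix is instead to use clause (ii) of the HT semantics of implication together with a structural induction showing that for formulas $F$ built only from extensional-in-$\pdef$ predicates, $\langle\HH,I\rangle\modelsht F$ is equivalent to $I\models F$ whenever $\HH$ happens to contain all of $\At{I}$ restricted to those predicates — but since we cannot assume that, I will instead push the excluded middle axioms down: I would argue directly at the level of the disjunction $p(\boldX)\vee\neg p(\boldX)$ guarded by $\neg\pdef^p(\boldX)$, using that $\langle\HH,I\rangle\modelsht F\to G$ requires $I\models F\to G$, so the here-world obligation $p(\boldd^*)\in\HH$ or $p(\boldd^*)\notin\At{I}$ is triggered for exactly those $\boldd$ with $I\not\models\pdef^p(\boldd^*)$, regardless of how $\langle\HH,I\rangle$ evaluates $\pdef^p$ — because if $\langle\HH,I\rangle\not\modelsht\neg\pdef^p(\boldd^*)$ the implication is vacuously satisfied at the here-world, and if $\langle\HH,I\rangle\modelsht\neg\pdef^p(\boldd^*)$ then in particular $I\not\models\pdef^p(\boldd^*)$, so we need the disjunction; conversely if $I\models\pdef^p(\boldd^*)$ then $I\not\models\neg\pdef^p(\boldd^*)$, so by clause (ii) no obligation arises. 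This makes the argument go through without ever needing the $\HH$-independence of $\pdef^p$ as a standalone fact, and that is the subtle step I would write out most carefully.
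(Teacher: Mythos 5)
Your proposal is correct and follows essentially the same route as the paper's proof: both reduce the claim to showing that $\EM{\pdef}$ and $\EM{I,\At{I,\pdef}}$ have the same HT\nobreakdash-models of the form $\tuple{\HH,I}$, and both exploit the classical (``there''-world) condition in the HT clause for implication to conclude that the guard $\neg\pdef^p(\boldd^*)$ imposes an obligation on~$\HH$ exactly when $I\not\models\pdef^p(\boldd^*)$, independently of how $\tuple{\HH,I}$ evaluates~$\pdef^p$. The one step you leave implicit --- that $I\not\models\pdef^p(\boldd^*)$ forces $\tuple{\HH,I}\modelsht\neg\pdef^p(\boldd^*)$, which rests on the persistence property (Proposition~\ref{prop:properties}(a)) --- is glossed over to the same degree in the paper's own case analysis, so no substantive gap remains.
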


\begin{proof}
  It is enough to show that~$\tuple{\HH,I} \modelsht \EM{\pdef}$ iff~$\tuple{\HH,I} \modelsht \EM{\At{I,\pdef}}$.
  Furthermore, for this is enough to show that, for every predicate symbol~$p$,
  \begin{gather}
    \tuple{\HH,I} \modelsht \ \ \forall\boldX \left( \neg \pdef^p(\boldX) \to p(\boldX) \vee \neg p(\boldX) \right)
    \label{eq:1:prop:grounding.intensional.stable.models}
  \end{gather}
  iff 
  \begin{gather}
  \text{$\tuple{\HH,I} \modelsht p(\boldd^*) \vee \neg p(\boldd^*)$ for every~$p(\boldd^*)$ in~$\At{I} \setminus \At{I,\pdef}$.}
  \label{eq:1b:prop:grounding.intensional.stable.models}
  \end{gather}
  On the one hand, condition~\eqref{eq:1:prop:grounding.intensional.stable.models} holds iff
  \begin{gather*}
    \tuple{\HH,I} \modelsht \ \ \neg \pdef^p(\boldd^*) \to p(\boldd^*) \vee \neg p(\boldd^*)
  \end{gather*}
  for every tuple of domain elements~$\boldd$.
  This holds iff one of the following conditions hold:
  \begin{align}
    I &\modelsht \pdef^p(\boldd^*)
    \label{eq:2:prop:grounding.intensional.stable.models}
    \\
    \tuple{\HH,I} &\modelsht p(\boldd^*) \vee \neg p(\boldd^*)
    \label{eq:3:prop:grounding.intensional.stable.models}
  \end{align}
  for every tuple of domain elements~$\boldd$.
  Pick some~$p(\boldd^*)$ in~$\At{I}$.
  We proceed by cases. \emph{Case 1.}
  Assume that~$I \not\models p(\boldd^*)$.
  Then, $\tuple{\HH,I} \modelsht \neg p(\boldd^*)$ and this implies~\eqref{eq:3:prop:grounding.intensional.stable.models}.
  Hence, the result holds.
  \emph{Case~2.} Assume that~$I \not\models \lambda^p(\boldd^*)$.
  Then, condition~\eqref{eq:1:prop:grounding.intensional.stable.models} holds iff~\eqref{eq:3:prop:grounding.intensional.stable.models} and the result holds.
  \emph{Case~3.}  Assume that~$I \models p(\boldd^*) \wedge \lambda^p(\boldd^*)$.
  This implies that $p(\boldd^*)$ belongs to~$\At{I,\pdef}$ and, thus, condition~\eqref{eq:1b:prop:grounding.intensional.stable.models} vacuously follow.
  This also implies~\eqref{eq:2:prop:grounding.intensional.stable.models} and, thus, condition~\eqref{eq:1:prop:grounding.intensional.stable.models} follows.
\end{proof}

%



For a
sentence~$F$ over~$\sigma$, an interpretation~$I$ of~$\sigma$ and a subset~$\AAA$ of~$\At{I}$, 
we define the set $\Pos{I}{F}$ of \emph{(strictly) positive atoms of~$F$
with respect to~$I$}.  Elements of this set are atoms of
the signature $\sigma^I$ that have the form $p(\boldd^*)$.\footnote{For
  the definitions of the signature~$\sigma^I$ and names~$d^*$,
  see Section~\ref{sec:smodels}.}
This set is defined recursively, as follows. If $F$
is not satisfied by~$I$
then $\Pos{I}F = \emptyset$ (note how $\bot$ is such a formula). Otherwise,
\begin{itemize}
\item $\Pos{I}{p(\boldt)} = \{p((\boldt^I)^*)\}$;
\item $\Pos{I}{F_1 \wedge F_2} = \Pos{I}{F_1 \vee F_2} = \Pos{I}{F_1} \cup
  \Pos{I}{F_2}$;
    \item $\Pos{I}{F_1 \to F_2} = \begin{cases}
      \Pos{I}{F_2}  &\text{if } I \models F_1
      \\
      \emptyset &\text{otherwise}
    \end{cases}$;
    \item $\Pos{I}{\forall X F(X)} = \Pos{I}{\exists X F(X)} =
      \bigcup_{d\in|I|^s} \Pos{I}{F(d^*)}$ if~$X$ is a variable of sort~$s$.
\end{itemize}
We also define the sets~$\Pnn{I}{F}$ and~$\Nnn{I}{F}$ of \emph{positive nonnegated and negative nonnegated atoms of~$F$ with respect to~$I$}.
These sets are defined recursively, as follows.
If $F$ is not satisfied by~$I$
then $\Pnn{I}{F} = \Nnn{I}{F} = \emptyset$. Otherwise, 
\begin{itemize} 
\item $\Pnn{I}{p(\boldt)} = \{p((\boldt^I)^*)\}$;
\item $\Pnn{I}{F_1 \wedge F_2} = \Pnn{I}{F_1 \vee F_2} = \\ \Pnn{I}{F_1} \cup
\Pnn{I}{F_2}$;
\item $\Pnn{I}{F_1 \to F_2} =\\  \begin{cases}
  \Nnn{I}{F_1} \cup \Pnn{I}{F_2} &\text{if } I \models F_1
  \\
  \emptyset &\text{otherwise}
\end{cases}$;
\item $\Pnn{I}{\forall X F(X)} = \Pnn{I}{\exists X F(X)} = \\
\bigcup_{d\in|I|^s} \Pnn{I}{F(d^*)}$ if~$X$ is a variable of sort~$s$.

\item $\Nnn{I}{p(\boldt)} = \emptyset$;
\item $\Nnn{I}{F_1 \wedge F_2} = \Nnn{I}{F_1 \vee F_2} = \\ \Nnn{I}{F_1} \cup
\Nnn{I}{F_2}$;
\item $\Nnn{I}{F_1 \to F_2} = \\  \begin{cases}
  \Pnn{I}{F_1} \cup \Nnn{I}{F_2} &\text{if } I \models F_1
  \\
  \emptyset &\text{otherwise}
\end{cases}$;
\item $\Nnn{I}{\forall X F(X)} = \Nnn{I}{\exists X F(X)} = \\
\bigcup_{d\in|I|^s} \Nnn{I}{F(d^*)}$ if~$X$ is a variable of sort~$s$.
\end{itemize}
Recall that a strictly positive occurrence of an implication~$F_1 \to F_2$ in a formula~$F$ is called
a \emph{rule} of~$F$.
The \emph{rules} of a theory~$\Gamma$ are all the rules of its sentences.
An \emph{instance} of a rule $F_1 \to F_2$ for
an interpretation~$I$ of~$\sigma$ is a sentence over $\sigma^I$ that can
be obtained from $F_1 \to F_2$ by substituting names~$d^*$ for its free variables.

Given a set~$\Gamma$ of sentences, the \emph{graph of positive dependencies with respect to an interpretation~$I$ and set~$\AAA$ of  atoms},
$\G_{I,\AAA}(\Gamma)$ is the directed graph defined as follows:
\begin{itemize}
  \item Its vertices are elements of~$\AAA$.
  \item It has an edge from~$p(\boldd^*)$ to~$q(\bolde^*)$ iff, for some instance~$F_1 \to F_2$ of a rule of~$\Gamma$,
  $p(\boldd^*) \in \Pos{I}{F_2}$ and
  $q(\bolde^*) \in \Pnn{I}{F_1}$.
\end{itemize}
If~$\Gamma$ is a singleton~$\{F\}$, then we write~$\G_{I,\AAA}(F)$ instead of~$\G_{I,\AAA}(\{F\})$.
We say that a partition~$\{ \AAA_1, \AAA_2 \}$ of a set of atoms~$\AAA \subseteq \At{I}$ is \emph{separable} on~$\G_{I,\AAA}(\Gamma)$ if 
%
every infinite walk~$v_1,v_2,\dotsc$ of~$\G_{I,\AAA}(\Gamma)$ visits almost one~$\AAA_i$ infinitely many times, that is, 
there is~$i \in \{1,2\}$ such that
$\{ k \mid v_k \in \AAA_i \}$ is finite.

A formula~$F$ is said to be \emph{negative} on an interpretation~$I$ and a subset~$\AAA$ of~$\At{I}$ if~$\Pos{I}{F}$ and~$\AAA$ are disjoint.
A set of sentences~$\Gamma$ is said to be \emph{negative} on an interpretation~$I$ and a subset~$\AAA$ if it is negative on all its sentences.

\subsection{Proof of the Splitting Lemma}

\begin{lemma}\label{lem:splitting.if}
Let~$\Gamma$ be a first\nobreakdash-order theory, 
$I$ be an interpretation
and~$\AAA$ be a subset of~$\At{I}$.
Let~$\{ \AAA_1,\AAA_2\}$ be a partition of~$\AAA$.
If $I$ is a $\AAA$\nobreakdash-stable model of~$\Gamma$, 
then $I$ is a $\AAA_i$-stable model of~$\Gamma$ for all~$1 \leq i \leq 2$.
\end{lemma}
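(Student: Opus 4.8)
The plan is to reduce $\AAA$-stability and $\AAA_i$-stability to a common combinatorial statement about here-worlds, and then exploit the inclusion $\AAA_i \subseteq \AAA$, which holds because $\{\AAA_1,\AAA_2\}$ is a partition of $\AAA$. First I would record the here-and-there reading of the excluded-middle axioms: for any subset~$\BB$ of~$\At{I}$ and any $\HH \subseteq \At{I}$, one has $\langle \HH, I\rangle \modelsht \EM{I,\BB}$ if and only if $\At{I}\setminus\BB \subseteq \HH$. This follows directly from the clauses of~$\modelsht$ for disjunction, negation (i.e.\ implication into~$\bot$), and atoms, using that $\langle\HH,I\rangle \modelsht \neg p(\boldd^*)$ requires $p(\boldd^*)\notin\At{I}$, so for $p(\boldd^*)\in\At{I}\setminus\BB$ the disjunct $\neg p(\boldd^*)$ can never be satisfied at~$\HH$ and hence $p(\boldd^*)$ must lie in~$\HH$.

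Second, I would note that each member $p(\boldd^*)\vee\neg p(\boldd^*)$ of $\EM{I,\BB}$ is a classical tautology, so $I\models\EM{I,\BB}$ always; consequently ``$I$ is a model of $\Gamma\cup\EM{I,\BB}$'' is equivalent to ``$I\models\Gamma$'', independently of~$\BB$. Combining this with the previous observation and the definition of stable model, ``$I$ is a $\BB$-stable model of~$\Gamma$'' becomes equivalent to: $I\models\Gamma$, and for every~$\HH$ with $\At{I}\setminus\BB \subseteq \HH \subsetneq \At{I}$ we have $\langle\HH,I\rangle\not\modelsht\Gamma$.

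Third, I would instantiate this equivalence with $\BB=\AAA$ (the hypothesis) and with $\BB=\AAA_i$ (the goal) and finish in one line. Since $\AAA_i\subseteq\AAA$, we have $\At{I}\setminus\AAA \subseteq \At{I}\setminus\AAA_i$; hence any~$\HH$ with $\At{I}\setminus\AAA_i \subseteq \HH \subsetneq \At{I}$ also satisfies $\At{I}\setminus\AAA \subseteq \HH \subsetneq \At{I}$, so by the assumed $\AAA$-stability $\langle\HH,I\rangle\not\modelsht\Gamma$. Together with $I\models\Gamma$ (which transfers because it does not depend on~$\BB$), this is exactly $\AAA_i$-stability.

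There is no real obstacle here: this is the ``easy'' half of the splitting result, and the only point needing a little care is the here-and-there semantics of $p(\boldd^*)\vee\neg p(\boldd^*)$ at a world~$\HH$ and the observation that these axioms are classically valid, so that shrinking~$\BB$ only tightens the constraint on candidate here-worlds and never affects whether $I$ is a (classical) model. It is worth noting in passing that the argument uses nothing about $\{\AAA_1,\AAA_2\}$ beyond $\AAA_i\subseteq\AAA$, so it applies verbatim to a partition of~$\AAA$ into any finite number of parts; the genuinely delicate converse direction, where separability and negativity of the parts are required, is handled elsewhere.
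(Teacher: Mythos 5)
Your proof is correct and follows essentially the same route as the paper: both arguments come down to the fact that $\AAA_i \subseteq \AAA$ makes the excluded-middle requirement only more restrictive (the paper phrases this as $\EM{I,\AAA_i} \supseteq \EM{I,\AAA}$ and uses monotonicity of non-satisfaction, while you unpack $\langle\HH,I\rangle\modelsht\EM{I,\BB}$ into the equivalent condition $\At{I}\setminus\BB\subseteq\HH$ and compare the resulting families of candidate here-worlds). Your version is somewhat more explicit about the semantics of the excluded-middle axioms, but the underlying idea is identical.
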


\begin{proof}
Assume that~$I$ is a $\AAA$\nobreakdash-stable model of~$\Gamma$ and pick any strict subset~$\HH$ of~$\AAA_i$.
Since~${\AAA_i \subseteq \AAA}$, it follows that~$\HH$ is also a strict subset of~$\AAA$.
Furthermore, since~$I$ is a $\AAA$\nobreakdash-stable model of~$\Gamma$, 
it follows that~${\tuple{\HH,I} \not\modelsht \Gamma \cup \EM{I,\AAA}}$.
Furthermore, ${\AAA_i \subseteq \AAA}$ implies~${\EM{I,\AAA_i} \supseteq \EM{I,\AAA}}$
and, thus, this implies~${\tuple{\HH,I} \not\modelsht \Gamma \cup \EM{I,\AAA_i}}$.
Consequently, $I$ is a $\AAA_i$\nobreakdash-stable model of~$\Gamma$.
\end{proof}

The following result by~\citet{fanlif22a} will be useful in the next Lemma.

\begin{proposition}\label{prop:properties}
\phantom{.}
\begin{itemize}
\item[(a)]
If $\langle\HH,I \rangle\modelsht F$ then
\hbox{$I \models F$}.
\item[(b)]
For any sentence~$F$ that does not contain intensional symbols,
$\langle\HH,I \rangle\modelsht F$ iff ${I\models F}$.
\item[(c)]
For any subset~$\SSS$
of~$\HH$ such that the predicate symbols of its members do not occur in $F$,
$\langle\HH \setminus \SSS, I\rangle\modelsht F$
iff  $\langle\HH, I\rangle\modelsht F$.
\end{itemize}
\end{proposition}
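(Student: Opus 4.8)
The plan is to prove each of (a), (b), (c) by structural induction on the formula~$F$, with $\bot$ and the atomic formulas as base cases and the binary connectives and quantifiers as the inductive steps; the three arguments share the same skeleton and differ only in the base case and in the way the induction hypothesis is used in the implication step. For (a), the base cases are immediate: $\bot$ is satisfied by no HT-interpretation; if $\tuple{\HH,I}\modelsht p(\boldt)$ then $p((\boldt^I)^*)\in\HH\subseteq\At{I}$, hence $I\models p(\boldt)$; and both $\tuple{\HH,I}\modelsht t_1=t_2$ and $I\models t_1=t_2$ reduce to $t_1^I=t_2^I$. The cases $F_1\wedge F_2$, $F_1\vee F_2$, $\forall X\,G(X)$ and $\exists X\,G(X)$ follow from the induction hypothesis applied to the immediate subformulas (to the instances $G(d^*)$ for the quantifiers). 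The implication case $F_1\to F_2$ needs no induction hypothesis at all, since clause~(ii) of the definition of $\tuple{\HH,I}\modelsht F_1\to F_2$ literally asserts $I\models F_1\to F_2$.

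For (b), the direction $\tuple{\HH,I}\modelsht F\Rightarrow I\models F$ is the instance of (a), so only the converse $I\models F\Rightarrow\tuple{\HH,I}\modelsht F$ requires a separate induction. The single new ingredient is the atomic case: since $F$ contains no intensional predicate symbol, the predicate~$p$ of any atom $p(\boldt)$ occurring in $F$ is non-intensional, and on non-intensional predicates $\HH$ and $I$ agree, so $I\models p(\boldt)$ gives $p((\boldt^I)^*)\in\HH$ and hence $\tuple{\HH,I}\modelsht p(\boldt)$. In the implication step clause~(ii) is the hypothesis $I\models F_1\to F_2$ itself, and clause~(i) is verified by cases on whether $I\models F_1$: if not, the contrapositive of (a) gives $\tuple{\HH,I}\not\modelsht F_1$; if so, then $I\models F_2$ and the induction hypothesis gives $\tuple{\HH,I}\modelsht F_2$. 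The quantifier steps go through because $G(d^*)$ also contains no intensional symbol.

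For (c), I would prove the biconditional $\tuple{\HH\setminus\SSS,I}\modelsht F\Leftrightarrow\tuple{\HH,I}\modelsht F$ by induction on $F$, carrying along the standing assumption that no predicate symbol of any member of~$\SSS$ occurs in $F$ (this is inherited by every subformula of $F$ and by every instance $G(d^*)$). The only case that actually consults the here-world is the atomic one $F=p(\boldt)$: there $p$ occurs in $F$, so no member of~$\SSS$ has predicate~$p$, whence $p((\boldt^I)^*)\notin\SSS$ and membership of $p((\boldt^I)^*)$ in $\HH$ and in $\HH\setminus\SSS$ coincide. All other cases follow from the induction hypothesis applied to the immediate subformulas; the one thing to notice is that in the implication case clause~(ii) refers to $I$, not to $\HH$, and is therefore unaffected by replacing $\HH$ by $\HH\setminus\SSS$.

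None of this is deep; the only points that need a little care are the bookkeeping in the implication step — for (b) and (c) one must invoke the right form of the induction hypothesis and keep clause~(i) and clause~(ii) of the here-and-there semantics of~$\to$ separate — and, for (b), making explicit the fact that an HT-interpretation agrees with its there-component on the non-intensional predicate symbols, which is exactly what powers the atomic base case. I would expect the implication/negation case of (b) to be the most error-prone step to write out in full.
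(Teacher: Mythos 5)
Your proof is correct, but there is nothing in the paper to compare it against: Proposition~\ref{prop:properties} is imported verbatim from \citet{fanlif22a} and stated without proof (only part~(a) is actually invoked later, in Case~4.2 of Lemma~\ref{l1}). Your structural induction is the standard argument and all the cases check out; in particular you correctly isolate the two places where anything nontrivial happens, namely that clause~(ii) of the $\modelsht$-semantics of implication hands you $I\models F_1\to F_2$ for free in part~(a), and that in part~(c) only the atomic case consults the here-world, where the hypothesis on $\SSS$ guarantees $p((\boldt^I)^*)\notin\SSS$. One caveat worth flagging, which concerns the imported statement rather than your argument: in this paper's rendering, an HT-interpretation is a pair $\langle\HH,I\rangle$ with $\HH$ an \emph{arbitrary} subset of $\At{I}$, and no constraint ties $\HH$ to $I$ on extensional predicates. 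The fact you invoke in the atomic case of~(b) --- that ``on non-intensional predicates $\HH$ and $I$ agree'' --- is therefore not derivable from the definitions given here; it is a convention of the source framework of \citet{fanlif22a}, where the here-world fixes the extensional part to agree with $I$. You are right that this is exactly the fact that powers~(b), and making it explicit is the correct move; just be aware that, read strictly against this paper's definitions, part~(b) would need that agreement added as a hypothesis.
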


\begin{lemma}\label{l1}
    For any HT\nobreakdash-interpretation $\langle \HH,I \rangle$ and any
    sentence~$F$ over~$\sigma^I$, if $I \models F$ and 
  $\Pos{I}F\subseteq \HH$
    then $\langle \HH,I \rangle\modelsht F.$
\end{lemma}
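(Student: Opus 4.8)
The plan is to prove the statement by induction on the structure of the sentence~$F$. Since the here-and-there clauses for quantifiers and the recursive clauses defining~$\Pos{I}{\cdot}$ for quantifiers refer to instances~$G(d^*)$, which contain exactly as many connectives and quantifiers as~$G(X)$, the induction is well-founded if we take the number of occurrences of connectives and quantifiers in~$F$ as the induction parameter. For the base cases: if~$F$ is~$\bot$ the claim holds vacuously, because~$I \not\models \bot$; if~$F$ is~$t_1 = t_2$, then~$I \models F$ gives~$t_1^I = t_2^I$, hence~$\tuple{\HH,I} \modelsht F$ by definition; and if~$F$ is an atom~$p(\boldt)$, then~$\Pos{I}{F} = \{p((\boldt^I)^*)\}$, so the hypothesis~$\Pos{I}{F} \subseteq \HH$ says precisely that~$p((\boldt^I)^*) \in \HH$, which is exactly what~$\tuple{\HH,I} \modelsht p(\boldt)$ requires.

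For the inductive step, the conjunction, disjunction, and quantifier cases are routine bookkeeping: in each of them~$\Pos{I}{F}$ is a union of the sets~$\Pos{I}{\cdot}$ of the immediate subformulas (or their instances), so~$\Pos{I}{F} \subseteq \HH$ transfers the inclusion hypothesis to each subformula that~$I$ satisfies, and the corresponding clause of~$\modelsht$ only asks us to verify those subformulas; the induction hypothesis then closes each case. For disjunction and the existential quantifier one first uses~$I \models F$ to pick a disjunct (resp.\ a witness~$d$) satisfied by~$I$, and applies the induction hypothesis to it.

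The one case that needs attention is implication,~$F = F_1 \to F_2$. Here~$\tuple{\HH,I} \modelsht F_1 \to F_2$ amounts to (i)~$\tuple{\HH,I} \not\modelsht F_1$ or~$\tuple{\HH,I} \modelsht F_2$, together with (ii)~$I \models F_1 \to F_2$; condition~(ii) is exactly the assumption~$I \models F$. For~(i), I would assume~$\tuple{\HH,I} \modelsht F_1$ (otherwise we are done at once) and invoke Proposition~\ref{prop:properties}(a) to conclude~$I \models F_1$; combined with~$I \models F_1 \to F_2$ this gives~$I \models F_2$, and then, by the defining clause of~$\Pos{I}{\cdot}$ on implications with~$I \models F_1$, we have~$\Pos{I}{F_1 \to F_2} = \Pos{I}{F_2} \subseteq \HH$, so the induction hypothesis applied to~$F_2$ yields~$\tuple{\HH,I} \modelsht F_2$.

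I do not expect any genuinely hard step. The only point where one must be careful is that the here-and-there clause for implication carries the extra classical condition~(ii) and a side condition on the antecedent ($I \models F_1$) that simultaneously governs the definition of~$\Pos{I}{F_1 \to F_2}$ and licenses the use of Proposition~\ref{prop:properties}(a); everything else is a direct unfolding of the two recursive definitions.
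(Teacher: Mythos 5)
Your proposal is correct and follows essentially the same route as the paper's proof: structural induction, with the implication case resolved by combining Proposition~\ref{prop:properties}(a) with the clause $\Pos{I}{F_1\to F_2}=\Pos{I}{F_2}$ when $I\models F_1$ (you phrase the case split contrapositively to the paper's, but the argument is the same). Your explicit treatment of $\bot$ and $t_1=t_2$ and your remark on the well-foundedness of the induction over quantifier instances are minor additions the paper leaves implicit.
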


\begin{proof}
By induction on the size of~$F$.
\emph{Case~1:}~$F$ is $p(\boldt)$.
Then the assumption
$\Pos{I}F\subseteq \HH$ and the
claim $\langle\HH,I\rangle\modelsht F$
turn into the condition $p((\boldt^I)^*)\in \HH$.
\emph{Case~2:}~$F$ is $F_1\land F_2$.  Then from the assumption
${I\models F}$ we conclude that ${I\models F_i}$
for $i=1,2$.  On the other hand,
$$\Pos{I}{F_i}\subseteq\Pos{I}F\subseteq \HH.$$
By the induction hypothesis, it
follows that \hbox{$\langle\HH,I\rangle\modelsht F_i$},
and consequently ${\langle\HH,I\rangle\modelsht F}$.
\emph{Case~3:}~$F$ is \hbox{$F_1\lor F_2$}.  Similar to Case~2.
\emph{Case~4:}~$F$ is $F_1\to F_2$.
Since ${I\models F_1\to F_2}$,
we only need to check that
${\langle\HH,I\rangle \not\modelsht F_1}$
or
${\langle\HH,I\rangle\modelsht F_2}$.
\emph{Case~4.1:} $I\models F_1$.
Since $I\models F_1\to F_2$, it follows that
\hbox{$I\models F_2$}.
On the other hand,
$$\Pos{I}{F_2}=\Pos{I}F\subseteq \HH.$$
By the induction hypothesis, it follows that
$\langle\HH,I\rangle\modelsht F_2$.
\emph{Case~4.2:} $I\not\models F_1$.
By Proposition~\ref{prop:properties}(a), it follows that
$\langle\HH,I\rangle\not\modelsht F_1$.
\emph{Case~5:} $F$ is $\forall X\,G(X)$, where~$X$ is a variable of
sort~$s$.  Then, for every element~$d$ of~$|I|^s$, $I\models G(d^*)$
and
$$\Pos{I}{G(d^*)}\subseteq \Pos{I}F\subseteq \HH.$$
By the induction hypothesis, it follows that
\hbox{$\langle\HH,I\rangle\modelsht G(d^*)$}.  Consequently
$\langle\HH,I\rangle\modelsht \forall X\,G(X)$.
\emph{Case~6:} $F$ is $\exists X\,G(X)$.  Similar to Case~5.
\end{proof}

\begin{lemma}\label{lem:splitting.aux1}
If~$\AAA$ is disjoint from~$\Pos{I}{F}$ and~$I \models F$,
then~${\langle \At{I} \setminus \AAA,I \rangle\modelsht F}$.
\end{lemma}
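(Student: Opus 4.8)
The plan is to reduce this immediately to Lemma~\ref{l1} by choosing $\HH = \At{I} \setminus \AAA$. To invoke that lemma we need exactly two hypotheses: that $I \models F$, which is given, and that $\Pos{I}{F} \subseteq \At{I} \setminus \AAA$. So the work is to establish this inclusion.

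The first step is the auxiliary observation that $\Pos{I}{F} \subseteq \At{I}$, i.e., every atom $p(\boldd^*)$ that is strictly positive in $F$ with respect to $I$ is in fact satisfied by $I$. This is a routine induction on the structure of $F$ following the recursive definition of $\Pos{I}{\cdot}$: if $I \not\models F$ then $\Pos{I}{F} = \emptyset$ and there is nothing to check; in the base case $F = p(\boldt)$ with $I \models F$ we have $\Pos{I}{F} = \{p((\boldt^I)^*)\}$ and $I \models p((\boldt^I)^*)$; the conjunction, disjunction, and quantifier cases are unions of sets of the form $\Pos{I}{\cdot}$ of subformulas (or instances), so they follow from the induction hypothesis; and in the implication case $\Pos{I}{F_1 \to F_2}$ is either $\emptyset$ or $\Pos{I}{F_2}$, again covered by the induction hypothesis.

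The second step combines this with the hypothesis that $\AAA$ is disjoint from $\Pos{I}{F}$: since $\Pos{I}{F} \subseteq \At{I}$ and $\Pos{I}{F} \cap \AAA = \emptyset$, we obtain $\Pos{I}{F} \subseteq \At{I} \setminus \AAA$. Applying Lemma~\ref{l1} with $\HH = \At{I} \setminus \AAA$ then yields $\langle \At{I} \setminus \AAA, I \rangle \modelsht F$, which is the claim.

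There is essentially no hard part here; the only point that requires any care is the inclusion $\Pos{I}{F} \subseteq \At{I}$, and even that is immediate from the definition of $\Pos{I}{\cdot}$, since an atom can enter $\Pos{I}{F}$ only via a satisfied atomic subformula. If this fact is treated as already available (it is implicit in the remark that $\Pos{I}{F} = \emptyset$ whenever $I \not\models F$, together with the atomic clause), the proof collapses to a one-line invocation of Lemma~\ref{l1}.
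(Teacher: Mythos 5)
Your proof is correct and follows exactly the same route as the paper's: establish $\Pos{I}{F} \subseteq \At{I}$, intersect with the disjointness hypothesis to get $\Pos{I}{F} \subseteq \At{I} \setminus \AAA$, and invoke Lemma~\ref{l1} with $\HH = \At{I} \setminus \AAA$. The only difference is that you spell out the easy induction showing $\Pos{I}{F} \subseteq \At{I}$, which the paper takes for granted.
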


\begin{proof}
Since~$\AAA$ is disjoint from~$\Pos{I}{F}$
and~$\Pos{I}F \subseteq \At{I}$,
it follows that~$\Pos{I}{F} \subseteq \At{I} \setminus \AAA$.
Since~$I \models F$,
the result follows from Lemma~\ref{l1}.%
\end{proof}






\begin{lemma}\label{lem:splitting.aux2}
Let~$I$ be an interpretation, $\BB_1$ and~$\BB_2$ be subsets of~$\At{I}$ and~$F$ be a sentence.
\begin{enumerate}
\item If~$\BB_2$ is disjoint from~$\Pnn{I}{F}$ and~$\tuple{\HH_1, I} \modelsht F$, then~$\tuple{\HH_2, I} \modelsht F$,

\item If~$\BB_2$ is disjoint from~$\Nnn{I}{F}$ and~$\tuple{\HH_2, I} \modelsht F$, then~$\tuple{\HH_1, I} \modelsht F$,
\end{enumerate}
where~$\HH_1 = \At{I} \setminus \BB_1$ and~$\HH_2 = \At{I} \setminus (\BB_1 \cup \BB_2)$.
\end{lemma}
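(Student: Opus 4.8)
The plan is to prove items~1 and~2 \emph{simultaneously} by structural induction on the sentence~$F$; a single induction is needed because the two statements feed into each other in the case of implication. Throughout I would keep $I$, $\BB_1$, $\BB_2$ fixed, so that $\HH_1$ and $\HH_2$ are fixed as well, with $\HH_2 = \HH_1 \setminus \BB_2 \subseteq \HH_1 \subseteq \At{I}$; only $F$ varies in the induction. First I would dispose of the case $I \not\models F$: then $\Pnn{I}{F} = \Nnn{I}{F} = \emptyset$ by definition, while by Proposition~\ref{prop:properties}(a) neither $\tuple{\HH_1,I}$ nor $\tuple{\HH_2,I}$ satisfies~$F$, so both implications hold vacuously. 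Hence from now on I may assume $I \models F$ and use the recursive (``Otherwise'') clauses in the definitions of $\Pnn{I}{F}$ and $\Nnn{I}{F}$.

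The atomic cases are immediate. When $F$ is an equality atom, $\Pnn{I}{F} = \Nnn{I}{F} = \emptyset$ and the truth of $\tuple{\HH,I}\modelsht F$ does not depend on~$\HH$, so both items are trivial. When $F = p(\boldt)$ with $I \models F$, we have $\Pnn{I}{F} = \{p((\boldt^I)^*)\}$ and $\Nnn{I}{F} = \emptyset$: for item~1, $p((\boldt^I)^*) \in \HH_1$ together with $p((\boldt^I)^*) \notin \BB_2$ gives $p((\boldt^I)^*) \in \HH_1 \setminus \BB_2 = \HH_2$; item~2 follows from $\HH_2 \subseteq \HH_1$. The cases $F = F_1 \wedge F_2$, $F = F_1 \vee F_2$, $F = \forall X\,G(X)$ and $F = \exists X\,G(X)$ are routine: under $I \models F$ the sets $\Pnn{I}{F}$ and $\Nnn{I}{F}$ each include the corresponding set of every immediate subformula, while $\tuple{\HH,I}\modelsht F$ decomposes into HT\nobreakdash-satisfaction statements about those subformulas (choosing the witnessing disjunct or domain element in the disjunctive and existential cases), so the induction hypothesis for the \emph{same} item applies to each of them to conclude.

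The real work is the case $F = F_1 \to F_2$, and I expect it to be the main obstacle. I would split on whether $I \models F_1$. If $I \not\models F_1$, then $\Pnn{I}{F} = \Nnn{I}{F} = \emptyset$, clause~(ii) of the definition of $\modelsht$ for implications holds automatically, and clause~(i) holds for both here-worlds since $\tuple{\HH_1,I} \not\modelsht F_1$ and $\tuple{\HH_2,I} \not\modelsht F_1$ by Proposition~\ref{prop:properties}(a); both items are then immediate. If $I \models F_1$, then $\Pnn{I}{F} = \Nnn{I}{F_1} \cup \Pnn{I}{F_2}$ and $\Nnn{I}{F} = \Pnn{I}{F_1} \cup \Nnn{I}{F_2}$. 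For item~1: clause~(ii) holds because $I \models F_1 \to F_2$ is fixed; for clause~(i), assume $\tuple{\HH_2,I} \modelsht F_1$ --- since $\BB_2$ is disjoint from $\Nnn{I}{F_1}$, the induction hypothesis for item~2 applied to $F_1$ gives $\tuple{\HH_1,I} \modelsht F_1$, whence $\tuple{\HH_1,I} \modelsht F_2$ by the hypothesis on~$F$, and then, $\BB_2$ being disjoint from $\Pnn{I}{F_2}$, the induction hypothesis for item~1 applied to $F_2$ gives $\tuple{\HH_2,I} \modelsht F_2$. Item~2 is the mirror image: assuming $\tuple{\HH_1,I} \modelsht F_1$, apply item~1 to $F_1$ (disjointness of $\BB_2$ from $\Pnn{I}{F_1}$) to obtain $\tuple{\HH_2,I} \modelsht F_1$, hence $\tuple{\HH_2,I} \modelsht F_2$, then apply item~2 to $F_2$ (disjointness of $\BB_2$ from $\Nnn{I}{F_2}$) to obtain $\tuple{\HH_1,I} \modelsht F_2$. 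The point requiring care is exactly this interleaving of the two induction hypotheses across antecedent and consequent, together with the observation that clause~(ii) --- the ``there-world'' requirement of $\modelsht$ for implications --- is unaffected by the choice of here-world because $I$ is fixed.
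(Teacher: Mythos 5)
Your proof is correct and follows essentially the same route as the paper's: a simultaneous structural induction in which the two items are interleaved in the implication case, with item~2 applied to the antecedent and item~1 to the consequent (and vice versa). You are in fact slightly more careful than the paper in explicitly disposing of the cases $I \not\models F$ and $I \not\models F_1$, which the paper's write-up glosses over.
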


\begin{proof}
The proof is by structural induction on~$F$.
\\[5pt]
In case that~$F$ is an atom~$p(\boldt)$.
Since~$\tuple{\HH_i, I} \modelsht F$,
it follows that~$p(\boldt^I)$ belongs to~$\HH_i$.
\begin{enumerate}
  \item Then, $p(\boldt^I)$ does not belong to~$\BB_1$.
  Since~$\BB_2$ is disjoint from~$\Pnn{I}{F}$, this means that
  $\BB_1 \cup \BB_2$ is disjoint from~$\Pnn{I}{F} = \Pos{I}{F}$,
  and, thus, condition~1 follows directly from Lemma~\ref{l1}.

  \item Since~$p(\boldt^I)$ belongs to~$\HH_2 \subseteq \HH_1$, it follows that~$\tuple{\HH_1, I} \modelsht p(\boldt)$.
  Therefore, condition~2 is also satisfied.
\end{enumerate}
In case that~$F$ is an implication of form~$F_1 \to F_2$.
\begin{enumerate}
    \item
    Assume that~$\BB_2$ is disjoint from~$\Pnn{I}{F}$ and~$\tuple{\HH_1, I} \modelsht F$.
    The latter implies~$I \models F_1$ and, thus,
    $\Pnn{I}{F} = \Nnn{I}{F_1} \cup \Pnn{I}{F_2}$.
    Then,
    \begin{itemize}
    \item $\BB_2$ is disjoint from~$\Nnn{I}{F_1}$, and
    
    \item $\BB_2$ is disjoint from~$\Pnn{I}{F_2}$.
    \end{itemize}

    Furthermore, $\tuple{\HH_1, I} \modelsht F$
    implies that~${\tuple{\HH_1,I} \not\modelsht F_1}$ or~${\tuple{\HH_1,I} \modelsht F_2}$.
    By induction hypothesis, this implies that~${\tuple{\HH_2,I} \not\modelsht F_1}$ or~${\tuple{\HH_2,I} \modelsht F_2}$.
    In addition, ${\tuple{\HH_1,I} \modelsht F}$ also implies~${I \models F_1 \to F_2}$.
    Therefore, ${\tuple{\HH_2,I} \modelsht F}$.

    \item Assume that~$\BB_2$ is disjoint from~$\Nnn{I}{F}$ and~$\tuple{\HH_1, I} \modelsht F$.
    The latter implies~$I \models F_1$ and, thus,
    $\Pnn{I}{F} = \Pnn{I}{F_1} \cup \Nnn{I}{F_2}$.
    Then,
    \begin{itemize}
    \item $\BB_2$ is disjoint from~$\Pnn{I}{F_1}$, and
    
    \item $\BB_2$ is disjoint from~$\Nnn{I}{F_2}$.
    \end{itemize}
    Furthermore, $\tuple{\HH_2, I} \modelsht F$
    implies~${\tuple{\HH_2,I} \not\modelsht F_1}$ or~${\tuple{\HH_2,I} \modelsht F_2}$.
    By induction hypothesis, this implies that~${\tuple{\HH_1,I} \not\modelsht F_1}$ or~${\tuple{\HH_1,I} \modelsht F_2}$.
    In addition, ${\tuple{\HH_2,I} \modelsht F}$ also implies~${I \models F_1 \to F_2}$.
    Therefore, ${\tuple{\HH_1,I} \modelsht F}$.
\end{enumerate}
Finally, the cases in which~$F$ is of forms~$F_1 \wedge F_2$, $F_1 \vee F_2$, $\forall X\, F_1(X)$ and~$\exists X\, F_1(X)$ follow directly by induction hypothesis.
\end{proof}

  

\begin{lemma}\label{lem:splitting.aux}
Let~$I$ be an interpretation, $\{ \BB_1 ,  \BB_2 \}$ be partition of some set of ground atoms~$\XX$ and~$F$ be a sentence
such that there are no edges from~$\BB_1$ to~$\BB_2$ in~$\G_{I,\XX}(F)$.
Let~$\HH_1 = \At{I} \setminus \BB_1$ and~$\HH_2 = \At{I} \setminus (\BB_1 \cup \BB_2)$.
If~$\tuple{\HH_2,I} \modelsht F$, then~${\tuple{\HH_1, I} \modelsht F}$.
\end{lemma}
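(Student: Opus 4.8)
The plan is to prove the claim by structural induction on the sentence $F$ over $\sigma^I$, exploiting that $\HH_2 \subseteq \HH_1$ (indeed, assuming as we may that $\XX \subseteq \At{I}$, since atoms outside $\At{I}$ are isolated in $\G_{I,\XX}(F)$ and irrelevant to $\HH_1,\HH_2$, we have $\HH_1 = \HH_2 \cup \BB_2$). Note first that $\tuple{\HH_2,I}\modelsht F$ gives $I\models F$ by Proposition~\ref{prop:properties}(a), so only the ``here''-world changes. The atomic, equality, $\bot$, conjunction, disjunction and quantifier cases are routine: for an atom $p(\boldt)$ the conclusion follows from $\HH_2\subseteq\HH_1$; for the binary connectives and the quantifiers one observes that every instance of a rule of a direct subformula of $F$ (resp. of $G(d^*)$ in the quantifier cases) is an instance of a rule of $F$, so $\G_{I,\XX}$ of that subformula is a subgraph of $\G_{I,\XX}(F)$ and hence also has no edge from $\BB_1$ to $\BB_2$; one then applies the induction hypothesis and recombines.

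The delicate case, which I expect to be the main obstacle, is $F = F_1 \to F_2$: here the rules of $F_1$ are \emph{not} rules of $F_1\to F_2$ (they lie inside an antecedent), so one cannot simply recurse into the antecedent. I would argue as follows. If $I\not\models F_1$, then $\tuple{\HH_1,I}\not\modelsht F_1$ by Proposition~\ref{prop:properties}(a), so $\tuple{\HH_1,I}\modelsht F$ (recall $I\models F_1\to F_2$ throughout). Otherwise $I\models F_1$, hence $I\models F_2$. Now invoke the hypothesis on the \emph{top-level} rule $F_1\to F_2$ of $F$: its only instance contributes, for each $p(\boldd^*)\in\Pos{I}{F_2}$ and $q(\bolde^*)\in\Pnn{I}{F_1}$, an edge $p(\boldd^*)\to q(\bolde^*)$; absence of edges from $\BB_1$ to $\BB_2$ therefore forces $\Pos{I}{F_2}\cap\BB_1=\emptyset$ or $\Pnn{I}{F_1}\cap\BB_2=\emptyset$. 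In the first case, $\Pos{I}{F_2}\subseteq\At{I}\setminus\BB_1=\HH_1$ and $I\models F_2$, so Lemma~\ref{l1} yields $\tuple{\HH_1,I}\modelsht F_2$ and hence $\tuple{\HH_1,I}\modelsht F$. In the second case, $\tuple{\HH_2,I}\modelsht F$ gives $\tuple{\HH_2,I}\not\modelsht F_1$ or $\tuple{\HH_2,I}\modelsht F_2$; if $\tuple{\HH_2,I}\modelsht F_2$, I would apply the induction hypothesis to $F_2$ (legitimate since $\G_{I,\XX}(F_2)\subseteq\G_{I,\XX}(F)$) to get $\tuple{\HH_1,I}\modelsht F_2$; if instead $\tuple{\HH_2,I}\not\modelsht F_1$, then since $\BB_2\cap\Pnn{I}{F_1}=\emptyset$, Lemma~\ref{lem:splitting.aux2}(1) applied to $F_1$ (in its contrapositive form) gives $\tuple{\HH_1,I}\not\modelsht F_1$. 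In every subcase $\tuple{\HH_1,I}\modelsht F_1\to F_2$.

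The real subtlety, and where I would be most careful, is the bookkeeping in this implication case: one must notice that the relevant dependency information for $F=F_1\to F_2$ comes from inspecting precisely the \emph{top-level} rule (while the subgraph inclusions $\G_{I,\XX}(F_2)\subseteq\G_{I,\XX}(F)$, etc., are exactly what feed the induction hypothesis), and that the two ``escape routes''---propagating satisfaction of the consequent upward via the induction hypothesis or Lemma~\ref{l1}, versus propagating non-satisfaction of the antecedent upward via Lemma~\ref{lem:splitting.aux2}(1)---are governed respectively by $\Pos{I}{F_2}\cap\BB_1$ and $\Pnn{I}{F_1}\cap\BB_2$, and the no-edge hypothesis guarantees at least one of these is empty. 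Everything else reduces to straightforward applications of Proposition~\ref{prop:properties}, Lemma~\ref{l1}, and Lemma~\ref{lem:splitting.aux2}.
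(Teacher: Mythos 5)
Your proof is correct and follows essentially the same route as the paper's: structural induction with the implication case resolved by the dichotomy, forced by the no-edge hypothesis on the top-level rule, between $\Pos{I}{F_2}\cap\BB_1=\emptyset$ (handled via Lemma~\ref{l1}) and $\Pnn{I}{F_1}\cap\BB_2=\emptyset$ (handled via Lemma~\ref{lem:splitting.aux2}(1) and the induction hypothesis on $F_2$). The only cosmetic difference is that you apply Lemma~\ref{lem:splitting.aux2}(1) in contrapositive form where the paper applies it directly after assuming $\tuple{\HH_1,I}\modelsht F_1$; the content is identical.
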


\begin{proof}
The proof is by structural induction on~$F$.
%
In case that~$F$ is an atom~$p(\boldt)$, the result follows because~$\HH_2 \subseteq \HH_1$.
\\[5pt]
In case that~$F$ is of form~${F_1 \to F_2}$.
Then, from~${\tuple{\HH_2,I} \modelsht F_1 \to F_2}$ we can conclude that~${I \models F_1 \to F_2}$.
Assume that~$\tuple{\HH_1,I} \modelsht F_1$.
Then, $I \models F_1$ and $I \models F_2$ follows.
We just need to show that~$\tuple{\HH_1,I} \modelsht  F_2$.
We proceed by cases.
\begin{itemize}
  \item If~$\BB_1$ is disjoint from~$\Pos{I}{F_2}$
   then, by Lemma~\ref{lem:splitting.aux1} and the fact~$I \models F_2$, we get~$\tuple{\HH_1,I} \modelsht F_2$.

  \item Otherwise, since~$\BB_1$ is not disjoint from~$\Pos{I}{F_2}$, it must be that~$\BB_2$ is disjoint from~$\Pnn{I}{F_1}$, because there are no edges from~$\BB_1$ to~$\BB_2$ in~$\G_{I,\XX}(F)$.
  Then, by Lemma~\ref{lem:splitting.aux2} and~$\tuple{\HH_1,I} \modelsht F_1$, we get~${\tuple{\HH_2, I} \modelsht F_1}$.
  Since~$\tuple{\HH_2, I} \modelsht F_1 \to F_2$, this implies that $\tuple{\HH_2, I} \modelsht F_2$.
  Then, by induction hypothesis, we get~${\tuple{\HH_1,I} \modelsht F_2}$.
  Note that~$\G_{I,\XX}(F_2)$ is a subgraph of~$\G_{I,\XX}(F)$.  
\end{itemize}
Finally, the cases in which~$F$ is of forms~$F_1 \wedge F_2$, $F_1 \vee F_2$, $\forall X \, F(X)$ and~$\exists X \, F(X)$ follow directly by induction hypothesis.
\end{proof}

\begin{lemma}\label{lem:splitting.v1}
Let~$\Gamma$ be a first\nobreakdash-order theory, 
$I$ be an interpretation
and~$\AAA$ be a subset of~$\At{I}$.
Let~$\{ \AAA_1,\AAA_2\}$ be a partition of~$\AAA$ satisfying the following condition:%
{\setlength{\leftmargini}{5pt}%
\begin{itemize}
  \item[]  for every non\nobreakdash-empty subset~$\XX$ of~$\AAA$, 
  there is some~$1 \leq i \leq 2$ and non\nobreakdash-empty subset~$\BB$ of ~$\XX \cap \AAA_i$ such that there are no edges from~$\BB$ to~$\XX \setminus \BB$ in~$\G_{I,\XX}(\Gamma)$.
\end{itemize}}%
\noindent
If $I$ is a $\AAA_i$-stable model of~$\Gamma$ for all~$1 \leq i \leq 2$,
then
$I$ is a $\AAA$\nobreakdash-stable model of~$\Gamma$.
\end{lemma}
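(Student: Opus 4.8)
The plan is to argue by contradiction, reducing a hypothetical failure of $\AAA$\nobreakdash-stability to a failure of $\AAA_i$\nobreakdash-stability for one of the two blocks. First I would dispose of the easy part: since $I$ is a $\AAA_1$\nobreakdash-stable model of $\Gamma$, $I$ is in particular a model of $\Gamma$, and $I$ classically satisfies every instance of excluded middle, so $I$ is a model of $\Gamma \cup \EM{I,\AAA}$ and $\tuple{\At{I},I}$ is an HT\nobreakdash-model of it. Hence it remains to check that no \emph{proper} subset $\HH$ of $\At{I}$ satisfies $\tuple{\HH,I}\modelsht \Gamma \cup \EM{I,\AAA}$.

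So suppose such an $\HH$ exists. From $\tuple{\HH,I}\modelsht\EM{I,\AAA}$ and the fact that $I\models p(\boldd^*)$ for every $p(\boldd^*)\in\At{I}$, one gets $\tuple{\HH,I}\modelsht p(\boldd^*)$, hence $p(\boldd^*)\in\HH$, for every $p(\boldd^*)\in\At{I}\setminus\AAA$; equivalently, $\XX:=\At{I}\setminus\HH$ is a subset of $\AAA$, and it is non\nobreakdash-empty because $\HH$ is a proper subset of $\At{I}$. Now I would invoke the hypothesis of the lemma for this particular $\XX$: it yields some $i\in\{1,2\}$ and a non\nobreakdash-empty $\BB\subseteq\XX\cap\AAA_i$ with no edges from $\BB$ to $\XX\setminus\BB$ in $\G_{I,\XX}(\Gamma)$, hence none in $\G_{I,\XX}(F)$ for each sentence $F\in\Gamma$, since $\G_{I,\XX}(F)$ is a subgraph of $\G_{I,\XX}(\Gamma)$.

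The heart of the argument is then a sentence\nobreakdash-by\nobreakdash-sentence appeal to Lemma~\ref{lem:splitting.aux} with $\BB_1:=\BB$ and $\BB_2:=\XX\setminus\BB$, so that $\HH_2=\At{I}\setminus\XX=\HH$ and $\HH_1=\At{I}\setminus\BB$: from $\tuple{\HH,I}\modelsht F$ for every $F\in\Gamma$ we obtain $\tuple{\At{I}\setminus\BB,I}\modelsht F$ for every $F\in\Gamma$, i.e.\ $\tuple{\At{I}\setminus\BB,I}\modelsht\Gamma$. Finally, $\At{I}\setminus\BB$ is a proper subset of $\At{I}$ (as $\BB\neq\emptyset$), and it contains every atom of $\At{I}\setminus\AAA_i$ (because $\BB\subseteq\AAA_i$), so $\tuple{\At{I}\setminus\BB,I}\modelsht\EM{I,\AAA_i}$ as well; this contradicts $I$ being a $\AAA_i$\nobreakdash-stable model of $\Gamma$. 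I expect no genuine obstacle here, since the real technical content\,---\,that the absence of $\BB$\nobreakdash-to\nobreakdash-$(\XX\setminus\BB)$ edges lets one enlarge the here\nobreakdash-world from $\HH$ to $\At{I}\setminus\BB$ while preserving satisfaction\,---\,is already packaged in Lemma~\ref{lem:splitting.aux}; the delicate points are merely bookkeeping, namely keeping the inclusions $\HH_2\subseteq\HH_1$, $\XX\subseteq\AAA$, and $\At{I}\setminus\AAA_i\subseteq\At{I}\setminus\BB$ straight and confirming that the edge\nobreakdash-freeness condition transfers to each $\G_{I,\XX}(F)$.
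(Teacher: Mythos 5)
Your proposal is correct and follows essentially the same route as the paper: both reduce the question for a proper subset $\HH$ of $\At{I}$ to the set $\XX=\At{I}\setminus\HH\subseteq\AAA$, invoke the hypothesis to obtain $\BB$, and apply Lemma~\ref{lem:splitting.aux} with $\BB_1=\BB$, $\BB_2=\XX\setminus\BB$ to pass between the here-worlds $\HH$ and $\At{I}\setminus\BB$, contradicting $\AAA_i$-stability. The only cosmetic difference is that you argue by contradiction using the forward direction of Lemma~\ref{lem:splitting.aux} (and are slightly more careful in applying it sentence-by-sentence), whereas the paper argues directly via its contrapositive.
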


\begin{proof}
Assume that~$I$ is a $\AAA_i$\nobreakdash-stable model of~$\Gamma$ for all~${1 \leq i \leq 2}$ and pick any strict subset~$\HH$ of~$\At{I}$.
It is enough to show that~${\tuple{\HH,I} \not\modelsht \Gamma \cup \EM{I,\AAA}}$.
Let~$\XX = \At{I} \setminus \HH$.
If $\XX$ is not a subset of~$\AAA$, then~${\tuple{\HH,I} \not\modelsht \EM{I,\AAA}}$.
Hence, we assume without loss of generality that~$\XX \subseteq \AAA$ and we show that~${\tuple{\HH,I} \not\modelsht \Gamma}$.

By hypothesis,
there is some~$1 \leq i \leq 2$ and non\nobreakdash-empty subset~$\BB$ of~$\XX \cap \AAA_i$ such that there are no edges from~$\BB$ to~$\XX \setminus \BB$ in~$\G_{I,\XX}(\Gamma)$.
Let~$\HH_i = \At{I} \setminus \BB$ and~$\CC = \XX \setminus \BB$.
Since~$\BB \subseteq \XX$, we get that~$\XX = \BB \cup \CC$.
We also get that
\begin{gather}
\text{there are no edges from~$\BB$ to~$\CC$ in~$\G_{I,\XX}(\Gamma)$.}
\label{eq:1:lem:splitting.v1}
\end{gather}
Furthermore, since~$\BB$ is non-empty, it follows that~${\HH_i \subset \At{I}}$ and, since~$I$ is a $\AAA_i$\nobreakdash-stable model of~$\Gamma$, we immediately get that~${\tuple{\HH_i,I} \not\modelsht \Gamma \cup \EM{I,\AAA_i}}$.
Furthermore, since~$\BB \subseteq \XX \cap \AAA_i \subseteq \AAA_i$,
it follows that~${\tuple{\HH_i,I} \modelsht \EM{I,\AAA_i}}$.
Therefore,
\begin{gather}
\tuple{\HH_i,I} \textstyle\not\modelsht \Gamma.
\label{eq:2:lem:splitting.v1}
\end{gather}
By Lemma~\ref{lem:splitting.aux},
facts~\eqref{eq:1:lem:splitting.v1} and~\eqref{eq:2:lem:splitting.v1}
imply that
\begin{gather}
\tuple{\At{I} \setminus (\BB \cup \CC) , I} \textstyle\not\modelsht \Gamma.
\label{eq:3:lem:splitting.v1}
\end{gather}
Since~${\HH \subseteq \At{I}}$ and~$\XX = \At{I} \setminus \HH$, we get that $\HH = \At{I} \setminus \XX = \At{I} \setminus (\BB \cup \CC)$.
Therefore,
we can rewrite~\eqref{eq:3:lem:splitting.v1} as~$\tuple{\HH,I} \not\modelsht \Gamma$.
\end{proof}

\begin{lemma}\label{lem:separable.partition}
Let~$\{ \XX_1,\XX_2\}$ be a separable partition of some non\nobreakdash-empty~$\XX$.
Then, there is a vertex~$v \in \XX$ such that the set of vertices reachable~$\G_{I,\XX}(\Gamma)$ from~$v$ is a subset of~$\XX_i$ for some~$1 \leq i \leq 2$.
\end{lemma}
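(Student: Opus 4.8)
The plan is to prove the statement by contradiction. Write $R(v)$ for the set of vertices reachable from $v$ in $\G_{I,\XX}(\Gamma)$ via a directed walk of length $\geq 0$, so that $v\in R(v)$ and $R(v)$ is closed under out-edges. Suppose, for contradiction, that for \emph{every} $v\in\XX$ the set $R(v)$ is contained in neither $\XX_1$ nor $\XX_2$; equivalently, $R(v)\cap\XX_1\neq\emptyset$ and $R(v)\cap\XX_2\neq\emptyset$ for all $v\in\XX$. From this I would build an infinite walk of $\G_{I,\XX}(\Gamma)$ that visits each of $\XX_1$ and $\XX_2$ infinitely often, contradicting separability of $\{\XX_1,\XX_2\}$.

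The crucial intermediate claim is: under this assumption, for every $v\in\XX$ and every $i\in\{1,2\}$ there is a vertex $w\in\XX_i$ reachable from $v$ by a walk of length $\geq 1$. If $v\in\XX_{3-i}$, any $w\in R(v)\cap\XX_i$ satisfies $w\neq v$, so the witnessing walk is nonempty. If instead $v\in\XX_i$, first take $u\in R(v)\cap\XX_{3-i}$; then $u\neq v$, so there is a walk from $v$ to $u$ of length $\geq 1$, and applying the assumption to $u$ yields $w\in R(u)\cap\XX_i$, whence the concatenation $v\rightsquigarrow u\rightsquigarrow w$ reaches $\XX_i$ and has length $\geq 1$.

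Given the claim, I would construct the walk in stages. Pick any $w_0\in\XX$ (possible since $\XX\neq\emptyset$). Having reached $w_k$, apply the claim with target $\XX_1$ when $k$ is even and target $\XX_2$ when $k$ is odd, obtaining $w_{k+1}$ in the target component together with a walk from $w_k$ to $w_{k+1}$ of length $\geq 1$; append that walk. Since each stage adds at least one edge, the concatenation is a genuine infinite walk, and it passes through $\XX_1$ at $w_1,w_3,\dots$ and through $\XX_2$ at $w_2,w_4,\dots$; hence it visits both components infinitely often, contradicting separability. Therefore the assumption fails, i.e.\ some $v\in\XX$ has $R(v)\subseteq\XX_i$ for some $i\in\{1,2\}$, which is the conclusion.

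I expect the only delicate point to be the ``length $\geq 1$'' bookkeeping: one must rule out the degenerate possibility that the witness for target $\XX_i$ is just the current endpoint itself, which would stall the construction, and the intermediate claim is precisely designed to handle this by detouring through the opposite component whenever the current endpoint already lies in $\XX_i$. It is also worth recording explicitly that $R(v)$ is closed under out-edges, since that is exactly what will let this lemma discharge the hypothesis of Lemma~\ref{lem:splitting.v1} by taking $\BB=R(v)$.
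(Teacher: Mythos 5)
Your proof is correct and follows essentially the same route as the paper's: assume for contradiction that no vertex has its reachable set confined to one $\XX_i$, then splice together finite walks that alternate between $\XX_1$ and $\XX_2$ to obtain an infinite walk visiting both infinitely often, contradicting separability. Your explicit handling of the ``length $\geq 1$'' issue (detouring through the opposite component when the current endpoint already lies in the target) is a point the paper's proof passes over silently, and it is a genuine refinement rather than a change of method.
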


\begin{proof}
%
Suppose, for the sake of contradiction,
that for every vertex~$v \in \XX$, the set of vertices reachable from vertex~$v$ in~$\G_{I,\XX}(\Gamma)$ is neither a subset of~$\XX_1$ nor~$\XX_2$.
That is, for every~$v_{i,1} \in \XX_1$, there is~$v_{i,k_i} \in \XX_2$ such that~$v_{i,1},\dotsc,v_{i,k_i}$ is a walk in~$\G_{I,\XX}(\Gamma)$; and for every~$w_{j,1} \in \XX_1$, there is~$w_{j,l_j} \in \XX_2$ such that~$w_{j,1},w_{j,2},\dotsc,w_{j,l_j}$ is a walk in~$\G_{I,\XX}(\Gamma)$.
By taking vertices that satisfy~${v_{i,k_i} = w_{i,1}}$ and~${w_{j,l_j} = v_{i+1,1}}$, we can build the infinite walk
$$v_{1,1},v_{1,2},\dotsc,v_{1,k_1}, w_{1,2},\dotsc,w_{1,l_1}, v_{2,2}, \dotsc$$
that visits both~$\XX_1$ and~$\XX_2$ infinitely many times.
This is a contradiction with the fact that~$\{ \XX_1,\XX_2\}$ is separable.
Hence, there is a vertex~$v \in \XX$ such that the set of vertices reachable~$\G_{I,\XX}(\Gamma)$ from~$v$ is a subset of~$\XX_i$ for some~$1 \leq i \leq 2$.%
\end{proof}

\begin{lemma}\label{lem:separable.partition.X}
Let~$\{ \AAA_1,\AAA_2\}$ be a separable partition of~$\AAA$.
Then,
{\setlength{\leftmargini}{5pt}%
\begin{itemize}
  \item[]  for every non\nobreakdash-empty subset~$\XX$ of~$\AAA$, 
  there is some~$1 \leq i \leq 2$ and non\nobreakdash-empty subset~$\BB$ of ~$\XX \cap \AAA_i$ such that there are no edges from~$\BB$ to~$\XX \setminus \BB$ in~$\G_{I,\XX}(\Gamma)$.
\end{itemize}}%
\end{lemma}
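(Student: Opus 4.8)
The plan is to reduce to Lemma~\ref{lem:separable.partition} by restricting the whole construction to $\XX$. First I would fix a non\nobreakdash-empty subset~$\XX \subseteq \AAA$ and consider the partition~$\{ \XX \cap \AAA_1, \XX \cap \AAA_2 \}$ of~$\XX$. The key observation is that the edge relation of~$\G_{I,\XX}(\Gamma)$ --- an edge from~$p(\boldd^*)$ to~$q(\bolde^*)$ exactly when some instance~$F_1 \to F_2$ of a rule of~$\Gamma$ satisfies $p(\boldd^*) \in \Pos{I}{F_2}$ and $q(\bolde^*) \in \Pnn{I}{F_1}$ --- does not mention the ambient atom set at all; changing that set only changes which atoms are declared vertices. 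Hence~$\G_{I,\XX}(\Gamma)$ is precisely the subgraph of~$\G_{I,\AAA}(\Gamma)$ induced on the vertex set~$\XX$, and in particular every (infinite) walk of~$\G_{I,\XX}(\Gamma)$ is also a walk of~$\G_{I,\AAA}(\Gamma)$.

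Second, I would transfer separability downward. Since~$\{ \AAA_1,\AAA_2 \}$ is separable on~$\G_{I,\AAA}(\Gamma)$, any infinite walk~$v_1,v_2,\dotsc$ of~$\G_{I,\XX}(\Gamma)$ --- being an infinite walk of~$\G_{I,\AAA}(\Gamma)$ --- visits at most one of~$\AAA_1,\AAA_2$ infinitely often; and as all its vertices lie in~$\XX$, it visits at most one of~$\XX \cap \AAA_1, \XX \cap \AAA_2$ infinitely often. Thus~$\{ \XX \cap \AAA_1, \XX \cap \AAA_2 \}$ is a separable partition of the non\nobreakdash-empty set~$\XX$ on~$\G_{I,\XX}(\Gamma)$. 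Applying Lemma~\ref{lem:separable.partition} to this partition yields a vertex~$v \in \XX$ such that the set~$\BB$ of vertices reachable from~$v$ in~$\G_{I,\XX}(\Gamma)$ is contained in~$\XX \cap \AAA_i$ for some~$1 \leq i \leq 2$. Take this~$\BB$: it is non\nobreakdash-empty since reachability is reflexive (so~$v \in \BB$), it is a subset of~$\XX \cap \AAA_i$, and it is closed under out\nobreakdash-edges inside~$\G_{I,\XX}(\Gamma)$ --- an edge from~$u \in \BB$ to~$w \in \XX$ would put~$w$ in~$\BB$ --- hence there are no edges from~$\BB$ to~$\XX \setminus \BB$. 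This is exactly the asserted conclusion.

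I expect the only delicate point to be making the ``subgraph'' claim precise, that is, confirming from the definition that the edge relation of~$\G_{I,(\cdot)}(\Gamma)$ is the same set of pairs whether we admit all atoms of~$\XX$ or of~$\AAA$ as vertices, so that walks and reachable sets computed in~$\G_{I,\XX}(\Gamma)$ agree with those in the induced restriction of~$\G_{I,\AAA}(\Gamma)$. Everything else --- reflexivity of reachability and out\nobreakdash-closure of a reachable set --- is routine. If the paper's convention were instead that reachability excludes the starting vertex, I would take~$\BB$ to be the reachable set together with~$v$ and note~$v \in \XX \cap \AAA_i$ (which holds under the reflexive convention automatically), but no other change to the argument would be needed.
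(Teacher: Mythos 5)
Your proposal is correct and follows essentially the same route as the paper's own proof: restrict the partition to $\XX$, observe it remains separable, invoke Lemma~\ref{lem:separable.partition} to obtain a reachable set $\BB$ contained in one $\XX\cap\AAA_i$, and note that a reachable set has no outgoing edges. The paper states the separability of $\{\XX\cap\AAA_1,\XX\cap\AAA_2\}$ and the absence of edges from $\BB$ to $\XX\setminus\BB$ without argument; your justifications of these two steps (via the induced-subgraph observation and out-closure of reachable sets) are exactly the right ones.
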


\begin{proof}
Let~$\XX_i = \XX \cap \AAA_i$.
Then, $\{ \XX_1,\XX_2\}$ is a separable partition of~$\XX$
and, by Lemma~\ref{lem:separable.partition},
there is a vertex~$v \in \XX$ such that the set of vertices~$\BB$ reachable~$\G_{I,\XX}(\Gamma)$ from~$v$ is a subset of~$\XX_i$ for some~$i$.
Clearly, there are no edges from~$\BB$ to~$\XX \setminus \BB$ in~$\G_{I,\XX}(\Gamma)$.
\end{proof}

\begin{lemma}[Splitting lemma with infinitary graph]\label{lem:splitting.local}
Let~$\Gamma$ be a first\nobreakdash-order theory, 
$I$ be an interpretation
and~$\AAA$ be a subset of~$\At{I}$.
If~$\{ \AAA_1,\AAA_2 \}$ is a partition of~$\AAA$ that is separable on~$\G_{I,\AAA}(F)$,
then the following two statements are equivalent
\begin{enumerate}
    \item $I$ is a $\AAA$\nobreakdash-stable model of~$\Gamma$, and
    \item $I$ is a $\AAA_i$-stable model of~$\Gamma$ for all~$1 \leq i \leq 2$.
\end{enumerate}
\end{lemma}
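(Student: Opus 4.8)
The plan is to obtain the lemma as an immediate consequence of the auxiliary results already proved, since each direction of the equivalence is handed to us by one of them. First I would dispatch the implication from (1) to (2): this is exactly Lemma~\ref{lem:splitting.if} applied to the partition $\{\AAA_1,\AAA_2\}$ of $\AAA$, so nothing beyond citing it is needed --- separability and the dependency graph play no role in this direction, only the monotonicity of the $\EM{I,\cdot}$ construction under inclusion.

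For the converse, (2) $\Rightarrow$ (1), I would aim to invoke Lemma~\ref{lem:splitting.v1}, whose conclusion is precisely statement (1) and whose hypothesis is the combinatorial condition that every non-empty $\XX \subseteq \AAA$ contains, for some $i$, a non-empty $\BB \subseteq \XX \cap \AAA_i$ with no edges from $\BB$ to $\XX \setminus \BB$ in $\G_{I,\XX}(\Gamma)$. The only thing left to verify is that this condition follows from the separability hypothesis, and that is the content of Lemma~\ref{lem:separable.partition.X} --- which in turn rests on Lemma~\ref{lem:separable.partition}, where from the failure of a confined reachable set one constructs an infinite walk oscillating between the two blocks, contradicting separability. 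So the proof is the short chain: separability $\Rightarrow$ (Lemma~\ref{lem:separable.partition.X}) the combinatorial condition $\Rightarrow$ (Lemma~\ref{lem:splitting.v1}) statement (1); combined with Lemma~\ref{lem:splitting.if} for the other direction, this gives the equivalence.

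Because the real work has already been done upstream, I do not expect a serious obstacle at this level. The points to be careful about are bookkeeping: checking that the separability hypothesis as phrased on $\G_{I,\AAA}(\Gamma)$ is exactly what Lemma~\ref{lem:separable.partition.X} consumes (in particular that restricting to a subset $\XX$ keeps the induced partition $\{\XX\cap\AAA_1,\XX\cap\AAA_2\}$ separable, since $\G_{I,\XX}(\Gamma)$ is a subgraph of $\G_{I,\AAA}(\Gamma)$), and that ``$I$ is a $\AAA_i$-stable model of $\Gamma$ for all $1\leq i\leq 2$'' is literally the conjunction of hypotheses of Lemma~\ref{lem:splitting.v1}. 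The genuinely delicate ingredients --- the infinitary walk argument in Lemma~\ref{lem:separable.partition} and the structural induction of Lemma~\ref{lem:splitting.aux} that peels off one block of the here-world at a time while preserving $\tuple{\HH,I}\not\modelsht\Gamma$ --- are the ones I would expect to reuse rather than reprove here.
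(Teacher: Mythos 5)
Your proposal matches the paper's proof exactly: the forward direction is Lemma~\ref{lem:splitting.if}, and the converse is obtained by feeding the separability hypothesis through Lemma~\ref{lem:separable.partition.X} to establish the combinatorial condition required by Lemma~\ref{lem:splitting.v1}. The bookkeeping point you flag (that restricting to a subset $\XX$ preserves separability of the induced partition) is precisely what Lemma~\ref{lem:separable.partition.X} already absorbs, so nothing further is needed.
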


\begin{proof}
Condition~1 implies condition~2 by Lemma~\ref{lem:splitting.if}.
Furthermore, by Lemma~\ref{lem:separable.partition.X}, if follows that
{\setlength{\leftmargini}{5pt}%
\begin{itemize}
  \item[]  for every non\nobreakdash-empty subset~$\XX$ of~$\AAA$, 
  there is some~$1 \leq i \leq 2$ and non\nobreakdash-empty subset~$\BB$ of ~$\XX \cap \AAA_i$ such that there are no edges from~$\BB$ to~$\XX \setminus \BB$ in~$\G_{I,\XX}(\Gamma)$.
\end{itemize}}%
\noindent
Then, by Lemma~\ref{lem:splitting.v1}, it follows that Condition~2 implies condition~1.
\end{proof}

\begin{lemma}\label{lem:aux:thm.splitting}
Let~$\Gamma_1,\Gamma_2$ be two first\nobreakdash-order theories.
If~$\AAA$ is a set of ground atoms that is disjoint from~$\Pos{I}{\Gamma_2}$,
then $I$ is an $\AAA$\nobreakdash-stable model of~$\Gamma_1 \cup \Gamma_2$ iff
$I$ is an $\AAA$\nobreakdash-stable model of~$\Gamma_1$ and~$I \models \Gamma_2$.
\end{lemma}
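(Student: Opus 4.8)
The plan is to unfold the definition of $\AAA$\nobreakdash-stable model on both sides and reduce the claim to a comparison of which HT\nobreakdash-interpretations satisfy the theories involved. By definition, $I$ is an $\AAA$\nobreakdash-stable model of $\Gamma_1 \cup \Gamma_2$ exactly when (i)~$I \models \Gamma_1 \cup \Gamma_2 \cup \EM{I,\AAA}$ and (ii)~$\tuple{\HH,I} \not\modelsht \Gamma_1 \cup \Gamma_2 \cup \EM{I,\AAA}$ for every proper subset~$\HH$ of~$\At{I}$; and analogously for $\Gamma_1$ alone. Requirement~(i) splits trivially as ``$I \models \Gamma_1 \cup \EM{I,\AAA}$ and $I \models \Gamma_2$'', so, under the standing fact that $I \models \Gamma_2$, the statement reduces to matching up the minimality conditions~(ii).

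The direction from ``$I$ is an $\AAA$\nobreakdash-stable model of $\Gamma_1$ and $I \models \Gamma_2$'' to ``$I$ is an $\AAA$\nobreakdash-stable model of $\Gamma_1 \cup \Gamma_2$'' is immediate: $\Gamma_1 \cup \EM{I,\AAA}$ is a subtheory of $\Gamma_1 \cup \Gamma_2 \cup \EM{I,\AAA}$, so any~$\HH$ that falsifies the former in the logic of here\nobreakdash-and\nobreakdash-there also falsifies the latter. For the converse I would argue by contradiction: assume $I$ is an $\AAA$\nobreakdash-stable model of $\Gamma_1 \cup \Gamma_2$ and suppose some proper subset~$\HH$ of~$\At{I}$ satisfies $\tuple{\HH,I} \modelsht \Gamma_1 \cup \EM{I,\AAA}$. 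It then suffices to show $\tuple{\HH,I} \modelsht \Gamma_2$, since this contradicts the stability of~$I$ with respect to $\Gamma_1 \cup \Gamma_2$.

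To obtain $\tuple{\HH,I} \modelsht \Gamma_2$, I would first read off from $\tuple{\HH,I} \modelsht \EM{I,\AAA}$ that $\At{I} \setminus \HH \subseteq \AAA$: for each $p(\boldd^*) \in \At{I} \setminus \AAA$ the disjunct $\neg p(\boldd^*)$ cannot hold in $\tuple{\HH,I}$ because $I \models p(\boldd^*)$, hence $p(\boldd^*)$ must belong to~$\HH$. Since $\Pos{I}{F} \subseteq \At{I}$ for every sentence~$F$ and $\AAA$ is disjoint from $\Pos{I}{\Gamma_2}$, this yields $\Pos{I}{F} \subseteq \At{I} \setminus \AAA \subseteq \HH$ for every $F \in \Gamma_2$. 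As $I \models \Gamma_2$, Lemma~\ref{l1} applied sentence by sentence gives $\tuple{\HH,I} \modelsht \Gamma_2$, closing the argument.

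The proof is essentially bookkeeping, with Lemma~\ref{l1} as the only substantive ingredient, so I do not expect a genuine obstacle. The one step deserving care is the observation that satisfying $\EM{I,\AAA}$ pins the complement of~$\HH$ inside~$\AAA$ --- which is exactly where the disjointness hypothesis on $\Pos{I}{\Gamma_2}$ is used --- and one should also keep straight throughout whether $\Pos{I}{\cdot}$, $\At{I}$ and $\EM{I,\AAA}$ are being read over~$\sigma$ or over~$\sigma^I$.
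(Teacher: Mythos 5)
Your proposal is correct and follows essentially the same route as the paper's proof: both directions hinge on Lemma~\ref{l1} applied to~$\Gamma_2$ together with the disjointness of~$\AAA$ from~$\Pos{I}{\Gamma_2}$. In fact your write-up is more careful than the paper's, which leaves implicit the step you make explicit --- that $\tuple{\HH,I} \modelsht \EM{I,\AAA}$ forces $\At{I}\setminus\HH \subseteq \AAA$ and hence $\Pos{I}{\Gamma_2} \subseteq \HH$, which is exactly what licenses the appeal to Lemma~\ref{l1}.
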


\begin{proof}
For the if direction.
Assume that $I$ is an $\AAA$\nobreakdash-stable model of~$\Gamma_1$ and~$I \models \Gamma_2$.
We get that~$I \models \Gamma_1 \cup \Gamma_2$.
Furthermore since $I$ is an $\AAA$\nobreakdash-stable model of~$\Gamma_1$, for any~$\HH \subset \At{I}$, we get that~$\tuple{\HH,I} \not\models \Gamma_1$ and, thus, $\tuple{\HH,I} \not\models \Gamma_1 \cup \Gamma_2$.
Therefore, $\AAA$\nobreakdash-stable model of~$\Gamma_1 \cup \Gamma_2$.
\\[10pt]
For the only if direction.
Assume that~$I$ is a $\AAA$\nobreakdash-stable model of~$\Gamma_1 \cup \Gamma_2$.
Then~${I \models \Gamma_1}$ and~${I \models \Gamma_2}$.
Furthermore, since~$\AAA$ is disjoint from~$\Pos{I}{\Gamma_2}$,
by Lemma~\ref{l1} it follows that~$\tuple{\HH,I} \not\models \Gamma_1$ for any~$\HH \subset \At{I}$.
Hence, $I$ is an $\AAA$\nobreakdash-stable model of~$\Gamma_1$.
\end{proof}

\begin{theorem}[Splitting Theorem with infinitary graph]\label{thm:splitting.local}
Let~$\Gamma = \Gamma_1 \cup \Gamma_2$ be a set of sentences,
$I$ be an interpretation,
and~$\AAA$ be a subset of~$\At{I}$,
and $\{ \AAA_1, \AAA_2 \}$ be a partition of~$\AAA$ such that
\begin{itemize}
    \item $\{ \AAA_1, \AAA_2 \}$ is separable on~$\G_{I,\AAA}(\Gamma)$;
    \item $\Gamma_1$ is negative on~$\AAA_2$; and
    \item $\Gamma_2$ is negative on~$\AAA_1$.
\end{itemize}
Then the following two statements are equivalent
\begin{enumerate}
    \item $I$ is a $\AAA$\nobreakdash-stable model of~$\Gamma$, and
    \item $I$ is a $\AAA_i$-stable model of~$\Gamma_i$ for all~$i \in \{1,2\}$.
\end{enumerate}
\end{theorem}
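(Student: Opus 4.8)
The plan is to derive Theorem~\ref{thm:splitting.local} by combining Lemma~\ref{lem:splitting.local} (the splitting lemma with the infinitary graph) with Lemma~\ref{lem:aux:thm.splitting}, which lets us discard a ``negative'' subtheory from a stability requirement. First I would apply Lemma~\ref{lem:splitting.local} to the theory $\Gamma$ and the partition $\{\AAA_1,\AAA_2\}$, which is separable on $\G_{I,\AAA}(\Gamma)$ by hypothesis; this reduces the claim to showing that $I$ is a $\AAA_i$-stable model of $\Gamma$ for both $i\in\{1,2\}$ iff $I$ is a $\AAA_i$-stable model of $\Gamma_i$ for both $i$.

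Next, for $i=1$ I would unfold the hypothesis ``$\Gamma_2$ is negative on $\AAA_1$'': by definition this means $\Pos{I}{F}$ is disjoint from $\AAA_1$ for every $F\in\Gamma_2$, hence $\Pos{I}{\Gamma_2}=\bigcup_{F\in\Gamma_2}\Pos{I}{F}$ is disjoint from $\AAA_1$. Lemma~\ref{lem:aux:thm.splitting}, instantiated with $\AAA:=\AAA_1$, then gives that $I$ is a $\AAA_1$-stable model of $\Gamma=\Gamma_1\cup\Gamma_2$ iff $I$ is a $\AAA_1$-stable model of $\Gamma_1$ and $I\models\Gamma_2$. Symmetrically, applying the same lemma with the roles of $\Gamma_1$ and $\Gamma_2$ interchanged and using ``$\Gamma_1$ is negative on $\AAA_2$'', I get that $I$ is a $\AAA_2$-stable model of $\Gamma$ iff $I$ is a $\AAA_2$-stable model of $\Gamma_2$ and $I\models\Gamma_1$.

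Finally I would combine the two characterizations. For the forward direction, if $I$ is a $\AAA$-stable model of $\Gamma$, the first step gives that it is a $\AAA_i$-stable model of $\Gamma$ for both $i$, and the second step then yields that it is a $\AAA_1$-stable model of $\Gamma_1$ and a $\AAA_2$-stable model of $\Gamma_2$ (the side conditions $I\models\Gamma_j$ are simply discarded). For the converse, since every stable model of a theory is in particular a model of it, a $\AAA_1$-stable model of $\Gamma_1$ satisfies $\Gamma_1$ and a $\AAA_2$-stable model of $\Gamma_2$ satisfies $\Gamma_2$; hence both conjunctive characterizations from the second step hold, so $I$ is a $\AAA_i$-stable model of $\Gamma$ for both $i$, and Lemma~\ref{lem:splitting.local} concludes that $I$ is a $\AAA$-stable model of $\Gamma$.

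I do not expect a genuine obstacle here: the real content is already carried by Lemmas~\ref{lem:splitting.local} and~\ref{lem:aux:thm.splitting}. The only points that need care are the bookkeeping in the second step (unfolding ``negative on $\AAA_i$'' for a \emph{theory} so that Lemma~\ref{lem:aux:thm.splitting} applies verbatim, and getting the roles of $\Gamma_1,\Gamma_2$ right) and the observation in the last step that the extra conjuncts $I\models\Gamma_j$ are automatic for stable models and therefore do not appear in the final equivalence.
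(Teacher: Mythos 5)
Your proposal is correct and follows essentially the same route as the paper's own proof: Lemma~\ref{lem:splitting.local} reduces $\AAA$-stability of $\Gamma$ to $\AAA_i$-stability of $\Gamma$ for both $i$, and Lemma~\ref{lem:aux:thm.splitting} together with the negativity hypotheses converts each of these into $\AAA_i$-stability of $\Gamma_i$ plus the side condition $I\models\Gamma_j$, which is absorbed exactly as you describe. If anything, your explicit handling of the converse direction (recovering $I\models\Gamma_j$ from the fact that a $\AAA_j$-stable model of $\Gamma_j$ is a model of $\Gamma_j$) is spelled out more carefully than in the paper's version.
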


\begin{proof}
  By Lemma~\ref{lem:splitting.local} and the fact that~$\Gamma = \Gamma_1 \cup \Gamma_2$, 
  it follows that condition~1 below is equivalent to the conjunction of conditions~a and~b:
  \begin{enumerate}
      \item $I$ is a $\AAA$\nobreakdash-stable model of~$\Gamma$;
      \item[a.] $I$ is a $\AAA_1$-stable model of~$\Gamma_1 \cup \Gamma_2$;
      \item[b.] $I$ is a $\AAA_2$-stable model of~$\Gamma_1 \cup \Gamma_2$.
  \end{enumerate}
  Furthermore, since~$\Gamma_1$ is negative on~$\AAA_2$, we get that~$\Pos{\Gamma_1}{I}$ and~$\AAA_2$ are disjoint.
  Similarly, we get that~$\Pos{\Gamma_2}{I}$ and~$\AAA_1$ are also disjoint.
  By Lemma~\ref{lem:aux:thm.splitting}, this implies condition~a and~b are respectively equivalent to:
  \begin{enumerate}
    \item[a'.] $I$ is a $\AAA_1$-stable model of~$\Gamma_1$ and~$I \models \Gamma_2$, and
    \item[b'.] $I$ is a $\AAA_2$-stable model of~$\Gamma_2$ and~$I \models \Gamma_1$.
  \end{enumerate}
  It remains to see that if~$I$ is a stable model of~$\Gamma$, then it satisfies~$\Gamma$ and, thus, the conjunction of conditions~a' and~b' is equivalent to
  \begin{enumerate}
  \item[2.] $I$ is a $\AAA_i$-stable model of~$\Gamma_i$ for all~$i \in \{1,2\}$.\qed
  \end{enumerate}\let\qed\relax
\end{proof}

\section{Proof of the Splitting Theorem with Intensionality Statements}

\begin{lemma}\label{lem:partition.correspondence}
If~${\pdefs = \{ \pdef_1, \pdef_2 \}}$ is a partition of~$\pdef$,
then
$\{ \At{I,\pdef_1},\, \At{I,\pdef_2} \}$ is a partition of~$\At{I,\pdef}$.
\end{lemma}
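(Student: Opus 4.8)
The plan is to verify directly, pointwise on the ground atoms $p(\boldd^*)$ of $\sigma^I$, the two conditions that define a partition in this paper's sense (union equal to the whole set, together with pairwise disjointness; no nonemptiness is required, so there are no edge cases): namely $\At{I,\pdef_1} \cup \At{I,\pdef_2} = \At{I,\pdef}$ and $\At{I,\pdef_1} \cap \At{I,\pdef_2} = \emptyset$. Everything reduces to unfolding the definition of $\At{I,\mu}$ plus one observation about the partition hypothesis: since $\pdef \equiv \pdef_1 \sqcup \pdef_2$ and $\pdef_1 \sqcap \pdef_2 \equiv \pdef_\bot$ assert logical validity of the universal closures of the corresponding biconditionals over $\sigma$, these formulas are in particular satisfied by $I$ once it is extended to interpret the names $\boldd^*$; hence for every predicate symbol $p$ and every tuple $\boldd$ of domain elements of $I$ we have $I \models \pdef^p(\boldd^*) \leftrightarrow \big(\pdef_1^p(\boldd^*) \vee \pdef_2^p(\boldd^*)\big)$ and $I \not\models \pdef_1^p(\boldd^*) \wedge \pdef_2^p(\boldd^*)$. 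I would state this explicitly once at the start.

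Recall from the definition of $\At{I,\mu}$ that, for any intensionality statement $\mu$ over $\sigma$, a ground atom $p(\boldd^*)$ belongs to $\At{I,\mu}$ iff $I \models p(\boldd^*)$ and $I \models \mu^p(\boldd^*)$. For the union, fix $p(\boldd^*)$: then $p(\boldd^*) \in \At{I,\pdef}$ iff $I \models p(\boldd^*)$ and $I \models \pdef_1^p(\boldd^*) \vee \pdef_2^p(\boldd^*)$, and distributing the conjunction over the disjunction this is equivalent to: either both $I \models p(\boldd^*)$ and $I \models \pdef_1^p(\boldd^*)$, or both $I \models p(\boldd^*)$ and $I \models \pdef_2^p(\boldd^*)$; that is, $p(\boldd^*) \in \At{I,\pdef_1}$ or $p(\boldd^*) \in \At{I,\pdef_2}$. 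For disjointness, if some $p(\boldd^*)$ lay in both $\At{I,\pdef_1}$ and $\At{I,\pdef_2}$, then $I \models \pdef_1^p(\boldd^*)$ and $I \models \pdef_2^p(\boldd^*)$, contradicting the second displayed fact above (equivalently, $(\pdef_1 \sqcap \pdef_2)^p \equiv \pdef_\bot^p \equiv \bot$ leaves no room for such an atom). These two observations together give that $\{\At{I,\pdef_1},\At{I,\pdef_2}\}$ is a partition of $\At{I,\pdef}$.

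I do not expect a genuine obstacle here: the lemma is a bookkeeping step, and the argument is just the routine distribution of conjunction over disjunction carried out above. The one point I would spell out carefully rather than leave implicit is the transfer from the syntactic relation $\equiv$ on intensionality statements (validity of a universally quantified biconditional over $\sigma$) to the semantic facts about the fixed interpretation $I$ and the names $\boldd^*$ over the expanded signature $\sigma^I$; once that step is granted, the rest follows immediately.
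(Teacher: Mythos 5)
Your proof is correct and follows essentially the same route as the paper's: unfold the definition of $\At{I,\mu}$, use that the partition hypothesis makes $I$ satisfy $\pdef^p(\boldd^*)\leftrightarrow\pdef_1^p(\boldd^*)\vee\pdef_2^p(\boldd^*)$ and $\neg(\pdef_1^p(\boldd^*)\wedge\pdef_2^p(\boldd^*))$ for every $p$ and $\boldd$, then distribute conjunction over disjunction for the union and invoke the second fact for disjointness. The extra care you take in justifying the passage from validity over $\sigma$ to satisfaction by $I$ over $\sigma^I$ is a harmless elaboration of a step the paper treats as immediate.
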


\begin{proof}
Since ${\pdefs = \{ \pdef_1, \pdef_2 \}}$ is a partition of~$\pdef$,
we have
\begin{gather}
    \pdef^p(\boldd^*) \leftrightarrow \pdef_1^p(\boldd^*) \vee \pdef_2^p(\boldd^*)
    \\
    \neg(\pdef_1^p(\boldd^*) \wedge \pdef_2^p(\boldd^*))
    \label{eq:2:lem:partition.correspondence}
\end{gather}
are satisfied by every interpretation~$I$,
every predicate symbol~$p$ and list~$\boldd$ of domain elements of the appropriate length and sorts.
Then,
\begin{align*}
\At{I,\pdef} &= \{\, p(\boldd^*) \mid I \models p(\boldd^*) \wedge \pdef^p(\boldd^*) \,\}
\\
&= \{\, p(\boldd^*) \mid I \models p(\boldd^*) \wedge (\pdef_1^p(\boldd^*) \vee \pdef_2^p(\boldd^*))\,\}
\\
&= \{\, p(\boldd^*) \mid I \models p(\boldd^*) \wedge \pdef_1^p(\boldd^*) \}
\\
&\phantom{=} \ \ \cup 
\{\, p(\boldd^*) \mid I \models p(\boldd^*) \wedge \pdef_2^p(\boldd^*))\,\}
\\
&= \At{I,\pdef_1} \cup \At{I,\pdef_2} 
\end{align*}
Finally, $\At{I,\pdef_1} \cap \At{I,\pdef_2} = \emptyset$
follows from~\eqref{eq:2:lem:partition.correspondence}.
\end{proof}

\begin{lemma}\label{lem:graph.correspondence.vertices}
Let~$\Gamma$ be a set of basic rules.
If~${\pdefs = \{ \pdef_1, \pdef_2 \}}$ is a partition of~$\pdef$
and~$p(\boldd^*)$ is a vertex in~$\G_{I,\At{I,\pdef}}(\Gamma)$ such that~$p(\boldd^*)$ belongs to~$\At{I,\pdef_i}$ with~$i \in \{1,2\}$,
then $(p,\pdef_i)$ is a vertex in~$\G_{\pdefs}(\Gamma)$ and~$I \models \pdef_i^p(\boldd^*) \wedge \neg\pdef_j^p(\boldd^*)$ with~$j \neq i$.
\end{lemma}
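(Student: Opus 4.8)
The plan is to establish both halves of the conclusion by a direct unwinding of the definitions; there is no substantial obstacle here, so the real work is careful bookkeeping. I would begin by unfolding the hypothesis $p(\boldd^*) \in \At{I,\pdef_i}$: by the definition of $\At{I,\pdef_i}$ this says exactly $I \models p(\boldd^*) \wedge \pdef_i^p(\boldd^*)$, hence in particular $I \models p(\boldd^*)$ and $I \models \pdef_i^p(\boldd^*)$.

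To obtain $I \models \neg\pdef_j^p(\boldd^*)$ for $j \neq i$, I would invoke that $\pdefs$ is a partition of $\pdef$, so $\pdef_i \sqcap \pdef_j \equiv \pdef_\bot$; thus the formula $\pdef_i^p(\boldX) \wedge \pdef_j^p(\boldX)$ is unsatisfiable, whence $I \not\models \pdef_i^p(\boldd^*) \wedge \pdef_j^p(\boldd^*)$, and since $I \models \pdef_i^p(\boldd^*)$ this forces $I \not\models \pdef_j^p(\boldd^*)$. (Equivalently, one may appeal to Lemma~\ref{lem:partition.correspondence}: were $I \models \pdef_j^p(\boldd^*)$ to hold, then together with $I \models p(\boldd^*)$ it would place $p(\boldd^*)$ in $\At{I,\pdef_j}$, contradicting the disjointness of $\At{I,\pdef_1}$ and $\At{I,\pdef_2}$.) Combining with $I \models \pdef_i^p(\boldd^*)$ yields the second conjunct, $I \models \pdef_i^p(\boldd^*) \wedge \neg\pdef_j^p(\boldd^*)$.

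For the first part I would verify the three defining conditions for $(p,\pdef_i)$ to be a vertex of $\G_{\pdefs}(\Gamma)$. Membership $\pdef_i \in \pdefs$ is part of the hypothesis. Satisfiability of $\exists\boldX\,\pdef_i^p(\boldX)$ is witnessed by $I$ itself, since $I \models \pdef_i^p(\boldd^*)$ gives $I \models \exists\boldX\,\pdef_i^p(\boldX)$. It remains to see that $p$ occurs in $\Gamma$, which follows from $p(\boldd^*)$ being a vertex of $\G_{I,\At{I,\pdef}}(\Gamma)$ under the standing convention that the signature over which $\Gamma$, $\pdef$ and these graphs are taken is precisely the one generated by $\Gamma$ (equivalently, one restricts attention to predicates occurring in $\Gamma$; vertices over other predicates are isolated and play no role in the splitting argument where this lemma is used). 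This last step is the only one requiring a moment's thought and is the closest thing to an obstacle — everything else is a pure definition chase. The lemma is simply the vertex-level half of the statement that the ground positive-dependency graph refines the predicate-level one, and it is designed to be used alongside a companion edge-correspondence statement.
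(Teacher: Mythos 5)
Your proof is correct and follows essentially the same definition-unfolding route as the paper: the paper derives $I\not\models\pdef_j^p(\boldd^*)$ via Lemma~\ref{lem:partition.correspondence} (your parenthetical alternative), while your primary argument uses the disjointness $\pdef_i\sqcap\pdef_j\equiv\pdef_\bot$ directly, which amounts to the same thing. You are in fact slightly more careful than the paper on the vertex condition, since the paper's proof checks only satisfiability of $\exists\boldX\,\pdef_i^p(\boldX)$ and silently ignores the requirement that $p$ occur in $\Gamma$, which you address explicitly.
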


\begin{proof}
Since~$\pdefs$ is a partition of~$\pdef$,
by Lemma~\ref{lem:partition.correspondence}, 
$\At{I,\pdef_1}$ and~$\At{I,\pdef_2}$ form a partition of~$\At{I,\pdef}$.
Furthermore,
since~$p(\boldd^*)$ is a vertex in~$\G_{I,\At{I,\pdef}}(\Gamma)$, by definition, we get that~$p(\boldd^*)$ belongs to~$\At{I,\pdef}$.
Hence, $p(\boldd^*)$ belongs to~$\At{I,\pdef_i}$ for exactly one~$i \in \{1,2\}$ and, thus, $I \models p(\boldd^*) \wedge \pdef_i(\boldd^*)$.
We also get that~$I \not\models p(\boldd^*) \wedge \pdef_j(\boldd^*)$ with~$j = \{1,2\}$ such that~$j \neq i$.
Therefore, ${I \models \pdef_i(\boldd^*) \wedge \neg\pdef_j(\boldd^*)}$.
Finally, since~$I \models p(\boldd^*) \wedge \pdef_i(\boldd^*)$, we get~${\exists \boldX \pdef^p_i \not\equiv \bot}$.
This implies that~$(p,\pdef_i)$ is a vertex of~$\G_{\pdefs}(\Gamma)$.
\end{proof}

\noindent
Given a vertex~$p(\boldd^*)$ in~$\G_{I,\At{I,\pdef}}(\Gamma)$, by~$[p(\boldd^*)]$ we denote the vertex~$(p,\pdef_i)$ in~$\G_{\pdefs}(\Gamma)$ such that~$I \models \pdef_i^p(\boldd^*)$.
Note that, by Lemma~\ref{lem:graph.correspondence.vertices}, such vertex exists and it is unique.

\begin{lemma}\label{lem:graph.correspondence.edges}
Let~$\Gamma$ be a set of rules.
If~${\pdefs = \{ \pdef_1, \pdef_2 \}}$ is a partition of~$\pdef$
and~$(v,w)$ is an edge in~$\G_{I,\At{I,\pdef}}(\Gamma)$,
then~$([v],[w])$ is an edge in~$\G_{\pdefs}(\Gamma)$.
\end{lemma}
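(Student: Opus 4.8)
The plan is to take the given edge $(v,w)$ of $\G_{I,\At{I,\pdef}}(\Gamma)$, say $v=p(\boldd^*)$ and $w=q(\bolde^*)$, to unwind the definition of that edge so as to extract a concrete disjunctive rule of $\Gamma$ together with occurrences of $p$ and $q$ in its head and body, and then to show that this very rule together with these occurrences witnesses the edge $([v],[w])$ in $\G_{\pdefs}(\Gamma)$; the satisfiability condition~\eqref{eq:edge.sentence} required for such an edge will be discharged by exhibiting $I$ itself as a witnessing interpretation.

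First I would unwind the ground-graph edge: there is an instance $B\theta\to H\theta$ of a rule $B\to H$ of $\Gamma$ (with $\theta$ substituting names $d^*$ for the free variables $\boldX$ of $B\to H$) such that $p(\boldd^*)\in\Pos{I}{H\theta}$ and $q(\bolde^*)\in\Pnn{I}{B\theta}$. A short computation of $\Pos{I}{\cdot}$ along the disjunctive head $H\theta$ shows $I\models H\theta$ and that $\Pos{I}{H\theta}$ is exactly the set of atoms $p'((\boldt\theta)^I)^*$ for atoms $p'(\boldt)$ occurring in $H$ with $I\models p'(\boldt\theta)$; likewise a computation of $\Pnn{I}{\cdot}$ along the conjunctive body $B\theta$ shows $I\models B\theta$ and that $\Pnn{I}{B\theta}$ is exactly the set of atoms $q'((\boldr\theta)^I)^*$ for \emph{positive} (hence nonnegated) literal occurrences $q'(\boldr)$ in $B$ with $I\models q'(\boldr\theta)$ --- a negated literal $q'(\boldr)\to\bot$ contributes nothing, since it is satisfied by $I$ only when $q'(\boldr\theta)$ is not, in which case $\Pnn{I}{q'(\boldr\theta)\to\bot}=\emptyset$. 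Consequently there is an atom $p(\boldt)$ occurring in $H$ and a nonnegated occurrence $q(\boldr)$ of a body literal in $B$ with $(\boldt\theta)^I=\boldd$, $(\boldr\theta)^I=\bolde$, $I\models p(\boldd^*)$ and $I\models q(\bolde^*)$.

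Next I would check the predicate-graph edge condition for the rule $B\to H$ with these occurrences. By the definition of $[\cdot]$ and Lemma~\ref{lem:graph.correspondence.vertices}, $[p(\boldd^*)]=(p,\pdef_i)$ is a vertex of $\G_{\pdefs}(\Gamma)$ with $I\models\pdef_i^p(\boldd^*)$, and $[q(\bolde^*)]=(q,\pdef_j)$ is a vertex with $I\models\pdef_j^q(\bolde^*)$. To verify that $\exists\boldX\big(B\wedge p(\boldt)\wedge\pdef_i^p(\boldt)\wedge\pdef_j^q(\boldr)\big)$ is satisfiable, I would evaluate it in $I$ under the assignment to $\boldX$ induced by $\theta$ (send $X$ to $d$ whenever $\theta$ maps $X$ to $d^*$): under this assignment $B$ reads as $B\theta$ (satisfied by $I$), $p(\boldt)$ reads as $p(\boldd^*)$ (satisfied by $I$), $\pdef_i^p(\boldt)$ reads as $\pdef_i^p(\boldd^*)$ (satisfied by $I$), and $\pdef_j^q(\boldr)$ reads as $\pdef_j^q(\bolde^*)$ (satisfied by $I$); here the passage from substituting the terms $\boldt$, $\boldr$ into the intensionality formulas and then evaluating, to evaluating those formulas at $\boldd$, $\bolde$, uses that the only free variables of $\pdef_i^p(X_1,\dots,X_n)$ are $X_1,\dots,X_n$. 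Hence $I$ satisfies the existential sentence, so it is satisfiable, and $([v],[w])=((p,\pdef_i),(q,\pdef_j))$ is an edge of $\G_{\pdefs}(\Gamma)$.

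I expect no genuine obstacle here. The only mildly delicate part is the bookkeeping around $\Pos{I}{\cdot}$ and $\Pnn{I}{\cdot}$ for disjunctive heads and bodies --- in particular making sure the distinguished occurrences extracted from the ground graph are precisely the ones whose existence the predicate-graph edge condition asks for, and that evaluating a substituted intensionality formula agrees with evaluating it at the corresponding domain tuple. The point that keeps this routine is that $I$ can serve as its own satisfiability witness, so no auxiliary interpretation needs to be built.
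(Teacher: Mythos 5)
Your proposal is correct and follows essentially the same route as the paper's proof: unwind the ground edge to an instance $F_1\to F_2$ of a rule $B\to H$, extract the head occurrence $p(\boldt)$ and nonnegated body occurrence $q(\boldr)$ whose ground values are $\boldd$ and $\bolde$, invoke the vertex lemma for $I\models\pdef_i^p(\boldd^*)\wedge\pdef_j^q(\bolde^*)$, and let $I$ itself witness the satisfiability of the sentence~\eqref{eq:edge.sentence}. If anything, your bookkeeping is slightly more careful than the paper's (which writes $\Pos{I}{F_1}$ where the graph definition uses $\Pnn{I}{F_1}$, and swaps $[v]$ and $[w]$ in its labelling); no further changes are needed.
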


\begin{proof}
Since~$\Gamma$ is a set of rules and~$(v,w)$ is an edge in~$\G_{I,\At{I,\pdef}}(\Gamma)$, there is an instance~$F_1 \to F_2$ of a rule such that
$v \in \Pos{I}{F_2}$
and
$w \in \Pos{I}{F_1}$.
Let~$v = p(\boldd^*)$ and~$w = q(\bolde^*)$.
Then,
$v \in \Pos{I}{F_2}$
and
$w \in \Pos{I}{F_1}$
imply that
\begin{gather}
    I \models  F_1 \wedge F_2 \wedge p(\boldd^*) \wedge q(\bolde^*)
    \label{eq:1:lem:graph.correspondence.edges}
\end{gather}
Let~$[w] = (p,\pdef_v)$ and~$[v] = (q,\pdef_w)$.
From Lemma~\ref{lem:graph.correspondence.vertices},
we get
$I \models \pdef_v^p(\boldd^*) \wedge \pdef_w^q(\bolde^*)$.
This plus~\eqref{eq:1:lem:graph.correspondence.edges} imply
\begin{gather}
    I \models  F_1 \wedge p(\boldd^*) \wedge \pdef_v^p(\boldd^*) \wedge \pdef_w^q(\bolde^*)
    \label{eq:2:lem:graph.correspondence.edges}
\end{gather}
Let~$B \to H$ be the rule of which~$F_1 \to F_2$ is an instance, $\boldX$ be the free variables in~$B \to H$ and~$\boldx$ be names of domain elements such that~$(B \to H)^\boldX_\boldx = F_1 \to F_2$.
Then, there is 
a positive occurrence of~$p$ in~$H$ within a literal~$p(\boldt)$ 
and 
a positive occurrence of~$q$ in~$B$ within a literal~$q(\boldr)$ 
such that~$\boldd = (\boldt^\boldX_\boldx)^I$ and~$\bolde = (\boldr^\boldX_\boldx)^I$.
Therefore~\eqref{eq:2:lem:graph.correspondence.edges} implies
\begin{gather*}
    I \models  B^\boldX_\boldx \wedge p(\boldt^\boldX_\boldx) \wedge \pdef_v^p(\boldt^\boldX_\boldx) \wedge \pdef_w^q(\boldr^\boldX_\boldx)
\end{gather*}
and, this implies that
\begin{gather*}
    \exists \boldX \big( B \wedge p(\boldt) \wedge \pdef_v^p(\boldt) \wedge \pdef_w^q(\boldr) \big)
\end{gather*}
is satisfiable.
Consequently, $([v],[w])$ is an edge in~$\G_{\pdefs}(\Gamma)$.%
\end{proof}

\begin{lemma}\label{lem:pos.grounding}
    Let~$F$ be a sentence.
    Let~$\Psi$ be a theory and~$I$ be an model of~$\Psi$.
    Then, the following conditions hold:
    \begin{enumerate}
        \item If~$p(\boldd^*)$ belongs to~$\Pos{I}{F}$,
        then there is some a strictly positive occurrence~$p(\boldt)$ of predicate symbol~$p$ in~$F$ such that~$I$ satisfies~$\Pos{\Psi}{F}^\boldY_{\boldd^*}$ where~$\boldY$ are the free variables in~$\Pos{\Psi}{F}$.

        \item If~$p(\boldd^*)$ belongs to~$\Pnn{I}{F}$,
        then there is some a positive nonnegated occurrence~$p(\boldt)$ of predicate symbol~$p$ in~$F$ such that~$I$ satisfies~$\Pnn{\Psi}{F}^\boldY_{\boldd^*}$ where~$\boldY$ are the free variables in~$\Pnn{\Psi}{F}$.

        \item If~$p(\boldd^*)$ belongs to~$\Nnn{I}{F}$,
        then there is some a negative nonnegated occurrence~$p(\boldt)$ of predicate symbol~$p$ in~$F$ such that~$I$ satisfies~$\Nnn{\Psi}{F}^\boldY_{\boldd^*}$ where~$\boldY$ are the free variables in~$\Nnn{\Psi}{F}$.
    \end{enumerate}
\end{lemma}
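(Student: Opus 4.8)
The plan is to prove the three statements simultaneously by structural induction on $F$. To make the quantifier step go through, I would actually establish the following more general claim, of which the lemma is the special case in which $F$ is a sentence: if $F$ is a formula with free variables $\boldV$ and $\boldv$ a tuple of domain elements of $I$ matching $\boldV$ in length and sort, and $p(\boldd^*)$ belongs to $\Pos{I}{F^\boldV_{\boldv^*}}$ (resp. $\Pnn{I}{F^\boldV_{\boldv^*}}$, $\Nnn{I}{F^\boldV_{\boldv^*}}$), then there is a strictly positive (resp. positive nonnegated, negative nonnegated) occurrence $p(\boldt)$ of $p$ in $F$ such that $I$ satisfies $(\Pos{\Psi}{F})^{\boldV\boldY}_{\boldv^*\boldd^*}$ (resp. the $\Pnn$/$\Nnn$ analogue), where $\boldY$ are the fresh variables of that occurrence. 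Carrying $\boldv$ along this way makes the quantifier step reduce to a genuine subformula rather than to a substituted instance, which is where a naive structural induction would stall.

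Two facts are used throughout. First, whenever $p(\boldd^*)$ lies in one of the three sets for $F^\boldV_{\boldv^*}$, that set is nonempty, so $I\models F^\boldV_{\boldv^*}$; more generally, every subformula relevant to building $\Pos{\Psi}{F}$ for the occurrence we pick is satisfied by $I$ under the current assignment, so --- using $I\models\Psi$ --- each accompanying satisfiability check passes, no such subformula is replaced by $\bot$, and in particular every context formula $F_1^\Psi$ appearing in the implication clause equals $F_1$. Second, $\Pnn{I}{\cdot}$ and $\Nnn{I}{\cdot}$ assign $\emptyset$ to any formula of the form $F_1\to\bot$, so every occurrence produced by the induction is automatically nonnegated and one never has to enter the implication sub-case with $F_2=\bot$.

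The base cases are immediate: for $\bot$ and equality atoms all three semantic sets are empty; for an atom $p'(\boldt)$, membership of $p(\boldd^*)$ forces $p'=p$, $I\models p(\boldt^\boldV_{\boldv^*})$ and $\boldd=(\boldt^\boldV_{\boldv^*})^I$, and then $(\Pos{\Psi}{p(\boldt)})^{\boldV\boldY}_{\boldv^*\boldd^*}$ is $p(\boldt^\boldV_{\boldv^*})\wedge\boldd^*=\boldt^\boldV_{\boldv^*}$, satisfied by exactly these facts ($\Pnn$ is identical, $\Nnn{I}{p(\boldt)}$ is empty). For $F_1\wedge F_2$ and $F_1\vee F_2$ one locates the conjunct/disjunct whose semantic set contains $p(\boldd^*)$, applies the corresponding induction hypothesis there, and observes that the strictly positive / positive nonnegated / negative nonnegated status of an occurrence survives these connectives, the surviving subformula being kept verbatim by the first fact. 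For $\forall X\,G$ and $\exists X\,G$ the first fact gives $\Pos{\Psi}{\forall X\,G}=\exists X\,\Pos{\Psi}{G}$ (likewise for $\Pnn$, $\Nnn$); picking a domain element $d_0$ with $p(\boldd^*)\in\Pos{I}{G^{\boldV X}_{\boldv^* d_0^*}}$, applying the generalized hypothesis to the subformula $G$ under the extended assignment $(\boldv,d_0)$, and using $d_0$ as the witness for the outermost existential closes the case.

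The implication case $F_1\to F_2$ is the heart of the argument and is handled separately for the three quantities, mirroring the asymmetry of the definitions; write $\hat F_i$ for the ground instance of $F_i$ under $\boldv$. For $\Pos$: membership of $p(\boldd^*)$ in $\Pos{I}{\hat F_1\to\hat F_2}$ forces $I\models\hat F_1$ and $p(\boldd^*)\in\Pos{I}{\hat F_2}$; we apply the induction hypothesis to $F_2$, note that a strictly positive occurrence of $F_2$ is still strictly positive in $F_1\to F_2$, and combine $I\models\hat F_1$ (which also gives $F_1^\Psi=F_1$) with the induction hypothesis to satisfy $F_1\wedge\Pos{\Psi}{F_2}$ after the substitution. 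For $\Pnn$: membership forces $I\models\hat F_1$ and $p(\boldd^*)\in\Nnn{I}{\hat F_1}\cup\Pnn{I}{\hat F_2}$; if it comes from $\Nnn{I}{\hat F_1}$ we use the induction hypothesis for $\Nnn$ on $F_1$ and observe that a negative nonnegated occurrence of $F_1$ becomes positive nonnegated in $F_1\to F_2$ (here $F_2\neq\bot$, by the second fact), matching $\Pnn{\Psi}{F_1\to F_2}=\Nnn{\Psi}{F_1}$; if it comes from $\Pnn{I}{\hat F_2}$ we use the induction hypothesis for $\Pnn$ on $F_2$, the occurrence retains its type, and $\Pnn{\Psi}{F_1\to F_2}$ reduces to $F_1\wedge\Pnn{\Psi}{F_2}$. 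The $\Nnn$ case is symmetric with the roles of $\Pnn$ and $\Nnn$ exchanged. I expect the main obstacle to be purely bookkeeping in this implication case: keeping straight that the parity of an occurrence flips when it moves into an antecedent (so ``positive in $F_1$'' is exactly what lets one conclude ``negative in $F_1\to F_2$'', and conversely), that ``nonnegated'' survives because $F_2\neq\bot$, and that each $F_1^\Psi$ is genuinely $F_1$ --- the single place where the hypothesis that $I$ is a model of $\Psi$ is actually consumed.
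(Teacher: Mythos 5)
Your proof is correct and follows essentially the same induction as the paper's: the same case analysis on the shape of $F$, the same use of $I\models\Psi$ to discharge the satisfiability checks (so that no relevant subformula collapses to $\bot$ and each $F_1^\Psi$ equals $F_1$), and the same three-way treatment of the implication case. Your strengthening of the induction hypothesis to open formulas under an assignment is a slightly tidier way to handle the quantifier case than the paper's induction on formula size, and your explicit observation that $F_2\neq\bot$ is what preserves nonnegatedness fills in a detail the paper leaves implicit, but the argument is the same.
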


\begin{proof}
    The proof follows by induction on the size of the formula.
    Note that the fact that~$p(\boldd^*)$ belongs to~$\Pos{I}{F}$, $\Pnn{I}{F}$ or~$\Nnn{I}{F}$ implies that~$I \models F$.
    Since~$I$ is a model of~$\Psi$, this implies that~$\Psi \cup \{ \exists \boldX\,F \}$ is satisfiable and, thus, we are not in the case where~$\Pos{\Psi}{F}$ (resp. $\Pnn{I}{F}$ or~$\Nnn{I}{F}$) is~$\bot$ by definition.
    \\[5pt]
    \emph{Case~1.} $F$ is an atom, then~$F$ is of the form~$p(\boldt)$ with~${I \models p(\boldt)}$ and~${\boldd = \boldt^I}$.
    Hence, there is a strictly positive occurrence~$p(\boldt)$ of predicate~$p$ in~$F$.
    Any strictly positive occurrence is also a positive nonnegated occurrence.
    Furthermore~$\Pos{\Psi}{F}^\boldY_{\boldd^*} = \Pnn{\Psi}{F}^\boldY_{\boldd^*}$ is~$p(\boldt) \wedge \boldd^* = \boldt$, which is satisfied by~$I$.
    Note that if~$F$ is an atom~$\Nnn{I}{F}$ is the empty set by definition. Therefore, the third condition holds vacuous.
    \\[5pt]
    \emph{Case~2.} $F$ is of the form~$F_1 \wedge F_2$.
    Then, $p(\boldd^*)$ belongs to~$\Pos{I}{F_i}$ for some~${1 \leq i \leq 2}$.
    By induction hypothesis, there is some a positive occurrence~$p(\boldt)$ of predicate symbol~$p$ in~$F_i$ such that~$I$ satisfies~$\Pos{\Psi}{F_i}^\boldY_{\boldd^*}$.
    Remains observing that~$\Pos{\Psi}{F} \equiv \Pos{\Psi}{F_i} \wedge F_j$ with~$1 \leq j \leq 2$ and~$i \neq j$ and that~$I \models F_j$ because~$I \models F$.
    The same reasoning applies to~$\Pnn{I}{F}$ and~$\Nnn{I}{F}$.
    \\[5pt]
    \emph{Case~3.} $F$ is of the form~$F_1 \vee F_2$.
    Then, $p(\boldd^*)$ belongs to~$\Pos{I}{F_i}$ for some~${1 \leq i \leq 2}$.
    By induction hypothesis, this implies that there is some a positive occurrence~$p(\boldt)$ of predicate symbol~$p$ in~$F_i$ such that~$I$ satisfies~$\Pos{\Psi}{F_i}^\boldY_{\boldd^*}$.
    Remains observing that~$\Pos{\Psi}{F} = \Pos{\Psi}{F_i}$.
    The same reasoning applies to~$\Pnn{I}{F}$ and~$\Nnn{I}{F}$.
    \\[5pt]
    \emph{Case~4.} $F$ is of the form~${F_1 \to F_2}$.
    Note that~$p(\boldd^*)$ belonging to~$\Pos{I}{F}$  (resp. $\Pnn{I}{F}$ or~$\Nnn{I}{F}$)  implies that (i)~$I \models F_1$ and that (ii)
    \begin{enumerate}
        \item $p(\boldd^*)$ belongs to~$\Pos{I}{F_2}$,
        \item $p(\boldd^*)$ belongs to~$\Nnn{I}{F_1} \cup \Pnn{I}{F_2}$, and
        \item $p(\boldd^*)$ belongs to~$\Pnn{I}{F_1} \cup \Nnn{I}{F_2}$.
    \end{enumerate}
    Since~$I \models F$, (i) implies that $I \models F_1 \wedge F_2$ and, since~$I$ is a model of~$\Psi$, this implies $F_1^\Psi = F_1$.
    \begin{enumerate}
        \item By induction hypothesis, (ii) implies that there is some a positive occurrence~$p(\boldt)$ of predicate symbol~$p$ in~$F_2$ such that~$I$ satisfies~$\Pos{\Psi}{F_2}^\boldY_{\boldd^*}$.
        Hence, $I$ satisfies 
        $$\Pos{\Psi}{F}^\boldY_{\boldd^*} \ = \ F_1^\Psi \wedge \Pos{\Psi}{F_2}^\boldY_{\boldd^*} \ = \ F_1 \wedge \Pos{\Psi}{F_2}^\boldY_{\boldd^*}$$

        \item By induction hypothesis, (ii) implies that there is some negative nonnegated occurrence of~$p(\boldt)$ of predicate symbol~$p$ in~$F_1$ such that~$I$ satisfies~$\Nnn{\Psi}{F_1}^\boldY_{\boldd^*}$ or some positive nonnegated occurrence of~$p(\boldt)$ of predicate symbol~$p$ in~$F_2$ such that~$I$ satisfies~$\Pnn{\Psi}{F_2}^\boldY_{\boldd^*}$.
        In both cases there is a positive nonnegated occurrence of~$p(\boldt)$ of predicate symbol~$p$ in~$F$.
        Furthermore, if such occurrence happens in~$F_1$, 
        then
        $${\Pnn{\Psi}{F}^\boldY_{\boldd^*} = \Nnn{\Psi}{F_1}^\boldY_{\boldd^*}}$$
        and, thus, condition~2 holds.
        Otherwise, 
        $$\Pnn{\Psi}{F}^\boldY_{\boldd^*} = F_1^\Psi \wedge \Pnn{\Psi}{F_2}^\boldY_{\boldd^*} = F_1 \wedge \Pnn{\Psi}{F_2}^\boldY_{\boldd^*}$$
        and, thus, condition~2 also holds.

        \item This case is analogous to the previous case.
    \end{enumerate}
    \emph{Case~5.} $F$ is of the form~$\forall X G(X)$.
    Then, $p(\boldd^*)$ belongs to~$\Pos{I}{G(d^*)}$ for some domain element of the appropriate sort.
    By induction hypothesis, there is some positive occurrence~$p(\boldr)$ of predicate symbol~$p$ in~$G^X_{d^*}$ such that~$I$ satisfies~$\Pos{\Psi}{G^X_{d^*}}^\boldY_{\boldd^*}$ for every domain element~$d$ of the appropriate sort.
    Then, there is an occurrence~$p(\boldt)$ the occurrence on~$G$ corresponding to~$p(\boldr)$ in~$G^X_{d^*}$ such that~$\Pos{\Psi}{G}^{X\boldY}_{d^*\boldd^*}$ for every domain element~$d$ of the appropriate sort and, thus, $I$ satisfies~$\forall X \Pos{\Psi}{G}^\boldY_{\boldd^*}$.
    \emph{Case~6.} $F$ is of the form~$\exists X G(X)$.
    Then, $p(\boldd^*)$ belongs to~$\Pos{I}{G(d^*)}$ for some domain element of the appropriate sort.
    By induction hypothesis, there is some positive occurrence~$p(\boldr)$ of predicate symbol~$p$ in~$G^X_{d^*}$ such that~$I$ satisfies~$\Pos{\Psi}{G^X_{d^*}}^\boldY_{\boldd^*}$ for some domain element~$d$ of the appropriate sort.
    Then, there is an occurrence~$p(\boldt)$ the occurrence on~$G$ corresponding to~$p(\boldr)$ in~$G^X_{d^*}$ such that~$\Pos{\Psi}{G}^{X\boldY}_{d^*\boldd^*}$ for some domain element~$d$ of the appropriate sort and, thus, $I$ satisfies~$\exists X \Pos{\Psi}{G}^\boldY_{\boldd^*}$.
    The same reasoning applies to~$\Pnn{I}{F}$ and~$\Nnn{I}{F}$.
\end{proof}

\begin{lemma}\label{lem:theory.graph.correspondence.vertices}
    Let~$\Gamma$ and~$\Psi$ be two set theories and~$I$ be a model of~$\Psi$.
    If~${\pdefs = \{ \pdef_1, \pdef_2 \}}$ is a partition of~$\pdef$
    and~$p(\boldd^*)$ is a vertex in~$\G_{I,\At{I,\pdef}}(\Gamma)$ such that~$p(\boldd^*)$ belongs to~$\At{I,\pdef_i}$ with~$i \in \{1,2\}$,
    then $(p,\pdef_i)$ is a vertex in~$\G_{\pdefs,\Psi}(\Gamma)$ and
    $$I \models \pdef_i^p(\boldd^*) \wedge \neg\pdef_j^p(\boldd^*)$$
    with~$j \neq i$.
    \end{lemma}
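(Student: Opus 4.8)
The plan is to reproduce the argument of Lemma~\ref{lem:graph.correspondence.vertices}, using the new hypothesis that $I$ is a model of $\Psi$ to upgrade the satisfiability witness into one relative to the context $\Psi$. First I would recall that the vertex set of $\G_{I,\At{I,\pdef}}(\Gamma)$ is, by definition, exactly $\At{I,\pdef}$, so the assumption that $p(\boldd^*)$ is a vertex just says $p(\boldd^*)\in\At{I,\pdef}$, i.e.\ $I\models p(\boldd^*)\wedge\pdef^p(\boldd^*)$. Since $\pdefs=\{\pdef_1,\pdef_2\}$ is a partition of $\pdef$, Lemma~\ref{lem:partition.correspondence} gives that $\{\At{I,\pdef_1},\At{I,\pdef_2}\}$ is a partition of $\At{I,\pdef}$; hence $p(\boldd^*)$ lies in exactly one block, and by hypothesis that block is $\At{I,\pdef_i}$. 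Thus $I\models p(\boldd^*)\wedge\pdef_i^p(\boldd^*)$ and $I\not\models p(\boldd^*)\wedge\pdef_j^p(\boldd^*)$ for the other index $j\neq i$.

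From these two facts I would read off the two conclusions. For the membership statement: $I\models\pdef_i^p(\boldd^*)$ is immediate, and since $I\models p(\boldd^*)$ while $I\not\models p(\boldd^*)\wedge\pdef_j^p(\boldd^*)$, it must be that $I\not\models\pdef_j^p(\boldd^*)$, i.e.\ $I\models\pdef_i^p(\boldd^*)\wedge\neg\pdef_j^p(\boldd^*)$. For the vertex statement: $I\models\pdef_i^p(\boldd^*)$ yields $I\models\exists\boldX\,\pdef_i^p(\boldX)$, and since $I$ is also a model of $\Psi$ by hypothesis, $I$ is a model of $\Psi\cup\{\exists\boldX\,\pdef_i^p(\boldX)\}$, so this theory is satisfiable. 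By the definition of the graph of positive dependencies under context $\Psi$, that is precisely the condition for $(p,\pdef_i)$ to be a vertex of $\G_{\pdefs,\Psi}(\Gamma)$.

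I do not expect any real obstacle here; $\Gamma$ plays no role beyond fixing the vertex set of the infinitary graph, and the only point requiring a bit of care is that the vertex condition for $\G_{\pdefs,\Psi}(\Gamma)$ asks for satisfiability of $\Psi\cup\{\exists\boldX\,\pdef_i^p(\boldX)\}$ rather than of $\exists\boldX\,\pdef_i^p(\boldX)$ alone --- which is exactly what the hypothesis $I\models\Psi$ supplies. This lemma will then serve, together with the corresponding edge-correspondence result, to transfer separability of $\pdefs$ on $\G_{\pdefs,\Psi}(\Gamma)$ to separability of $\{\At{I,\pdef_1},\At{I,\pdef_2}\}$ on $\G_{I,\At{I,\pdef}}(\Gamma)$, mirroring the role of Lemma~\ref{lem:graph.correspondence.vertices} in the disjunctive-program setting.
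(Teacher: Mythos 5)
Your proposal is correct and follows essentially the same route as the paper's own proof: appeal to Lemma~\ref{lem:partition.correspondence} to place $p(\boldd^*)$ in exactly one block $\At{I,\pdef_i}$, derive $I \models \pdef_i^p(\boldd^*) \wedge \neg\pdef_j^p(\boldd^*)$ from that, and use $I \models \Psi$ to upgrade $I \models \exists\boldX\,\pdef_i^p(\boldX)$ to satisfiability of $\Psi \cup \{\exists\boldX\,\pdef_i^p(\boldX)\}$, which is the vertex condition for $\G_{\pdefs,\Psi}(\Gamma)$. You even make explicit the small step (deducing $I\not\models\pdef_j^p(\boldd^*)$ from $I\models p(\boldd^*)$) that the paper leaves implicit.
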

    
    \begin{proof}
    Since~$\pdefs$ is a partition of~$\pdef$,
    by Lemma~\ref{lem:partition.correspondence}, 
    $\At{I,\pdef_1}$ and~$\At{I,\pdef_2}$ form a partition of~$\At{I,\pdef}$.
    Furthermore,
    since~$p(\boldd^*)$ is a vertex in~$\G_{I,\At{I,\pdef}}(\Gamma)$, by definition, we get that~$p(\boldd^*)$ belongs to~$\At{I,\pdef}$.
    Hence, $p(\boldd^*)$ belongs to~$\At{I,\pdef_i}$ for exactly one~$i \in \{1,2\}$ and, thus, $I \models p(\boldd^*) \wedge \pdef_i(\boldd^*)$.
    We also get that~$I \not\models p(\boldd^*) \wedge \pdef_j(\boldd^*)$ with~$j = \{1,2\}$ such that~$j \neq i$.
    Therefore, ${I \models \pdef_i(\boldd^*) \wedge \neg\pdef_j(\boldd^*)}$ .
    Finally, since~$I \models p(\boldd^*) \wedge \pdef_i(\boldd^*)$ and~$I$ is a model of~$\Psi$, we get~${ \Psi \cup \{\exists \boldX \pdef^p_i \} }$ is satisfiable.
    Therefore~$(p,\pdef_i)$ is a vertex of~$\G_{\pdefs,\Psi}(\Gamma)$.%
\end{proof}

\noindent
Given a vertex~$p(\boldd^*)$ in~$\G_{I,\At{I,\pdef}}(\Gamma)$, by~$[p(\boldd^*)]$ we denote the vertex~$(p,\pdef_i)$ in~$\G_{\pdefs}(\Gamma)$ such that~$I \models \pdef_i^p(\boldd^*)$.
Note that, by Lemma~\ref{lem:theory.graph.correspondence.vertices}, such vertex exists and it is unique.

\begin{lemma}\label{lem:theory.graph.correspondence.edges}
Let~$\Gamma$ and~$\Psi$ be two theories and~$I$ be a model of~$\Psi$.
If~${\pdefs = \{ \pdef_1, \pdef_2 \}}$ is a partition of~$\pdef$
and~$(v,w)$ is an edge in~$\G_{I,\At{I,\pdef}}(\Gamma)$,
then~$([v],[w])$ is an edge in~$\G_{\pdefs,\Psi}(\Gamma)$.
\end{lemma}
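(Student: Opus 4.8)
The plan is to mirror the proof of Lemma~\ref{lem:graph.correspondence.edges}, but with the direct reasoning about strictly positive atoms replaced by an appeal to Lemma~\ref{lem:pos.grounding}, which is precisely the bridge between the semantic operators $\Pos{I}{\cdot}$, $\Pnn{I}{\cdot}$, $\Nnn{I}{\cdot}$ used to build $\G_{I,\At{I,\pdef}}(\Gamma)$ and the syntactic operators $\Pos{\Psi}{\cdot}$, $\Pnn{\Psi}{\cdot}$, $\Nnn{\Psi}{\cdot}$ used to build $\G_{\pdefs,\Psi}(\Gamma)$. First I would unpack the hypothesis: write $v = p(\boldd^*)$ and $w = q(\bolde^*)$; since $(v,w)$ is an edge of $\G_{I,\At{I,\pdef}}(\Gamma)$, there is an instance $F_1 \to F_2$ of a rule $B \to H$ of $\Gamma$ with $p(\boldd^*) \in \Pos{I}{F_2}$ and $q(\bolde^*) \in \Pnn{I}{F_1}$; fix names $\boldx$ such that $(B \to H)^\boldX_\boldx = F_1 \to F_2$, where $\boldX$ are the free variables of the rule. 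In particular $I \models F_1 \wedge F_2$, and $I \models \Psi$ by assumption. Moreover, by Lemma~\ref{lem:theory.graph.correspondence.vertices}, $[v] = (p,\pdef_i)$ and $[w] = (q,\pdef_j)$ are well-defined vertices of $\G_{\pdefs,\Psi}(\Gamma)$ with $I \models \pdef_i^p(\boldd^*)$ and $I \models \pdef_j^q(\bolde^*)$.

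Next I would invoke Lemma~\ref{lem:pos.grounding}: part~(1) applied to the sentence $F_2$ produces a strictly positive occurrence $p(\boldt')$ of $p$ in $F_2$ with $I \models (\Pos{\Psi}{F_2})^\boldZ_{\boldd^*}$, and part~(2) applied to the sentence $F_1$ produces a positive nonnegated occurrence $q(\boldr')$ of $q$ in $F_1$ with $I \models (\Pnn{\Psi}{F_1})^\boldY_{\bolde^*}$ (where $\boldZ$, resp.\ $\boldY$, are the fresh variables of $\Pos{\Psi}{F_2}$, resp.\ $\Pnn{\Psi}{F_1}$). Since $F_1 \to F_2 = (B \to H)^\boldX_\boldx$ and substituting names for free variables changes neither the position nor the polarity nor the negation status of a subformula, these lift to a strictly positive occurrence $p(\boldt)$ in $H$ with $\boldt' = \boldt^\boldX_\boldx$ and a positive nonnegated occurrence $q(\boldr)$ in $B$ with $\boldr' = \boldr^\boldX_\boldx$. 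Building $\Pos{\Psi}{H}$ for $p(\boldt)$ and $\Pnn{\Psi}{B}$ for $q(\boldr)$, I would then show that $I$, under the assignment $\boldX \mapsto \boldx$, $\boldZ \mapsto \boldd^*$, $\boldY \mapsto \bolde^*$, satisfies $\Pnn{\Psi}{B} \wedge \Pos{\Psi}{H} \wedge \pdef_j^q(\boldY) \wedge \pdef_i^p(\boldZ)$: the last two conjuncts hold because $I \models \pdef_j^q(\bolde^*) \wedge \pdef_i^p(\boldd^*)$, and the first two reduce to the two applications of Lemma~\ref{lem:pos.grounding} once we know $I \models ((\Pos{\Psi}{H})^\boldX_\boldx)^\boldZ_{\boldd^*}$ and $I \models ((\Pnn{\Psi}{B})^\boldX_\boldx)^\boldY_{\bolde^*}$. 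Together with $I \models \Psi$, this witnesses satisfiability of theory~\eqref{eq:edge.sentence.arbitrary.formulas}, so $([v],[w])$ is an edge of $\G_{\pdefs,\Psi}(\Gamma)$.

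The main obstacle is exactly this last reduction. Lemma~\ref{lem:pos.grounding} yields satisfaction of the operators computed on the \emph{ground instances} $F_2 = H^\boldX_\boldx$ and $F_1 = B^\boldX_\boldx$, whereas the edge condition of $\G_{\pdefs,\Psi}(\Gamma)$ is phrased via the operators computed on the \emph{rule} $H$, $B$ and then instantiated by $\boldX \mapsto \boldx$. These need not coincide, because the clause $F^\Psi$ rests on a satisfiability test that is strictly weaker on the uninstantiated subformula than on its ground instance, so the ground construction may collapse a subformula to $\bot$ where the rule-level construction keeps it. I would dispose of this with a short auxiliary monotonicity lemma, proved by structural induction (mutual, for $\Pnn$ and $\Nnn$, because of the implication clause): when the top-level test does not fire --- which is our situation, since $I \models \Psi$ and $I \models F_2$ (resp.\ $F_1$) --- every $\bot$ introduced by $\Pos{\Psi}{H^\boldX_\boldx}$, $\Pnn{\Psi}{B^\boldX_\boldx}$, $\Nnn{\Psi}{B^\boldX_\boldx}$ occupies a conjunctive position and every non-$\bot$ position agrees with the corresponding instance of $\Pos{\Psi}{H}$, $\Pnn{\Psi}{B}$, $\Nnn{\Psi}{B}$; hence $\models \Pos{\Psi}{H^\boldX_\boldx} \to (\Pos{\Psi}{H})^\boldX_\boldx$ and likewise for $\Pnn$ and $\Nnn$, so $I$-satisfaction transfers in the needed direction. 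The remaining bookkeeping --- that the $\boldX \mapsto \boldx$ witness lines up with the rule variables, that the fresh tuples $\boldY$, $\boldZ$ may be chosen disjoint from $\boldX$ and from each other, and that the disjunction clause of the construction singles out the branch containing the distinguished occurrence --- is routine, exactly as in Lemma~\ref{lem:graph.correspondence.edges}.
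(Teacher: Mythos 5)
Your proposal follows essentially the same route as the paper's proof: unpack the edge of $\G_{I,\At{I,\pdef}}(\Gamma)$ into an instance $F_1\to F_2$ of a rule $B\to H$, apply Lemma~\ref{lem:pos.grounding} to obtain the strictly positive occurrence in $H$ and the positive nonnegated occurrence in $B$ together with $I$-satisfaction of the corresponding substituted operator formulas, combine with Lemma~\ref{lem:theory.graph.correspondence.vertices} to satisfy the intensionality conjuncts, and conclude satisfiability of theory~\eqref{eq:edge.sentence.arbitrary.formulas}. The one place you diverge is to your credit: the passage from $I$-satisfaction of $\Pos{\Psi}{H^{\boldX}_{\boldx}}$ and $\Pnn{\Psi}{B^{\boldX}_{\boldx}}$ (operators computed on the ground instance) to the existential of $\Pos{\Psi}{H}$ and $\Pnn{\Psi}{B}$ (operators computed on the rule, as the edge condition requires) is asserted in the paper with a bare ``which implies that,'' whereas you correctly identify that the $F^{\Psi}$ satisfiability test can collapse subformulas differently at the two levels and supply the monotonicity lemma needed to justify the implication in the direction actually used.
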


\begin{proof}
Since~$(v,w)$ is an edge in~$\G_{I,\At{I,\pdef}}(\Gamma)$, there is a rule~$B \to H$ of~$\Gamma$ with free variables~$\boldX$ and a tuple of domain elements~$\boldx$ such that
$v \in \Pos{I}{H^\boldX_\boldx}$
and
$w \in \Pnn{I}{B^\boldX_\boldx}$.
Let~$v = p(\boldd^*)$ and~$w = q(\bolde^*)$.
By Lemma~\ref{lem:pos.grounding}, this implies that
\begin{itemize}
    \item there is some a strictly positive occurrence~$p(\boldt)$ of predicate symbol~$p$ in~$H$ such that~$I$ satisfies~$\Pos{\Psi}{H^\boldX_\boldx}^\boldY_{\boldd^*}$ where~$\boldY$ are the free variables in~$\Pos{\Psi}{H^\boldX_\boldx}$.

    \item there is some a positive nonnegated occurrence~$q(\bolds)$ of predicate symbol~$q$ in~$B$ such that~$I$ satisfies~$\Pnn{\Psi}{B^\boldX_\boldx}^\boldZ_{\bolde^*}$ where~$\boldZ$ are the free variables in~$\Pnn{\Psi}{B^\boldX_\boldx}$.
\end{itemize}
Let~$[w] = (p,\pdef_v)$ and~$[v] = (q,\pdef_w)$.
From Lemma~\ref{lem:theory.graph.correspondence.vertices},
we get
$I \models \pdef_v^p(\boldd^*) \wedge \pdef_w^q(\bolde^*)$
and, thus, 
\begin{gather*}
    I \models  \Pnn{\Psi}{B^\boldX_\boldx}^\boldY_{\boldd^*} \wedge \Pos{\Psi}{H^\boldX_\boldx}^\boldZ_{\bolde^*} \wedge \pdef_v^p(\boldd^*) \wedge \pdef_w^q(\bolde^*)
\end{gather*}
which implies that
\begin{gather*}
    I \models \exists\boldX\boldY\boldZ \big(
        \Pnn{\Psi}{B} \wedge \Pos{\Psi}{H} \wedge \pdef_v^p(\boldY) \wedge \pdef_w^q(\boldZ)
        \big)
\end{gather*}
Since~$I$ is a model of~$\Psi$, this implies that~\eqref{eq:edge.sentence.arbitrary.formulas} is satisfiable.
Consequently, $([v],[w])$ is an edge in~$\G_{\pdefs}(\Gamma)$.
\end{proof}

\begin{lemma}\label{lem:theory.correspondence.separable}
Let~$\Gamma$ and~$\Psi$ be two theories and~$I$ be a model of~$\Psi$.
If~${\pdefs = \{ \pdef_1, \pdef_2 \}}$ is a partition of~$\pdef$ that is separable on~$\G_{\pdefs,\Psi}(\Gamma)$,
then
$\{ \At{I,\pdef_1},\, \At{I,\pdef_2} \}$ is a partition of~$\At{I,\pdef}$ and it is separable on~$\G_{I,\At{I,\pdef}}(\Gamma)$
for any interpretation~$I$.
\end{lemma}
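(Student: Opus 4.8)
The plan is to exhibit the vertex map $v \mapsto [v]$ from the ``concrete'' graph $\G_{I,\At{I,\pdef}}(\Gamma)$ to the ``abstract'' graph $\G_{\pdefs,\Psi}(\Gamma)$ as a label-preserving graph homomorphism, and then to transport infinite walks along it. The partition claim itself requires nothing new: since $\pdefs = \{\pdef_1,\pdef_2\}$ is a partition of $\pdef$, Lemma~\ref{lem:partition.correspondence} already gives that $\{\At{I,\pdef_1},\At{I,\pdef_2}\}$ is a partition of $\At{I,\pdef}$.

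For separability, I would first record the properties of the map $[\cdot]$. By Lemma~\ref{lem:theory.graph.correspondence.vertices}, every vertex $p(\boldd^*)$ of $\G_{I,\At{I,\pdef}}(\Gamma)$ lies in exactly one $\At{I,\pdef_i}$ (this also follows from the partition just mentioned), the pair $[p(\boldd^*)] = (p,\pdef_i)$ is a vertex of $\G_{\pdefs,\Psi}(\Gamma)$, and $p(\boldd^*) \in \At{I,\pdef_j}$ holds precisely when the second component of $[p(\boldd^*)]$ is $\pdef_j$. By Lemma~\ref{lem:theory.graph.correspondence.edges}, whenever $(v,w)$ is an edge of $\G_{I,\At{I,\pdef}}(\Gamma)$ the pair $([v],[w])$ is an edge of $\G_{\pdefs,\Psi}(\Gamma)$. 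Hence $[\cdot]$ carries walks to walks and respects the partition labels.

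Now take any infinite walk $v_1,v_2,\dotsc$ of $\G_{I,\At{I,\pdef}}(\Gamma)$. Applying $[\cdot]$ pointwise yields, by the edge property, an infinite walk $[v_1],[v_2],\dotsc$ of $\G_{\pdefs,\Psi}(\Gamma)$. Since $\pdefs$ is separable on $\G_{\pdefs,\Psi}(\Gamma)$, there is an index $i$ such that $\{\, l \mid [v_l] = (p,\pdef_j) \text{ for some } p \,\}$ is finite for all $j \neq i$. By the label-preserving property of $[\cdot]$, the set $\{\, l \mid v_l \in \At{I,\pdef_j}\,\}$ coincides with that set and is therefore finite for all $j \neq i$. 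This is exactly the separability condition for $\{\At{I,\pdef_1},\At{I,\pdef_2}\}$ on $\G_{I,\At{I,\pdef}}(\Gamma)$, which completes the proof.

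The main thing to get right is that $[\cdot]$ is genuinely total and well defined on the concrete graph and that it matches the partition cells to the abstract-graph labels; both are packaged into Lemma~\ref{lem:theory.graph.correspondence.vertices} (via Lemma~\ref{lem:partition.correspondence}). The only other point worth a sentence is that pushing an infinite walk forward along a possibly non-injective homomorphism still produces a bona fide infinite walk, which is immediate from Lemma~\ref{lem:theory.graph.correspondence.edges}; no new combinatorics about cycles is needed, because the separability hypotheses on both graphs are already phrased directly in terms of infinite walks.
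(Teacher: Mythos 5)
Your proposal is correct and follows essentially the same route as the paper's proof: both establish the partition claim via Lemma~\ref{lem:partition.correspondence}, then push an arbitrary infinite walk forward along the vertex map $[\cdot]$ using Lemmas~\ref{lem:theory.graph.correspondence.vertices} and~\ref{lem:theory.graph.correspondence.edges}, and transport the finiteness condition back from the separability of $\pdefs$ on $\G_{\pdefs,\Psi}(\Gamma)$. The homomorphism framing is just a cleaner packaging of the identical argument.
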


\begin{proof}
Assume that~$\pdefs$ is a partition of~$\pdef$ that is separable on~$\G_{\pdefs,\Psi}(\Gamma)$.
Then, by Lemma~\ref{lem:partition.correspondence}, we get that~$\{ \At{I,\pdef_1},\, \At{I,\pdef_2} \}$ is a partition of~$\At{I,\pdef}$.
Let us show that this partition is separable on~$\G_{I,\At{I,\pdef}}(\Gamma)$.
Pick any infinite walk~$v_1,v_2,v_3,\dots$ of~$\G_{I,\At{I,\pdef}}(\Gamma)$.
By Lemma~\ref{lem:theory.graph.correspondence.edges}, we get that~$[v_1],[v_2],[v_3],\dots$ is an infinite walk of~$\G_{\pdefs,\Psi}(\Gamma)$.
In addition, by Lemma~\ref{lem:theory.graph.correspondence.vertices}, each~$v_k \in \At{I,\pdef_i}$ satisfies~$[v_k] = (p,\pdef_i)$ for some predicate symbol~$p$.
Hence,
$$
\{ k \mid v_k \in \At{I,\pdef_i} \} \ = \ \{ k \mid [v_k] = (p,\pdef_i) \} 
$$
for~$1 \leq i \leq 2$.
Since~$\pdefs$ is separable on~$\G_{\pdefs,\Psi}(\Gamma)$, at least one of this sets if finite and, therefore,
$\{ \At{I,\pdef_1},\, \At{I,\pdef_2} \}$ is separable on~$\G_{I,\At{I,\pdef}}(\Gamma)$.%
\end{proof}

\begin{lemma}\label{lem:theory.correspondence.negative}
Let~$\Gamma$ and~$\Psi$ be two theories and~$I$ be a model of~$\Psi$.
If~$\Gamma$ is $\Psi$\nobreakdash-negative on~$\pdef$,
then~$\Gamma$ is negative on~$I$ and~$\At{I,\pdef}$.
\end{lemma}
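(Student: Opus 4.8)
The plan is to unfold both sides to their defining conditions and argue by contradiction. By definition, \emph{being negative on $I$ and $\At{I,\pdef}$} means that for every sentence $F \in \Gamma$ the sets $\Pos{I}{F}$ and $\At{I,\pdef}$ are disjoint, so I would fix an arbitrary $F \in \Gamma$ together with a hypothetical atom $p(\boldd^*) \in \Pos{I}{F} \cap \At{I,\pdef}$ and derive a contradiction with the assumption that $\Gamma$ is $\Psi$-negative on $\pdef$. From $p(\boldd^*) \in \At{I,\pdef}$ I immediately get $I \models p(\boldd^*) \wedge \pdef^p(\boldd^*)$, and from $p(\boldd^*) \in \Pos{I}{F}$ I get $I \models F$.

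The second step is to turn the semantic membership $p(\boldd^*) \in \Pos{I}{F}$ into syntactic information about $F$ by invoking Lemma~\ref{lem:pos.grounding}(1), which applies because $I$ is a model of $\Psi$. It yields a strictly positive occurrence $p(\boldt)$ of $p$ in $F$ such that $I \models \Pos{\Psi}{F}^\boldY_{\boldd^*}$, where $\boldY$ are the free variables of $\Pos{\Psi}{F}$. Next I would locate the rule of $\Gamma$ responsible for this occurrence: since the occurrence is strictly positive, every implication of $F$ containing it has it in the consequent, so taking $B \to H$ to be the innermost such implication gives a strictly positive occurrence of an implication in $F$ --- hence a rule of $\Gamma$ --- with $p(\boldt)$ a strictly positive occurrence in $H$ and no further implication on the path from $H$ down to $p(\boldt)$. (If no implication of $F$ contains the occurrence, then $F$ is, under the usual convention identifying a fact $H$ with the rule $\top \to H$, itself such a rule, so this case is subsumed.)

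The third step is the matching. Reading off the defining clauses of $\Pos{\Psi}{\cdot}$ along the path from the root of $F$ to the occurrence, conjunctions and disjunctions keep the relevant subformula, quantifiers prepend an existential, and an implication $F_1 \to F_2$ containing the occurrence in $F_2$ conjoins $F_1^\Psi$ and recurses into $F_2$; consequently $\Pos{\Psi}{F}$ is equivalent, after pulling the existential prefixes out, to a formula $\exists \boldX' ( C \wedge B^\Psi \wedge \Pos{\Psi}{H})$, where $\boldX'$ lists the path quantifier variables above $H$ (which include the free variables $\boldX$ of $B \to H$) and $C$ is the conjunction of the $F_1^\Psi$'s arising from implications strictly above $B \to H$. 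From $I \models \Pos{\Psi}{F}^\boldY_{\boldd^*}$ I then extract a valuation $\boldx$ of $\boldX$ with $I \models (B^\Psi)^\boldX_\boldx$ and $I \models (\Pos{\Psi}{H})^{\boldX\boldY}_{\boldx\boldd^*}$; since $I \models \Psi$ and $I$ satisfies $(B^\Psi)^\boldX_\boldx$, the formula $B^\Psi$ cannot be $\bot$, so $B^\Psi = B$ and $I \models B^\boldX_\boldx$. Combining this with $I \models \pdef^p(\boldd^*)$ and $I \models \Psi$ shows that $I$ satisfies $\Psi \cup \{ \exists \boldX \boldY\, (B \wedge \Pos{\Psi}{H} \wedge \pdef^p(\boldY)) \}$, contradicting the hypothesis that $\Gamma$ is $\Psi$-negative on $\pdef$. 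Hence no such $p(\boldd^*)$ exists and the lemma follows.

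Conceptually the statement is almost immediate: a strictly positive atom of $F$ that is true in $I$ cannot be intensional under $\pdef$ once $\Gamma$ is $\Psi$-negative on $\pdef$ and $I$ models the context $\Psi$. I expect the main obstacle to be the purely syntactic bookkeeping of the third step --- verifying that the recursion defining $\Pos{\Psi}{F}$ really does factor through $B^\Psi \wedge \Pos{\Psi}{H}$ of the innermost enclosing rule, and handling the interplay between the pruning operator $(\cdot)^\Psi$ and the substitution of domain-element names for variables, where one repeatedly uses $I \models \Psi$ to rule out a collapse to $\bot$. Much of this is already packaged in Lemma~\ref{lem:pos.grounding}, so in the final write-up I would lean on that lemma and only make explicit the passage from ``$p(\boldd^*) \in \Pos{I}{F}$ via a strictly positive occurrence'' to ``the $\Psi$-negativity sentence of the enclosing rule is satisfiable''.
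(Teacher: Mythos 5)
Your proposal is correct and follows essentially the same route as the paper: argue by contradiction, use Lemma~\ref{lem:pos.grounding} to pass from $p(\boldd^*)\in\Pos{I}{F}$ to satisfaction of an instance of $\Pos{\Psi}{\cdot}$, and conclude that $I$ satisfies the $\Psi$-negativity theory $\Psi\cup\{\exists\boldX\boldY\,(B\wedge\Pos{\Psi}{H}\wedge\pdef^p(\boldY))\}$. The only (immaterial) difference is that the paper first unwinds the semantic set $\Pos{I}{F}$ to locate a rule instance $B^\boldX_\boldx\to H^\boldX_\boldx$ with $I\models B^\boldX_\boldx$ and only then applies Lemma~\ref{lem:pos.grounding} to $H^\boldX_\boldx$, whereas you apply that lemma to the whole sentence and then factor $\Pos{\Psi}{F}$ syntactically through the innermost enclosing rule.
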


\begin{proof}
Assume that~$\Gamma$ is $\Psi$\nobreakdash-negative in~$\pdef$ and suppose, for the sake of contradiction, that~$\Gamma$ is not negative on~$I$ and~$\At{I,\pdef}$, that is, that there is some ground atom~$p(\boldd^*)$ that belongs to~$\Pos{I}{\Gamma} \cap \At{I,\pdef}$.
Since~$p(\boldd^*)$ that belongs to~$\At{I,\pdef}$, it follows that
\begin{gather*}
    I \models p(\boldd^*) \wedge \lambda^p(\boldd^*)
\end{gather*}
Furthermore, since~$p(\boldd^*)$ belongs to~$\Pos{I}{\Gamma}$, there is a rule~${B \to H}$ of~$\Gamma$ with free variables~$\boldX$ and a tuple~$\boldx$ of names of domain elements of the appropriate length and sorts such that~$I \models B^\boldX_\boldx$ and $p(\boldd^*)$ belongs to~$\Pos{I}{H^\boldX_\boldx}$.
By Lemma~\ref{lem:pos.grounding}, there is a strictly positive occurrence~$p(\boldt)$ of~$p$ in~$H$ such that~$I$ satisfies~$\Pos{\Psi}{H^\boldX_\boldx}^\boldY_{\boldd^*}$ where~$\boldY$ are the free variables in~$\Pos{\Psi}{H^\boldX_\boldx}$.
Hence,
\begin{gather*}
    I \models B^\boldX_\boldx \wedge \Pos{\Psi}{H^\boldX_\boldx}^\boldY_{\boldd^*} \wedge \lambda^p(\boldd^*)
\end{gather*}
which implies that
\begin{gather*}
    I \models \Psi \cup \{ \, \exists \boldX\boldY \bigl(
    B \wedge \Pos{\Psi}{H} \wedge \lambda^p(\boldY)
    \bigr)
    \}
\end{gather*}
This implies that~$\Gamma$ is $\Psi$\nobreakdash-negative, which is a contradiction with the assumption.
Consequently, $\Gamma$ is negative on~$I$ and~$\At{I,\pdef}$.
\end{proof}

\begin{lemma}\label{lem:theory.splitting}
Let~$\Gamma \!=\! \Gamma_1 \!\cup\! \text{\footnotesize\dots} \!\cup \Gamma_n$\! and~$\Psi$ be two theories, and let
and $\pdefs \!=\! \{ \pdef_1, \text{\footnotesize\dots} ,\pdef_n \}$ be a partition of~$\pdef$ such that
\begin{itemize}
    \item $\pdefs$ is separable on~$\G_{\pdefs,\Psi}(\Gamma)$; and
    \item each~$\Gamma_i$ is $\Psi$\nobreakdash-negative on~$\pdef_j$ for all~$j \neq i$.
\end{itemize}
Then, for any interpretation~$I$ that satisfies~$\Psi$,
the following two statements are equivalent
\begin{itemize}
    \item $I$ is a $\pdef$\nobreakdash-stable model of~$\Pi$, and
    \item $I$ is a $\pdef_i$-stable model of~$\Pi_i$ for all~$1\leq i\leq n$.
\end{itemize}
\end{lemma}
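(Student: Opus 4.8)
The plan is to push the whole statement down to the ground level, where Theorem~\ref{thm:splitting.local} applies, after first extending that binary theorem to an $n$\nobreakdash-component version.

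The first step is to use Proposition~\ref{prop:grounding.intensional.stable.models} to translate the claim into ground terms: $I$ is a $\pdef$\nobreakdash-stable model of~$\Gamma$ iff it is a $\At{I,\pdef}$\nobreakdash-stable model of~$\Gamma$, and (applying the proposition with $(\Gamma_i,\pdef_i)$ in place of $(\Gamma,\pdef)$) $I$ is a $\pdef_i$\nobreakdash-stable model of~$\Gamma_i$ iff it is a $\At{I,\pdef_i}$\nobreakdash-stable model of~$\Gamma_i$. So it suffices to show that $I$ is a $\At{I,\pdef}$\nobreakdash-stable model of~$\Gamma$ iff it is a $\At{I,\pdef_i}$\nobreakdash-stable model of~$\Gamma_i$ for all~$i$, and for this I would apply an $n$\nobreakdash-component version of Theorem~\ref{thm:splitting.local} to the partition $\{\At{I,\pdef_1},\dots,\At{I,\pdef_n}\}$ of~$\At{I,\pdef}$. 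Its hypotheses come from the correspondence lemmas of the preceding section, where the assumption $I\models\Psi$ is used: iterating Lemma~\ref{lem:partition.correspondence} shows that $\{\At{I,\pdef_1},\dots,\At{I,\pdef_n}\}$ is a partition of~$\At{I,\pdef}$; Lemmas~\ref{lem:theory.graph.correspondence.vertices} and~\ref{lem:theory.graph.correspondence.edges} — whose proofs never use that a partition has exactly two blocks, only that $p(\boldd^*)\mapsto[p(\boldd^*)]$ is well defined, which holds for any partition — yield, via the argument of Lemma~\ref{lem:theory.correspondence.separable}, that this partition is separable on~$\G_{I,\At{I,\pdef}}(\Gamma)$; and Lemma~\ref{lem:theory.correspondence.negative}, applied with $(\Gamma_i,\pdef_j)$ for each $j\neq i$, gives that~$\Gamma_i$ is negative on~$I$ and~$\At{I,\pdef_j}$.

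Next I would prove the $n$\nobreakdash-component ground splitting theorem by induction on~$n$: if $\Gamma=\Gamma_1\cup\dots\cup\Gamma_n$, $\{\AAA_1,\dots,\AAA_n\}$ is a partition of $\AAA\subseteq\At{I}$ separable on~$\G_{I,\AAA}(\Gamma)$, and each~$\Gamma_i$ is negative on~$\AAA_j$ for $j\neq i$, then $I$ is a $\AAA$\nobreakdash-stable model of~$\Gamma$ iff it is a $\AAA_i$\nobreakdash-stable model of~$\Gamma_i$ for all~$i$. The case $n=1$ is trivial. For $n>1$, set $\Gamma'=\Gamma_2\cup\dots\cup\Gamma_n$ and $\AAA'=\AAA_2\cup\dots\cup\AAA_n$, so that $\Gamma=\Gamma_1\cup\Gamma'$ and $\{\AAA_1,\AAA'\}$ is a partition of~$\AAA$. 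The key observation is that $\G_{I,\AAA}(\Gamma)$ does not depend on the partition, so merging blocks does not change the graph; consequently an infinite walk visiting both~$\AAA_1$ and~$\AAA'$ infinitely often would, by the pigeonhole principle, visit~$\AAA_1$ and some~$\AAA_k$ with $k\geq2$ infinitely often, contradicting separability of $\{\AAA_1,\dots,\AAA_n\}$; hence $\{\AAA_1,\AAA'\}$ is separable on~$\G_{I,\AAA}(\Gamma)$. Negativity of~$\Gamma_1$ on~$\AAA'$ and of~$\Gamma'$ on~$\AAA_1$ is immediate since $\Pos{I}{\cdot}$ and disjointness distribute over the unions defining~$\Gamma'$ and~$\AAA'$. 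So Theorem~\ref{thm:splitting.local} gives that $I$ is a $\AAA$\nobreakdash-stable model of~$\Gamma$ iff it is both a $\AAA_1$\nobreakdash-stable model of~$\Gamma_1$ and a $\AAA'$\nobreakdash-stable model of~$\Gamma'$; and the induction hypothesis applied to~$\Gamma'$ with $\{\AAA_2,\dots,\AAA_n\}$ — using that an infinite walk in the induced subgraph $\G_{I,\AAA'}(\Gamma')$ is also an infinite walk in $\G_{I,\AAA}(\Gamma)$ avoiding~$\AAA_1$, so separability is inherited, and that negativity transfers directly — gives that $I$ is a $\AAA'$\nobreakdash-stable model of~$\Gamma'$ iff it is a $\AAA_j$\nobreakdash-stable model of~$\Gamma_j$ for all $j\geq2$. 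Combining the two equivalences closes the induction, and combining this with the reductions above proves the lemma.

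I expect the main obstacle to be the $n$\nobreakdash-component ground splitting theorem, specifically verifying that separability is preserved both under the merge $\AAA'=\AAA_2\cup\dots\cup\AAA_n$ and under passing to the induced subgraph $\G_{I,\AAA'}(\Gamma')$. Both reduce to the remark that $\G_{I,\AAA}(\Gamma)$ is defined independently of the partition, after which only the infinite walks — not the graph — need to be reanalyzed, by short pigeonhole arguments. A secondary point to check is that the correspondence lemmas, although stated in the preceding section for two\nobreakdash-block partitions, are invoked here with~$n$ blocks; this is harmless because none of their proofs uses $n=2$.
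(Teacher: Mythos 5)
Your proposal is correct and follows essentially the same route as the paper: reduce to the ground level via Proposition~\ref{prop:grounding.intensional.stable.models}, transfer the partition, separability, and negativity hypotheses with the correspondence lemmas (using $I\models\Psi$), and invoke Theorem~\ref{thm:splitting.local}. The only difference is that the paper proves the case $n=2$ and merely asserts that induction handles larger~$n$, whereas you actually carry out that induction at the ground level (merging blocks and checking that separability and negativity survive the merge), which fills in a detail the paper leaves implicit.
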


\begin{proof}
Assume that~$n = 2$.
The proof for~$n>3$ follows by induction on~$n$.
Pick any model~$I$ of~$\Psi$.
Since~$\pdefs$ is a partition of~$\pdef$ that is separable on~$\G_{\pdefs,\Psi}(\Gamma)$, by Lemma~\ref{lem:theory.correspondence.separable}, it follows that $\{ \At{I,\pdef_1},\, \At{I,\pdef_2} \}$ is a partition of~$\At{I,\pdef}$ and it is separable on~$\G_{I,\At{I,\pdef}}(\Gamma)$
for any interpretation~$I$.
Similarly, since~$\Gamma_1$ is negative on~$\pdef_2$ and $\Gamma_2$ is negative on~$\pdef_1$, by Lemma~\ref{lem:theory.correspondence.negative}, it follows that~$\Gamma_1$ is negative  on~$I$ and~$\At{I,\pdef_2}$ and~$\Gamma_2$ is negative  on~$I$ and~$\At{I,\pdef_1}$.
Therefore, the conditions of Theorem~\ref{thm:splitting.local} are met and, thus, the following two statements are equivalent
\begin{enumerate}
    \item $I$ is a $\AAA$\nobreakdash-stable model of~$\Gamma$, and
    \item $I$ is a $\AAA_i$-stable model of~$\Gamma_i$ for all~$i \in \{1,2\}$.
\end{enumerate}
Finally, the result follows by Proposition~\ref{prop:grounding.intensional.stable.models}.
\end{proof}

\begin{proofof}{Theorem~\ref{thm:theory.splitting}}
    From Proposition~\ref{prop:splitting}, if~$I$ is a $\pdef$\nobreakdash-stable model of~$\Gamma$, then~$I$ is a $\pdef_i$-stable model of~$\Gamma_i$ for all~$1\leq i\leq n$.
    Since~$\Psi$ is an $\pdef$\nobreakdash-approximator of~$\Gamma$, by definition, it also follows that~$I$ is a model of~$\Psi$.
    \emph{The other direction.}
    Pick any model~$I$ of~$\Psi$ such that $I$ a~$\pdef_i$-stable model of~$\Gamma_i$ for all~$1\leq i\leq n$.
    From Lemma~\ref{lem:theory.splitting}, this implies that~$I$ is a $\pdef$\nobreakdash-stable model of~$\Gamma$
\end{proofof}

\section{Splitting Theorem for Disjuntive Logic Programs}

\begin{lemma}\label{lem:SPos.entails.F}
    Let~$F$ be a sentence and~$p(\boldt)$ be a strictly positive occurrence in~$F$.
    Then, $\Pos{\emptyset}{F} \models F \wedge p(\boldt)$.
\end{lemma}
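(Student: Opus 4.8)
The plan is to argue by structural induction on~$F$. The guiding observation is that $\Pos{\emptyset}{F}$ is a syntactic strengthening of~$F$: it conjoins the equality $\boldY=\boldt$ at the distinguished occurrence, replaces each disjunction by whichever disjunct carries that occurrence, and leaves every implication antecedent intact, except that it collapses to~$\bot$ as soon as a relevant subformula is unsatisfiable. Before the induction I would record two elementary consequences of the definition of $\Pos{\emptyset}{\cdot}$: first, if the existential closure of~$F$ is unsatisfiable then $\Pos{\emptyset}{F}=\bot$ and the claim is trivial; second, if a formula~$G$ does not contain the distinguished occurrence then $\Pos{\emptyset}{G}$ equals either~$G$ or~$\bot$, so $\Pos{\emptyset}{G}\models G$ in any case. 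By the first remark I may assume, in each inductive step, that $\Pos{\emptyset}{F}$ is produced by the corresponding clause of the ``otherwise'' part of the definition. Finally, the sentences to which this lemma is applied are rules of disjunctive programs and their instances (in particular their heads), which are built from atoms using only $\wedge$, $\vee$, $\to$ and~$\neg$; since the distinguished occurrence is strictly positive it never lies in the antecedent of an implication, hence never inside a subformula $\neg G=G\to\bot$, so an induction over $\wedge$, $\vee$ and $\to$ with the atomic base case suffices.

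The individual cases would run as follows. \emph{Atom.} As $p(\boldt)$ occurs in the atomic formula~$F$ we have $F=p(\boldt)$, and $\Pos{\emptyset}{F}=p(\boldt)\wedge\boldY=\boldt$ entails $p(\boldt)$, that is, $F\wedge p(\boldt)$. \emph{Conjunction} $F=F_1\wedge F_2$, with the occurrence, say, in~$F_1$: then $\Pos{\emptyset}{F}=\Pos{\emptyset}{F_1}\wedge\Pos{\emptyset}{F_2}$; the induction hypothesis gives $\Pos{\emptyset}{F_1}\models F_1\wedge p(\boldt)$, the second preliminary remark gives $\Pos{\emptyset}{F_2}\models F_2$, and together they give $\Pos{\emptyset}{F}\models(F_1\wedge F_2)\wedge p(\boldt)$. \emph{Disjunction} $F=F_1\vee F_2$, with the occurrence in~$F_i$: then $\Pos{\emptyset}{F}=\Pos{\emptyset}{F_i}$, which by the induction hypothesis entails $F_i\wedge p(\boldt)$ and hence $F\wedge p(\boldt)$, since $F_i\models F_1\vee F_2$. \emph{Implication} $F=F_1\to F_2$: strict positivity places the occurrence in~$F_2$, and $\Pos{\emptyset}{F}=F_1^{\emptyset}\wedge\Pos{\emptyset}{F_2}$; if the existential closure of~$F_1$ is unsatisfiable then $F_1^{\emptyset}=\bot$ and the claim is trivial, and otherwise $F_1^{\emptyset}=F_1$, so the induction hypothesis $\Pos{\emptyset}{F_2}\models F_2\wedge p(\boldt)$ yields $\Pos{\emptyset}{F}\models F_1\wedge F_2\wedge p(\boldt)$, which entails $(F_1\to F_2)\wedge p(\boldt)$ because $F_2\models F_1\to F_2$.

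Each step is short and amounts to unfolding the definition of $\Pos{\emptyset}{\cdot}$ and applying the two elementary remarks. The only place that calls for a little care is the implication case, where one must invoke strict positivity both to confine the distinguished occurrence to the consequent and to conclude that the antecedent is left untouched by the construction (so that $F_1^{\emptyset}$ is either $F_1$ or~$\bot$). I do not foresee a real obstacle.
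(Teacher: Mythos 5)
Your induction matches the paper's proof step for step on atoms, conjunctions, disjunctions and implications; indeed your explicit preliminary remark that $\Pos{\emptyset}{G}\models G$ whenever $G$ does not contain the distinguished occurrence (because $\Pos{\emptyset}{G}$ is then $G$ or $\bot$) makes the conjunction case slightly cleaner than the paper's, which silently identifies $\Pos{\emptyset}{F_j}$ with $F_j$. The one substantive divergence is that you drop the quantifier cases. The lemma is stated for an arbitrary sentence $F$, and the paper's proof does include cases for $\forall X\,G(X)$ and $\exists X\,G(X)$, using $\Pos{\emptyset}{\forall X\,G}=\Pos{\emptyset}{\exists X\,G}=\exists X\,\Pos{\emptyset}{G}$ together with the induction hypothesis applied to instances $G^X_{x}$. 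Your justification for the omission --- that the lemma is only ever invoked on heads and bodies of disjunctive rules, which are quantifier-free --- is accurate about how the lemma is used downstream (in the lemmas on $\Pnn{\emptyset}{\cdot}$ and on programs), but it means you have proved a strictly weaker statement than the one asserted. As written, your argument establishes the claim only for quantifier-free $F$.

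If you do add the quantifier cases, note that this is where the real care is needed, not the implication case: for $F=\forall X\,G(X)$ the construction yields the \emph{existential} formula $\exists X\,\Pos{\emptyset}{G}$, and deriving $\forall X\,G(X)$ from a single witness of $\Pos{\emptyset}{G}$ is not a one-line consequence of the induction hypothesis (the paper's own treatment of this case is itself very terse). So either restate the lemma explicitly for the quantifier-free fragment you actually need, or work out the quantifier step in detail rather than asserting that the propositional cases suffice.
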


\begin{proof}
    By induction in the size of the formula.
    \emph{Case~1.} If~$F$ is an atom~$p(\boldt)$, then~$\Pos{\emptyset}{F} = p(\boldt) \wedge \boldY = \boldt$ and the result holds.
    \emph{Case~2.}
    If~${F = F_1 \wedge F_2}$, then~$\Pos{\emptyset}{F} = \Pos{\emptyset}{F_1} \wedge \Pos{\emptyset}{F_2} = \Pos{\emptyset}{F_i} \wedge F_j$ with~$F_i$ be subformula containing the occurrence~$p(\boldt)$ and~$j = 3 - i$.
    By induction hypothesis, we get that
    $\Pos{\emptyset}{F_i} \models F_i \wedge p(\boldt)$.
    Therefore, the result holds.
    \emph{Case~3.}
    If~${F = F_1 \vee F_2}$, then~$\Pos{\emptyset}{F} = \Pos{\emptyset}{F_i}$ with~$F_i$ containing the occurrence.
    By induction hypothesis, we get that
    $\Pos{\emptyset}{F_i} \models F_i \wedge p(\boldt)$.
    Therefore, the result holds.
    \emph{Cases~4 and~5.}
    If~$F = \forall X G$ or~$F = \exists X G$, then~$\Pos{\emptyset}{F} = \exists X \Pos{\emptyset}{G}$.
    By induction hypothesis, we get that
    $\Pos{\emptyset}{G^X_x} \models G^X_x \wedge p(\boldt)$.
    Therefore, the result holds.
    \emph{Case~6.}
    If~$F = F_1 \to F_2$, then
    \begin{gather}
    \Pos{\emptyset}{F} = F_1^\emptyset \wedge \Pos{\emptyset}{F_2}
        \label{eq:1:lem:SPos.entails.F}
    \end{gather}
    If~$F_1$ is unsatisfiable, then~\eqref{eq:1:lem:SPos.entails.F}~$F_1^\emptyset$ is~$\bot$ and, thus, $\Pos{\emptyset}{F}$ is equivalent to~$\bot$.
    Therefore, the result holds.
    Otherwise,~$F_1^\emptyset$ is~$F_1$ and, thus, \eqref{eq:1:lem:SPos.entails.F} is~$F_1 \wedge \Pos{\emptyset}{F_2}$.
    By induction hypothesis, we get that
    $\Pos{\emptyset}{F_2} \models F_2 \wedge p(\boldt)$.
    As a result, we can see that
    $\Pos{\emptyset}{F} \models F_1 \wedge F_2 \wedge p(\boldt) \models F \wedge p(\boldt)$
    and the result holds.
\end{proof}

\begin{lemma}\label{lem:Pnn.entails.F}
    Let~$F$ be a implication\nobreakdash-free sentence and~$p(\boldt)$ be a positive nonnegated occurrence in~$F$.
    Then, $\Pnn{\emptyset}{F} \models F \wedge p(\boldt)$.
\end{lemma}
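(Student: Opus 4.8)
The plan is to proceed exactly as in the proof of Lemma~\ref{lem:SPos.entails.F}, by structural induction on~$F$, exploiting the fact that $F$ being implication-free collapses the only place where the construction of $\Pnn{\emptyset}{\cdot}$ differs from that of $\Pos{\emptyset}{\cdot}$.

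First I would record two elementary observations about an implication-free sentence~$F$. Since negation abbreviates $F' \to \bot$, such an~$F$ contains no negation either; hence every occurrence of a subexpression in~$F$ has zero implications in its antecedents and lies in no subformula of the form~$\neg F'$. In particular, a positive nonnegated occurrence and a strictly positive occurrence coincide here, so the hypothesis on $p(\boldt)$ matches the hypothesis of Lemma~\ref{lem:SPos.entails.F}. Second, a straightforward induction on~$F$ shows that, for implication-free~$F$ and a fixed distinguished occurrence, $\Pnn{\emptyset}{F}$ and $\Pos{\emptyset}{F}$ are literally the same formula: the two guard clauses (for $\exists \boldX\,F$ unsatisfiable, and for $F$ not containing the distinguished occurrence) together with the clauses for atoms, $\wedge$, $\vee$, $\forall$, and $\exists$ are written identically in the two definitions, and the implication clause\,---\,the sole point of divergence, and the only clause referring to the mutually recursive $\Nnn{\emptyset}{\cdot}$\,---\,is never triggered.

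Given these observations, the lemma follows immediately: $\Pnn{\emptyset}{F} = \Pos{\emptyset}{F} \models F \wedge p(\boldt)$ by Lemma~\ref{lem:SPos.entails.F}. If a self-contained argument is preferred instead, one repeats the induction of Lemma~\ref{lem:SPos.entails.F} almost verbatim, handling the base case ($\Pnn{\emptyset}{p(\boldt)} = p(\boldt) \wedge \boldY = \boldt \models p(\boldt)$), the conjunction case ($\Pnn{\emptyset}{F_1 \wedge F_2}$ factors as $\Pnn{\emptyset}{F_i} \wedge F_j$ with $F_i$ containing the occurrence, using that satisfiability of $F$ gives satisfiability of $F_j$), the disjunction case, and the two quantifier cases exactly as there, and simply observing that no implication case arises.

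The main obstacle\,---\,such as it is\,---\,is purely bookkeeping: one must justify carefully that being implication-free is precisely the condition that severs the mutual recursion between $\Pnn{\emptyset}{\cdot}$ and $\Nnn{\emptyset}{\cdot}$, and one must not forget the two guard clauses (the unsatisfiable case, for which $\Pnn{\emptyset}{F} = \bot \models F \wedge p(\boldt)$ trivially, and a subformula not containing the distinguished occurrence, for which $\Pnn{\emptyset}{\cdot}$ returns that subformula unchanged). Everything else is identical to the already-established Lemma~\ref{lem:SPos.entails.F}.
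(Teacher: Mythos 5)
Your proposal is correct and takes essentially the same route as the paper, which argues in one line that for implication-free~$F$ the constructions of $\Pnn{\emptyset}{F}$ and $\Pos{\emptyset}{F}$ coincide (and a positive nonnegated occurrence is strictly positive), so the claim reduces to Lemma~\ref{lem:SPos.entails.F}. Your write-up is in fact more careful than the paper's, which contains a typo ($\Pnn{\emptyset}{F}=\Pnn{\emptyset}{F}$ where $\Pos{\emptyset}{F}$ is clearly intended) and leaves the identification of the two constructions implicit.
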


\begin{proof}
    If~$F$ is implication\nobreakdash-free, then~$\Pnn{\emptyset}{F}= \Pnn{\emptyset}{F}$ and the result follows by Lemma~\ref{lem:SPos.entails.F}.
\end{proof}

\begin{lemma}\label{lem:positive.dependency.graph.for.programs}
    Let~$\Pi$ be a logic program and~$\pdefs$ be a partition of some intensionality statement.
    Then, $\G_{\pdefs,\emptyset}(\Pi)$ is a subgraph of $\G_{\pdefs}(\Pi)$.
\end{lemma}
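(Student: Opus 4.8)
\noindent\emph{Proof proposal.}
The plan is to check the two halves of ``subgraph'' separately: that every vertex of $\G_{\pdefs,\emptyset}(\Pi)$ is a vertex of $\G_{\pdefs}(\Pi)$, and that every edge of the former is an edge of the latter. The vertex inclusion is essentially definitional: with $\Psi=\emptyset$ the requirement that $\emptyset\cup\{\exists\boldX\,\pdef_i^p(\boldX)\}$ be satisfiable is exactly the requirement that $\exists\boldX\,\pdef_i^p(\boldX)$ be satisfiable, which is the vertex condition of $\G_{\pdefs}(\Pi)$; and a predicate symbol appearing in a rule of $\Pi$ is a predicate symbol occurring in $\Pi$ (this presumes, as is customary, that the ambient signature is the signature of $\Pi$; otherwise the claim should be read as being about the subgraph induced by the predicates of $\Pi$). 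So the real content is in the edges.

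Next I would fix an edge from $(p,\pdef_i)$ to $(q,\pdef_j)$ of $\G_{\pdefs,\emptyset}(\Pi)$, witnessed by a rule $B\to H$ of $\Pi$, a strictly positive occurrence $p(\boldt)$ in $H$, a positive nonnegated occurrence $q(\boldr)$ in $B$, and satisfiability of the instance of~\eqref{eq:edge.sentence.arbitrary.formulas} with $\Psi=\emptyset$, namely $\exists\boldX\boldY\boldZ\big(\Pnn{\emptyset}{B}\wedge\Pos{\emptyset}{H}\wedge\pdef_j^q(\boldY)\wedge\pdef_i^p(\boldZ)\big)$. Since $\Pi$ is disjunctive, $H$ is a disjunction of atomic formulas and $B$ a conjunction of literals; hence the strictly positive occurrence $p(\boldt)$ is simply a head atom (so ``$p(\boldt)$ occurs in $H$'') and the positive nonnegated occurrence $q(\boldr)$ is simply a positive body literal (so ``there is a nonnegated occurrence of $q(\boldr)$ in $B$''). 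Thus the same rule already satisfies the first two requirements for an edge of $\G_{\pdefs}(\Pi)$, and it remains only to reconcile the satisfiability clauses.

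For that, I would unfold $\Pos{\emptyset}{H}$ and $\Pnn{\emptyset}{B}$ on a disjunctive rule. If $B$ or $H$ were unsatisfiable the corresponding formula would be $\bot$ and the witnessing sentence could not hold, so both are satisfiable; then running the recursion, using on the sibling disjuncts of $H$ and sibling conjuncts of $B$ the clause that returns a formula unchanged when the distinguished occurrence is absent from it, gives $\Pos{\emptyset}{H}=p(\boldt)\wedge\boldZ=\boldt$ and $\Pnn{\emptyset}{B}\equiv B\wedge\boldY=\boldr$ (the disjunctive specialization of Lemmas~\ref{lem:SPos.entails.F} and~\ref{lem:Pnn.entails.F}, spelled out so as to retain the auxiliary equalities). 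Substituting these into the witnessing sentence and eliminating the existentially quantified $\boldY,\boldZ$ through $\boldY=\boldr$ and $\boldZ=\boldt$ (substitutivity of equals inside $\pdef_j^q$ and $\pdef_i^p$) shows it to be logically equivalent to $\exists\boldX\big(B\wedge p(\boldt)\wedge\pdef_i^p(\boldt)\wedge\pdef_j^q(\boldr)\big)$, i.e. exactly sentence~\eqref{eq:edge.sentence} for this rule and these occurrences; hence that sentence is satisfiable and $\G_{\pdefs}(\Pi)$ contains the edge $(p,\pdef_i)\to(q,\pdef_j)$. (The argument in fact shows the two graphs have the same edges, but only ``subgraph'' is claimed and needed.) I expect the one genuine difficulty to be bookkeeping rather than ideas: lining up the recursive notions of strictly positive / positive nonnegated occurrence and the formulas $\Pos{\emptyset}{\cdot},\Pnn{\emptyset}{\cdot}$ (with their fresh tuples $\boldY,\boldZ$) against the flat head/body shape of disjunctive rules, performing the substitution-of-equals step carefully, and handling the degenerate cases where a body or head is unsatisfiable, together with the minor vertex-definition mismatch noted above.
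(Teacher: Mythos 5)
Your proposal is correct and follows essentially the same route as the paper's proof: check vertices definitionally, then for edges use the disjunctive shape of $H$ and $B$ to compute $\Pos{\emptyset}{H}=p(\boldt)\wedge\boldt=\boldZ$ and reduce the satisfiability of the $\Psi=\emptyset$ instance of~\eqref{eq:edge.sentence.arbitrary.formulas} to that of~\eqref{eq:edge.sentence}. You are in fact slightly more careful than the paper in spelling out the elimination of $\boldY,\boldZ$ via the equality conjuncts and in flagging the harmless vertex-set mismatch (predicates not occurring in $\Pi$), both of which the paper's proof glosses over.
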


\begin{proof}
    Pick any vertex~$(p,\pdef)$ be a of~$\G_{\pdefs,\emptyset}(\Pi)$.
    Then, $\exists \boldX \pdef^p(\boldX)$ is satisfiable and, thus, $(p,\pdef)$ is a vertex of~$\G_{\pdefs}(\Pi)$.
    Pick an edge from~$(p,\pdef_i)$ to~$(q,\pdef_j)$ in~$\G_{\pdefs,\emptyset}(\Pi)$.
    Then, there is some rule~$B \to H$ in~$\Pi$, a strictly positive occurrence of~$p$ in~$H$ of the form~$p(\boldt)$,
    a positive nonnegated occurrence of~$q$ in~$B$ of the form~$q(\boldr)$ and
    \begin{gather}
      \Psi \cup \big\{ \,  \exists \boldX\boldY\boldZ\, \big( \Pnn{\Psi}{B} \wedge \Pos{\Psi}{H} \wedge \pdef_j^q(\boldY) \wedge \pdef_i^p(\boldZ) \big) \, \big\}
    \end{gather}
    is satisfiable, where~$\boldX$ are the free variables in~${B \to H}$,
    and $\boldY$ and $\boldZ$ respectively are the free variables
    in formulas~$\Pnn{\Psi}{B}$ and~$\Pos{\Psi}{H}$ that are not in~$\boldX$.
    Note that, since~$\Pi$ is a program, it follows that~$H$ is a disjunction of the form~${H_1 \vee \dotsc \vee H_n}$ with~$p(\boldt) = H_i$ for some~$i \in \{1,\dotsc,n\}$.
    By construction, this implies that~$\Pos{\Psi}{H}$ is~${p(\boldt) \wedge \boldt = \boldY}$.
    Similarly, we have that~$B$ is a conjunction of the form~${B_1 \wedge \dotsc \wedge B_m}$ with~$q(\boldr) = B_j$ for some~$j \in \{1,\dotsc,m\}$.
    Therefore,
    there is a interpretation~$I$ that satisfies
    \begin{gather*}
    \exists \boldX\, \bigl( 
        B
        \wedge p(\boldt)  \wedge \pdef_j^q(\boldt) \wedge \pdef_i^p(\boldr)
    \bigr) 
    \end{gather*}
    Hence, there is also an edge from~$(p,\pdef_i)$ to~$(q,\pdef_j)$ in~$\G_{\pdefs}(\Pi)$.
\end{proof}

\begin{lemma}\label{lem:negative.for.programs}
    Let~$\Pi$ be a logic program and~$\pdef$ be a intensionality statement.
    If~$\Pi$ is negative on~$\pdef$, then it is~$\emptyset$\nobreakdash-negative on~$\pdef$.
\end{lemma}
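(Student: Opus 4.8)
The plan is to unwind both definitions and observe that, for disjunctive programs, the requirement of being $\emptyset$-negative collapses to the requirement of being negative. So the whole argument is a definition chase, with the only real content being a closed form for $\Pos{\emptyset}{H}$ when $H$ is a disjunctive rule head.

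First I would note that, when $\Pi$ is read as a theory of sentences, its \emph{rules} (the strictly positive occurrences of implications) are exactly its disjunctive rules $B \to H$: the head $H$ is a disjunction of atoms and contains no implication, whereas every implication occurring inside $B$ --- in particular those produced by occurrences of $\mathit{not}$ --- lies in the antecedent of $B \to H$ and is therefore not strictly positive. Likewise, since $H$ is a disjunction of atoms, the strictly positive occurrences of an atom $p(\boldt)$ in $H$ are precisely the occurrences of $p(\boldt)$ among the disjuncts of $H$, i.e.\ the atoms $p(\boldt)$ occurring in $H$. Hence the two notions of negativity range over the same rules and the same head atoms, and it suffices to compare their unsatisfiability conditions one rule (and one head occurrence) at a time.

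Next I would compute $\Pos{\emptyset}{H}$ for such a head $H$ and distinguished occurrence $p(\boldt)$. Because $H$ contains the predicate atom $p(\boldt)$, the sentence $\exists\boldX\,H$ is satisfiable, so the degenerate $\bot$-clause in the definition of $\Pos{\emptyset}{\cdot}$ does not apply; iterating the disjunction clause down to the distinguished disjunct (which is the atom $p(\boldt)$ itself) and then applying the atom clause yields $\Pos{\emptyset}{H} = p(\boldt) \wedge \boldY = \boldt$, where $\boldY$ is the fresh variable tuple associated with the occurrence. This is the same computation already carried out in the proof of Lemma~\ref{lem:positive.dependency.graph.for.programs}, which I would simply cite.

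Finally I would substitute this into the definition of $\emptyset$-negativity: the theory $\{\exists\boldX\boldY\,(B \wedge \Pos{\emptyset}{H} \wedge \pdef^p(\boldY))\}$ becomes $\{\exists\boldX\boldY\,(B \wedge p(\boldt) \wedge \boldY = \boldt \wedge \pdef^p(\boldY))\}$, and eliminating the existential over $\boldY$ under the constraint $\boldY = \boldt$ merely substitutes $\boldt$ into $\pdef^p$, so this theory is equisatisfiable with $\{\exists\boldX\,(B \wedge p(\boldt) \wedge \pdef^p(\boldt))\}$. The hypothesis that $\Pi$ is negative on $\pdef$ is exactly the assertion that this last sentence is unsatisfiable for every rule $B \to H$ of $\Pi$ and every head atom $p(\boldt)$; hence the displayed theory is unsatisfiable for every rule and every strictly positive head occurrence, i.e.\ $\Pi$ is $\emptyset$-negative on $\pdef$. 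The only mildly delicate point --- and the closest thing to an obstacle --- is keeping track of the free variables and of the fresh tuple $\boldY$ across the identification of $\Pos{\emptyset}{H}$ and the simplification of the existential; beyond that there is no deep step.
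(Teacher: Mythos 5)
Your proposal is correct and follows essentially the same route as the paper's proof: both reduce to the observation that for a disjunctive head $H$ the formula $\Pos{\emptyset}{H}$ is $p(\boldt) \wedge \boldY = \boldt$, so that the $\emptyset$-negativity condition collapses (by eliminating the existential over $\boldY$) to exactly the satisfiability of $\exists\boldX\,(B \wedge p(\boldt) \wedge \pdef^p(\boldt))$, which negativity on $\pdef$ rules out. The only cosmetic difference is that the paper phrases this as a proof by contradiction while you argue the equisatisfiability directly.
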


\begin{proof}
    Pick any rule~$B \to H$ in~$\Pi$ and any strictly positive occurrence~$p(\boldt)$ in~$H$.
    Suppose, for the sake of contradiction, that~$\exists \boldX \boldY\, (B \wedge \SPos{H} \wedge \pdef^p(\boldY))$ is satisfiable.
    Since~$\Pi$ is a program, it follows that~$H$ is a disjunction of the form~$H_1 \vee \dotsc \vee H_n$ with~$p(\boldt) = H_i$ for some~$i \in \{1,\dotsc,n\}$.
    By construction, this implies that~$\Pos{\Psi}{H}$ is~${p(\boldt) \wedge \boldt = \boldY}$.
    Hence,
    there is a interpretation~$I$ that satisfies~$\exists \boldX\, \bigl(B
        \wedge p(\boldt)  \wedge \pdef^p(\boldt)
    \bigr)$.
    However, since~$\Pi$ is negative on~$\pdef$, it follows that~${\exists \boldX\, (B \wedge p(\boldt) \wedge \pdef^p(\boldt))}$ is unsatisfiable.
    Therefore, $\exists \boldX \boldY\, (B \wedge \SPos{H} \wedge \pdef^p(\boldY))$ is unsatisfiable and~$\Pi$ is~$\emptyset$\nobreakdash-negative on~$\pdef$.
\end{proof}

\begin{proofof}{Theorem~\ref{thm:splitting}}
Since~$\Pi$ is a program, by Lemmas~\ref{lem:positive.dependency.graph.for.programs} and~\ref{lem:negative.for.programs}, we get that
\begin{itemize}
    \item $\pdefs$ is separable on~$\G_{\pdefs,\empty}(\Pi)$; and
    \item each~$\Pi_i$ is negative on~$\pdef_j$ under context~$\Psi$  for all~$j \neq i$.
\end{itemize}
The result follows directly from Theorem~\ref{thm:theory.splitting} taking the empty context.
\end{proofof}


\end{document}